\DeclareMathOperator*{\argmax}{arg\,max}
\newcommand{\ve}[1]{#1}
\newcolumntype{C}[1]{>{\centering}m{#1}}
\begin{document}

\title{A Survey on Contextual Multi-armed Bandits}

\author{\name Li Zhou \email lizhou@cs.cmu.edu \\
       \addr Computer Science Department\\
       Carnegie Mellon University\\
       5000 Forbes Avenue Pittsburgh, PA 15213, US
       }
\editor{}
\maketitle
\tableofcontents
\newpage

\section{Introduction}

\begin{table}[]
\hspace{-1.0cm}
\centering
\label{table:ruu}
\renewcommand{\arraystretch}{2}
\begin{tabular}{C{2.5cm}C{3.8cm}C{3.8cm}}
\cline{2-3}
\multicolumn{1}{C{2.5cm}|}{Learn model of outcomes} & \multicolumn{1}{C{3.8cm}|}{Multi-armed bandits} & \multicolumn{1}{C{3.8cm}|}{Reinforcement Learning} \\ \cline{2-3} 
\multicolumn{1}{C{2.5cm}|}{Given model of stochastic outcomes} & \multicolumn{1}{C{3.8cm}|}{Decision theory}     & \multicolumn{1}{C{3.8cm}|}{Markov Decision Process} \\ \cline{2-3} 
                                                        & Actions don't change state of the world  & Actions change state of the world           
\end{tabular}
\caption[Caption for LOF]{Four scenarios when reasoning under uncertainty.\footnotemark[1]}
\end{table}
In a decision making process, agents make decisions based on observations of
the world.
Table \ref{table:ruu}\footnotetext[1]{Table from CMU Graduate AI course
  slides. \url{http://www.cs.cmu.edu/~15780/lec/10-Prob-start-mdp.pdf}}
describes four scenarios when making decisions under uncertainty. In a
multi-armed bandits problem, the model of outcomes is unknown, and the outcomes can
be stochastic or adversarial; Besides, actions taken won't change the state of
the world.

In this survey we focus on multi-armed bandits. In this problem the agent needs to make a
sequence of decisions in time $1, 2, ..., T$. At each time $t$ the agent is given a
set of $K$ arms, and it has to decide which arm to pull. After pulling
an arm, it receives a reward of that arm, and the rewards of other arms are
unknown.
In a stochastic setting the reward of an arm is sampled from some unknown
distribution, and in an adversarial setting the reward of an arm is chosen by
an adversary and is not necessarily sampled from any distribution.
Particularly, in this survey we are interested in the situation where we observe side information at
each time $t$. We call this side information the context.
The arm that has the highest expected reward may be different given different contexts. This
variant of multi-armed bandits is called contextual bandits. 

Usually in a contextual bandits problem there is a set of policies, and each
policy maps a context to an arm. There can be infinite number of policies,
especially when reducing bandits to classification problems. We define the
regret of the agent as the gap between the highest expected cumulative reward
any policy can achieve and the cumulative reward the agent actually get. The
goal of the agent is to minimize the regret.
Contextual bandits can naturally model many problems. For example, in a news
personalization system, we can treat each news articles as an arm, and the
features of both articles and users as contexts. The agent then picks
articles for each user to maximize click-through rate or dwell time.

There are a lot of bandits algorithms, and it is always important to know what
they are competing with. For example, in K-armed bandits, the agents are
competing with the arm that has the highest expected reward; and in 
contextual bandits with expert advice, the agents are competing with the expert that has
the highest expected reward; and when we reduce contextual bandits to 
classification/regression problems, the agents are competing with the best
policy in a pre-defined policy set.

As a overview, we summarize all the algorithms we will talk about in Table
\ref{table:algocompare}. 
In this table, $C$ is the number of distinct contexts, $N$ is the number of policies, $K$ is the number of arms, and $d$ is the dimension of contexts. Note
that the second last column shows if the algorithm requires the knowledge of
$T$, and it
doesn't necessary mean that the algorithm requires the knowledge of $T$ to run,
but means that to achieve the proposed regret the knowledge of $T$ is
required.
\newpage
\begin{landscape}
\begin{table}[]
\centering
\caption{A comparison between all the contextual bandits algorithm we will talk about}
\label{table:algocompare}
\renewcommand{\arraystretch}{2}
\begin{tabular}{|l|l|l|l|l|l|l|}
\hline
Algorithm & Regret & \pbox{20cm}{With Hight\\Probability} & \pbox{20cm}{Can Have\\Infinite Policies} & \pbox{20cm}{Need to \\ know T} & \pbox{20cm}{adversarial\\ reward} \\ \hline
Reduce to MAB                              & $O\left(\sqrt{TCK\ln K}\right)$ or
                                             $O\left(\sqrt{TN \ln N}\right)$ & no &   no     & yes & yes             \\ \hline
EXP4                                       & $O\left(\sqrt{TK\ln N}\right)$  & no &   no       & yes & yes           \\ \hline
EXP4.P                                     & $O\left(\sqrt{TK\ln (N/\delta)}\right)$  & yes&   no  & yes & yes\\ \hline
LinUCB                                     & $O\left(d\sqrt{T\ln((1+T)/\delta)}\right)$ & yes  &            yes  & yes & no      \\ \hline
SupLinUCB                                     & $O\left(\sqrt{Td\ln^3(KT\ln T/\delta)}\right)$ & yes  &            yes  & yes & no      \\ \hline
SupLinREL                                     & $O\left( \sqrt{Td}(1+\ln (2KT\ln T / \delta))^{3/2} \right)$ & yes  &            yes  & yes & no      \\ \hline
GP-UCB                                     & $\tilde{O}\left(\sqrt{T}\left(
                                             B\sqrt{\gamma_T}+\gamma_T \right) \right)$ & yes  &            yes  & yes & no      \\ \hline
KernelUCB                                     & $\tilde{O}(\sqrt{B\tilde{d}T})$ & yes  &            yes  & yes & no      \\ \hline
Epoch-Greedy                               & $O\left((K\ln(N/\delta))^{1/3}T^{2/3}\right)$ & yes& yes    & no & no \\ \hline
Randomized UCB                             & $O\left(\sqrt{TK\ln(N/\delta)}\right)$ & yes & yes   &no & no  \\ \hline
ILOVETOCONBANDITS                             &  $O\left(\sqrt{TK\ln(N/\delta)}\right)$ & yes  & yes   & no & no     \\ \hline
\pbox{10cm}{Thompson Sampling\\with Linear Regression}   & $O\left(\frac{d^2}{\epsilon}\sqrt{T^{1+\epsilon}} (\ln(Td)\ln\frac{1}{\delta})\right)$ & yes &  yes & no & no                    \\ \hline
\end{tabular}
\end{table}
\end{landscape}

\section{Unbiased Reward Estimator}
\label{sec:unbiased_estimator_trick}
One challenge of bandits problems is that we only observe partial
feedback. 
Suppose at time $t$ the algorithm randomly selects an arm $a_t$ based on a probability
vector $\ve{p_t}$.
Denote the true reward vector by $\ve{r_t} \in [0, 1]^K$ and
the reward vector we observed by $\ve{r'_t} \in [0, 1]^K$,
then all the elements in $\ve{r'_t}$ are zero except $r'_{t, a_t}$ which is equal
to $r_{t, a_t}$. Then $\ve{r'_t}$ is certainly not a
unbiased estimator of $\ve{r_t}$ because $\mathrm{E}(r'_{t, a_t}) = p_{a_t} \cdot
r_{t, a_t} \ne r_{t, a_t}$. A common trick to this is to use $\hat{r}_{t, a_t} = r'_{t, a_t}/p_{a_t}$
instead of $r'_{a_t}$. In this way we get a unbiased estimator of the true reward
vector $\ve{r_t}$: for any arm $a$ 
\begin{align*}
\mathrm{E} (\hat{r}_{t, a}) &= p_{a} \cdot r_{t, a}/p_{a} + (1-p_a) * 0 \\
&=r_{t, a}
\end{align*}
The expectation is with respect to the random choice of arms at time
$t$. This trick is used by many algorithms described later.

\section{Reduce to K-Armed Bandits}
If it is possible to enumerate all the contexts, then one naive
way is to apply a K-armed bandits algorithm to each
context. However, in this way we ignore all the relationships between contexts since
we treat them independently.

Suppose there are $C$ distinct contexts in the context set $\mathcal{X}$, and the context
at time $t$ is $x_t \in \{1, 2, ..., C\}$. Also assume there are $K$ arms in the
arm set $\mathcal{A}$ and the arm selected at time $t$ is $a_t \in \{1, 2, ..., K\}$. Define the
policy set to be all the possible mappings from contexts to arms as $\Pi = \{f:
\mathcal{X} \rightarrow \mathcal{A}\}$, then the regret of the agent is defined
as: 
\begin{align}
R_T = \sup_{f\in \Pi} \mathrm{E}\left[ \sum_{t=1}^T(r_{t, f(x_t)} - r_{t,
  a_t}) \right] \label{eq:sec3regret}
\end{align}
\begin{theorem}
Apply EXP3 \citep{auer2002nonstochastic}, a non-contextual multi-armed bandits
algorithm, on each context, then the regret is
\begin{align*}
R_T \le 2.63\sqrt{TCK\ln K}
\end{align*}
\end{theorem}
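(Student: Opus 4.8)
The plan is to reduce the contextual regret to a sum of per-context non-contextual regrets, then apply the known EXP3 regret bound to each piece and combine with Cauchy--Schwarz. First I would partition the time horizon by context: for each context $c \in \{1,\dots,C\}$, let $T_c = |\{t : x_t = c\}|$ be the number of rounds in which context $c$ appears, so that $\sum_{c=1}^C T_c = T$. Since a separate copy of EXP3 is run on each context, and since the supremum over $f \in \Pi$ in \eqref{eq:sec3regret} decomposes --- a mapping $f$ can independently pick the best arm for each context --- the total regret splits as
\begin{align*}
R_T = \sum_{c=1}^C \mathrm{E}\left[ \sum_{t: x_t = c} (r_{t, f^*(x_t)} - r_{t, a_t}) \right],
\end{align*}
where $f^*$ achieves the supremum, and the inner term is exactly the regret incurred by the $c$-th copy of EXP3 over the $T_c$ rounds it is active.

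Next I would invoke the standard EXP3 guarantee from \citet{auer2002nonstochastic}: run on a $K$-armed problem for $m$ rounds, EXP3 (with its optimal tuning, which requires knowing the horizon $m$) satisfies a regret bound of the form $2\sqrt{e-1}\,\sqrt{mK\ln K} \le 2.63\sqrt{mK\ln K}$. Applying this to each context with $m = T_c$ gives
\begin{align*}
R_T \le \sum_{c=1}^C 2.63\sqrt{T_c K \ln K} = 2.63\sqrt{K\ln K}\sum_{c=1}^C \sqrt{T_c}.
\end{align*}
The final step is to bound $\sum_{c=1}^C \sqrt{T_c}$ by $\sqrt{C \sum_c T_c} = \sqrt{CT}$ via Cauchy--Schwarz (or equivalently concavity of $\sqrt{\cdot}$ and Jensen), which yields $R_T \le 2.63\sqrt{TCK\ln K}$ as claimed.

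The main subtlety, rather than obstacle, is justifying the clean additive decomposition of the regret across contexts. This relies on two points: that the benchmark policy class $\Pi$ consists of \emph{all} maps $\mathcal{X}\to\mathcal{A}$, so the comparator's choices on different contexts are unconstrained and the supremum distributes over the per-context sums; and that the adversary's reward assignment on context-$c$ rounds, as seen by the $c$-th EXP3 instance, is still a legitimate (oblivious or adaptive, matching whatever variant of the EXP3 bound one cites) adversarial sequence of length $T_c$, so the black-box bound applies to each instance without modification. One should also note that the $T_c$ are not known in advance, so to tune each EXP3 copy optimally one needs either knowledge of $T$ together with a doubling-trick argument on each copy, or a mild adjustment of constants --- this is the reason the table marks the algorithm as needing to know $T$. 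I would mention the doubling trick in a sentence rather than carry out its bookkeeping, since it only affects constant factors.
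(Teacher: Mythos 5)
Your proposal is correct and follows essentially the same route as the paper: decompose the regret across contexts using the fact that $\Pi$ contains all mappings, apply the EXP3 bound $2.63\sqrt{T_c K\ln K}$ to each per-context instance, and combine with Cauchy--Schwarz. Your added remarks on why the supremum distributes and on the horizon-tuning subtlety are sound elaborations of steps the paper leaves implicit.
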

\begin{proof}
Define $n_i = \sum_{t=1}^{T}\mathds{I}(x_t = i)$, then $\sum_{i=1}^C n_i = T$.
We know that the regret bound of EXP3 algorithm is $2.63 \sqrt{TK\ln K}$, so 
\begin{align*}
R_T &= \sup_{f\in \Pi} \mathrm{E}\left[ \sum_{t=1}^T(r_{t, f(x_t)} -
  r_{t, a_t}) \right] \\
  &= \sum_{i=1}^C \sup_{f\in \Pi} \mathrm{E}\left[ \sum_{t=1}^{T}\mathds{I}(x_t=i)(r_{t,
    f(x_t)} - r_{t, a_t}) \right] \\
&\le \sum_{i=1}^C 2.63\sqrt{n_i K\ln K} \\
&\le 2.63\sqrt{TCK \ln K} \text{\quad (Cauchy-Schwarz inequality)}
\end{align*}
\end{proof}
One problem with this method is that it assumes the contexts can be enumerated,
which is not true when contexts are continuous. Also this algorithm treats
each context independently, so learning one of them does not help learning the
other ones.

If there exists is a set of pre-defined policies and we want to compete with
the best one, then another way to
reduce to K-armed bandits is to treat each policy as an arm and then apply EXP3
algorithm. The regret is still defined as Equation (\ref{eq:sec3regret}), but
$\Pi$ is now a pre-defined policy set instead of all possible mappings from contexts
to arms. Let $N$ be the number of polices in the policy set, then by applying
EXP3 algorithm we get the regret bound $O(\sqrt{TN\ln N})$. This algorithm works
if we have small number of policies and large number of arms; however, if we
have a huge number of policies, then this regret bound is weak.

\section{Stochastic Contextual Bandits}
Stochastic contextual bandits algorithms assume that the reward of each arm
follows an unknown probability distribution. Some algorithms further assume such
distribution is sub-Gaussian with unknown parameters. In this section, we
first talk about stochastic contextual bandits algorithms with linear realizability
assumption; In this case, the expectation of the reward of each arm is linear
with respect to the arm's features. Then we talk about algorithms
that work for arbitrary set of policies without such assumption.

\subsection{Stochastic Contextual Bandits with Linear Realizability Assumption}
\label{sec:scb_linear}
\subsubsection{LinUCB/SupLinUCB}
\label{sec:scb_linucb}
LinUCB \citep{li2010contextual, chu2011contextual} extends UCB algorithm to
contextual cases. Suppose each arm is associated with a feature vector
$x_{t, a} \in \mathrm{R}^d$. In news recommendation, $x_{t,a}$ could be
user-article pairwise feature vectors. LinUCB assumes the expected reward of an arm $a$
is linear with respect to its feature vector $x_{t, a} \in \mathrm{R}^d$:
\begin{align*}
\mathrm{E}[r_{t, a}|x_{t, a}] = x_{t,a}^\top \theta^*
\end{align*}
where $\theta^*$ is the true coefficient vector. The noise $\epsilon_{t, a}$ is assumed to
be R-sub-Gaussian for any $t$. Without loss of generality, we
assume $||\theta^*|| \le S$ and $||x_{t,a}|| \le L$, where $||\cdot||$ denotes
the $\ell_2$-norm. We also assume the reward $r_{t, a} \le 1$. Denote the best
arm at time $t$ by $a_t^* = \argmax_a x_{t,a}^\top \theta^*$, and the arm
selected by the algorithm at time $t$ by $a_t$, then the T-trial regret of
LinUCB is defined as
\begin{align*}
  R_T &= \mathrm{E}\left[\sum_{t=1}^T r_{t, a_t^*} - \sum_{t=1}^T r_{t, a_t} \right] \\
&= \sum_{t=1}^T x_{t, a_t^*}^\top \theta^* - \sum_{t=1}^T x_{t, a_t}^\top \theta^*
\end{align*}

Let $D_t \in \mathrm{R}^{t \times d}$ and $c_t \in \mathrm{R}^t$ be the
historical data up to time $t$, where the $i^{th}$ row of $D_t$ represents the
feature vector of the arm pulled at time $i$, and the $i^{th}$ row of $c_t$
represents the corresponding reward.
If samples $(x_{t, a}, r_{t, {a_t}})$ are independent, then we can get a closed-form estimator of
$\theta^*$ by ridge regression:
\begin{align*}
\hat{\theta}_t = (D_t^\top D_t + \lambda \mathrm{I}_d)^{-1} D_t^\top c_t
\end{align*}
The accuracy of the estimator, of course, depends on the amount of data. 
\citet{chu2011contextual} derived a upper confidence bound for the prediction
$x_{t,a}^\top \hat{\theta}_t$:
\begin{theorem}
  \label{thm:linucbucb}
  Suppose the rewards $r_{t, a}$ are independent random variables with means
  $\mathrm{E}[r_{t, a}] = x_{t, a}^\top \theta^*$, let $\epsilon=\sqrt{\frac{1}{2}\ln\frac{2TK}{\delta}}$ and $A_t =
  D_t^\top D_t + \mathrm{I}_d$ then with
  probability $1-\delta/T$, we have
\begin{align*}
|x_{t, a}^\top \hat{\theta}_t - x_{t, a}^\top \theta^*| \le (\epsilon+1)\sqrt{x_{t,
  a}^\top A_t^{-1}x_{t, a}}
\end{align*} 
\end{theorem}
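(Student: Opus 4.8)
The plan is to split the prediction error $x_{t,a}^\top\hat{\theta}_t - x_{t,a}^\top\theta^*$ into a deterministic ``bias'' term produced by the ridge regularizer (here $\lambda = 1$, so $\hat{\theta}_t = A_t^{-1}D_t^\top c_t$) and a random ``fluctuation'' term that we control with Hoeffding's inequality, using the independence of the coordinates of $c_t$. First I would note that $x_{t,a}^\top\hat{\theta}_t = w^\top c_t$, where $w := D_t A_t^{-1} x_{t,a} \in \mathrm{R}^t$ is a fixed vector. Since the $i$-th coordinate of $c_t$ has mean $x_{i,a_i}^\top\theta^*$ we have $\mathrm{E}[c_t] = D_t\theta^*$, and using $D_t^\top D_t = A_t - \mathrm{I}_d$ a short computation gives $\mathrm{E}[x_{t,a}^\top\hat{\theta}_t] = x_{t,a}^\top A_t^{-1}D_t^\top D_t\theta^* = x_{t,a}^\top\theta^* - x_{t,a}^\top A_t^{-1}\theta^*$, hence
\begin{align*}
x_{t,a}^\top\hat{\theta}_t - x_{t,a}^\top\theta^* = \big(w^\top c_t - \mathrm{E}[w^\top c_t]\big) - x_{t,a}^\top A_t^{-1}\theta^*.
\end{align*}

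For the bias term, Cauchy--Schwarz with respect to the inner product induced by $A_t^{-1}$ gives $|x_{t,a}^\top A_t^{-1}\theta^*| \le \sqrt{x_{t,a}^\top A_t^{-1}x_{t,a}}\cdot\sqrt{\theta^{*\top}A_t^{-1}\theta^*}$, and since $A_t = D_t^\top D_t + \mathrm{I}_d \succeq \mathrm{I}_d$ and, in the normalization $S=1$ used here, $\|\theta^*\| \le 1$, this is at most $\sqrt{x_{t,a}^\top A_t^{-1}x_{t,a}}$ --- the source of the ``$+1$'' in the stated constant. For the fluctuation term, because the coordinates of $c_t$ are independent and lie in $[0,1]$, Hoeffding's inequality yields, for any $\tau > 0$,
\begin{align*}
\Pr\big(|w^\top c_t - \mathrm{E}[w^\top c_t]| > \tau\big) \le 2\exp\!\left(-\frac{2\tau^2}{\|w\|^2}\right),
\end{align*}
and taking $\tau = \epsilon\|w\|$ with $\epsilon = \sqrt{\frac{1}{2}\ln\frac{2TK}{\delta}}$ makes the right-hand side exactly $\delta/(TK)$.

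The last step is to replace $\|w\|$ by $\sqrt{x_{t,a}^\top A_t^{-1}x_{t,a}}$: since $\|w\|^2 = x_{t,a}^\top A_t^{-1}(D_t^\top D_t)A_t^{-1}x_{t,a}$ and $D_t^\top D_t = A_t - \mathrm{I}_d \preceq A_t$, we get $A_t^{-1}(D_t^\top D_t)A_t^{-1} \preceq A_t^{-1}$, so $\|w\| \le \sqrt{x_{t,a}^\top A_t^{-1}x_{t,a}}$. Combining the bias and fluctuation bounds shows that for a \emph{fixed} arm $a$ the inequality $|x_{t,a}^\top\hat{\theta}_t - x_{t,a}^\top\theta^*| \le (\epsilon+1)\sqrt{x_{t,a}^\top A_t^{-1}x_{t,a}}$ fails with probability at most $\delta/(TK)$, and a union bound over the $K$ arms upgrades this to the claimed failure probability $\delta/T$ for all arms simultaneously. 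The step that really needs care --- and the reason the lemma is stated with the hypothesis that the rewards are independent --- is the application of Hoeffding: $D_t$ (equivalently $w$) must be treatable as non-random, which is false in general because $a_1,\dots,a_t$ are chosen adaptively. Making this rigorous is precisely the role of the SupLinUCB master algorithm, which partitions the rounds so that each base estimator only ever sees a subsequence on which the required conditional independence holds; I would state that hypothesis explicitly and treat the remainder as routine linear algebra.
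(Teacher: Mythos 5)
Your proof is correct. The paper itself never proves Theorem~\ref{thm:linucbucb} (it is imported from Chu et al.), but your argument --- write the prediction as a fixed linear combination $w^\top c_t$ of independent bounded rewards, concentrate with Hoeffding, bound the regularization-induced bias $x_{t,a}^\top A_t^{-1}\theta^*$ by Cauchy--Schwarz under $\|\theta^*\|\le 1$, and union-bound over the $K$ arms --- is exactly the template the paper uses for the parallel LinREL bound (Theorem~\ref{thm:linrel_ucb}, via Azuma, with the truncated-eigenvalue term $\|\tilde{v}_{t,a}\|$ playing the role of your bias term), and your closing caveat about adaptivity and the role of SupLinUCB matches the paper's own discussion.
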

LinUCB always selects the arm with the
highest upper confidence bound. The algorithm is described in Algorithm
\ref{algo:linucb}.
\begin{algorithm}[!htp]
\caption{LinUCB}
\label{algo:linucb}
\begin{algorithmic}
  \Require $\alpha > 0, \lambda > 0$
  \State $A = \lambda \mathrm{I}_d$
  \State $b = \ve{0}_d$
  \For{t=1, 2, ..., T}
  \State $\theta_t = A^{-1}b$
  \State Observe features of all $K$ arms $a\in \mathcal{A}_t : x_{t, a} \in
  \mathrm{R}^d$
  \For{a=1, 2, ... K}
  \State $s_{t, a} = x_{t, a}^\top \theta_t + \alpha \sqrt{x_{t, a}^\top A^{-1}_t x_{t,
    a}}$
  \EndFor
  \State Choose arm $a_t = \argmax_{a} s_{t, a}$, break ties arbitrarily
  \State Receive reward $r_t \in [0, 1]$
  \State $A = A + x_{t, a} x_{t, a}^\top$
  \State $b = b + x_{t, a} r_t$
  \EndFor
\end{algorithmic}
\end{algorithm}

However, LinUCB algorithm use samples from previous rounds to estimate $\theta^*$
and then pick a sample for current round. So the samples are not independent.
In \cite{abbasi2011improved} it was shown through martingale techniques that
concentration results for the predictors can be obtained directly without
requiring the assumption that they are built as linear combinations of
independent random variables. 
\begin{theorem}[\citet{abbasi2011improved}]\label{HighProbThm}
Let the noise term $\epsilon_{t, a}$ be R-sub-Gaussian where $R \ge 0$ is a
fixed constant. With probability at least $1-\delta$, $\forall t\geq 1,$
\begin{align*}
\|\hat{\theta}_t-\theta^*\|_{A_t}
		\leq&  R \sqrt{2\log \left(\frac{|A_t|^{1/2}}
  {\lambda^{1/2}\delta}\right)} +\lambda^{1/2}S.
\end{align*}
\end{theorem}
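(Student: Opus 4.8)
\textbf{Overview and reduction.} The plan is to split the estimation error $\hat{\theta}_t-\theta^*$ into a \emph{noise} part and a \emph{regularization bias} part, bound the bias deterministically, and control the noise with a self-normalized tail inequality for vector-valued martingales. Writing the reward vector as $c_t = D_t\theta^* + \varepsilon_t$ with $\varepsilon_t=(\epsilon_1,\dots,\epsilon_t)^\top$ the noise vector, and recalling $A_t = D_t^\top D_t + \lambda \mathrm{I}_d$ so that $D_t^\top D_t = A_t-\lambda\mathrm{I}_d$, the ridge estimator obeys
\begin{align*}
\hat{\theta}_t - \theta^* = A_t^{-1}D_t^\top\varepsilon_t - \lambda A_t^{-1}\theta^*.
\end{align*}
Applying the triangle inequality in the $A_t$-weighted norm together with the identity $\|A_t^{-1}v\|_{A_t}=\|v\|_{A_t^{-1}}$ gives
\begin{align*}
\|\hat{\theta}_t-\theta^*\|_{A_t} \le \|D_t^\top\varepsilon_t\|_{A_t^{-1}} + \lambda\|\theta^*\|_{A_t^{-1}}.
\end{align*}
Since the smallest eigenvalue of $A_t$ is at least $\lambda$, we have $\|\theta^*\|_{A_t^{-1}} \le \lambda^{-1/2}\|\theta^*\| \le \lambda^{-1/2}S$, so the second term is at most $\lambda^{1/2}S$; this is the deterministic regularizer term in the bound and needs no probability. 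Everything then reduces to showing that, with probability at least $1-\delta$ and simultaneously for all $t\ge 1$,
\begin{align*}
\|D_t^\top\varepsilon_t\|_{A_t^{-1}}^2 \le 2R^2\log\!\left(\frac{|A_t|^{1/2}}{|\lambda \mathrm{I}_d|^{1/2}\,\delta}\right),
\end{align*}
which is exactly the self-normalized martingale inequality of \citet{abbasi2011improved} (the determinant factor $|\lambda\mathrm{I}_d|^{1/2}=\lambda^{d/2}$ is what is written as $\lambda^{1/2}$ in the statement).

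\textbf{The self-normalized bound via the method of mixtures.} Let $x_s$ be the feature of the arm pulled at round $s$ (measurable w.r.t.\ the history $\mathcal{F}_{s-1}$) and use conditional $R$-sub-Gaussianity, $\mathrm{E}[\exp(\alpha\epsilon_s)\mid\mathcal{F}_{s-1}]\le\exp(\alpha^2R^2/2)$. For fixed $\eta\in\mathrm{R}^d$ set
\begin{align*}
M_t^\eta = \exp\!\left(\sum_{s=1}^{t}\left[\frac{\epsilon_s\langle\eta,x_s\rangle}{R}-\frac{\langle\eta,x_s\rangle^2}{2}\right]\right),
\end{align*}
which is a nonnegative supermartingale with $M_0^\eta=1$ (take $\alpha=\langle\eta,x_s\rangle/R$ in the sub-Gaussian bound). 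Mixing over $\eta\sim\mathcal{N}(0,\lambda^{-1}\mathrm{I}_d)$, the average $\bar{M}_t=\int M_t^\eta\,d\mu(\eta)$ is still a nonnegative supermartingale with $\mathrm{E}[\bar{M}_t]\le\bar{M}_0=1$ (Tonelli, the integrand being nonnegative), and completing the square in the Gaussian integral evaluates it in closed form to
\begin{align*}
\bar{M}_t = \left(\frac{|\lambda\mathrm{I}_d|}{|A_t|}\right)^{1/2}\exp\!\left(\frac{\|D_t^\top\varepsilon_t\|_{A_t^{-1}}^2}{2R^2}\right).
\end{align*}
A maximal inequality for nonnegative supermartingales then yields $\mathrm{P}\!\left(\exists t\ge 1:\ \bar{M}_t\ge 1/\delta\right)\le\delta$; on the complementary event $\bar{M}_t<1/\delta$ for all $t$, and rearranging this inequality gives precisely the displayed self-normalized bound uniformly in $t$.

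\textbf{Assembling, and the main difficulty.} Combining the three ingredients — the ridge decomposition, the deterministic bound $\lambda\|\theta^*\|_{A_t^{-1}}\le\lambda^{1/2}S$, and $\|D_t^\top\varepsilon_t\|_{A_t^{-1}}\le R\sqrt{2\log(|A_t|^{1/2}/(|\lambda\mathrm{I}_d|^{1/2}\delta))}$ on the $1-\delta$ event — yields the claimed inequality for all $t\ge 1$. The ridge algebra and the Gaussian integral are routine; the genuinely delicate step is the method-of-mixtures argument, and within it two points deserve care: (i) verifying that the mixture $\bar{M}_t$ remains a supermartingale after integrating out $\eta$, and (ii) upgrading the fixed-time Markov/Ville bound to the \emph{anytime} ($\forall t\ge 1$) statement, which is done by introducing the stopping time $\tau=\inf\{t:\bar{M}_t\ge 1/\delta\}$, applying optional stopping to $\bar{M}_{t\wedge\tau}$, and letting $t\to\infty$ (Fatou). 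This last maneuver is where the ``self-normalization'' — the random matrix $A_t$ sitting inside the norm — is actually paid for; the choices of the mixing law $\mathcal{N}(0,\lambda^{-1}\mathrm{I}_d)$ and the scaling by $R$ are made exactly so that the integral reproduces the regularizer $\lambda\mathrm{I}_d$ and the determinant ratio appearing in the final bound.
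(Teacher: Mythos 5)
Your proposal is correct, but there is nothing in the paper to compare it against: the survey states Theorem~\ref{HighProbThm} as an imported result of \citet{abbasi2011improved} and gives no proof, proceeding directly to use it in the regret analysis of Theorem~\ref{RegretThm}. What you have written is a faithful reconstruction of the original argument from that reference: the decomposition $\hat{\theta}_t-\theta^*=A_t^{-1}D_t^\top\varepsilon_t-\lambda A_t^{-1}\theta^*$, the deterministic bound $\lambda\|\theta^*\|_{A_t^{-1}}\le\lambda^{1/2}S$, and the self-normalized tail bound on $\|D_t^\top\varepsilon_t\|_{A_t^{-1}}$ obtained by mixing the exponential supermartingale over $\eta\sim\mathcal{N}(0,\lambda^{-1}\mathrm{I}_d)$ and applying Ville's maximal inequality via optional stopping. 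The Gaussian integral, the supermartingale verification, and the stopping-time upgrade to a uniform-in-$t$ statement are all handled correctly. You are also right to flag that the determinant of the regularizer is $|\lambda\mathrm{I}_d|^{1/2}=\lambda^{d/2}$, so the $\lambda^{1/2}$ appearing in the denominator of the survey's statement is a shorthand (or typo) for $\lambda^{d/2}$ relative to the original theorem; your proof yields the correct $\lambda^{d/2}$ version, which only strengthens the subsequent simplification of $\alpha_t$ used in the paper.
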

We can now choose appropriate values of $\alpha_t$ for LinUCB as the right side
of the inequality in Theorem \ref{HighProbThm}. Note that here $\alpha$
depends on $t$, we denote so it is a little different than the original LinUCB algorithm (Algorithm
\ref{algo:linucb}) which has independent assumption.

\begin{theorem}\label{RegretThm}
Let $\lambda \ge max(1, \mathrm{L}^2)$. The cumulative regret of LinUCB is with probability at least $1-\delta$ bounded as:
\begin{align*}
R_T &\leq \sqrt{T  d \log (1 + TL^2/(d \lambda))} \times \\
 &\times \left( R \sqrt{d\log(1 + TL^2/ (\lambda d)) + 2\log(1/\delta)}  +\lambda^{1/2}S \right) 
\end{align*}
\end{theorem}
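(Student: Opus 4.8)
The plan is to run the standard ``optimism in the face of uncertainty'' argument, but using the self-normalized confidence radius of Theorem \ref{HighProbThm} in place of the independent-samples bound of Theorem \ref{thm:linucbucb}. First I would fix the high-probability event of Theorem \ref{HighProbThm}: with probability at least $1-\delta$, simultaneously for all $t\ge 1$ one has $\|\hat\theta_t-\theta^*\|_{A_t}\le\alpha_t$, where $\alpha_t$ is the right-hand side of that theorem, and I would argue on this event throughout. The first observation is that $\alpha_t$ can be bounded uniformly in $t$: since $A_t=\lambda I_d+\sum_{s<t}x_{s,a_s}x_{s,a_s}^\top$ has trace at most $\lambda d+tL^2$, AM--GM on its eigenvalues gives $|A_t|\le(\lambda+tL^2/d)^d$, and substituting into the logarithm shows $\alpha_t\le\beta_T:=R\sqrt{d\log(1+TL^2/(\lambda d))+2\log(1/\delta)}+\lambda^{1/2}S$, which is exactly the second factor in the statement.

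Next comes the per-round control. By Cauchy--Schwarz in the inner product induced by $A_t^{-1}$, on the good event $|x_{t,a}^\top(\hat\theta_t-\theta^*)|\le\|x_{t,a}\|_{A_t^{-1}}\|\hat\theta_t-\theta^*\|_{A_t}\le\alpha_t\|x_{t,a}\|_{A_t^{-1}}$. Hence the score $s_{t,a}=x_{t,a}^\top\hat\theta_t+\alpha_t\|x_{t,a}\|_{A_t^{-1}}$ is optimistic, $s_{t,a}\ge x_{t,a}^\top\theta^*$, while also $s_{t,a}\le x_{t,a}^\top\theta^*+2\alpha_t\|x_{t,a}\|_{A_t^{-1}}$. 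Since $a_t$ maximizes $s_{t,\cdot}$, chaining gives $x_{t,a_t^*}^\top\theta^*\le s_{t,a_t^*}\le s_{t,a_t}\le x_{t,a_t}^\top\theta^*+2\alpha_t\|x_{t,a_t}\|_{A_t^{-1}}$, so the instantaneous regret obeys $\rho_t\le 2\beta_T\|x_{t,a_t}\|_{A_t^{-1}}$. Summing and applying Cauchy--Schwarz over $t$, $R_T=\sum_{t=1}^T\rho_t\le\sqrt{T\sum_{t=1}^T\rho_t^2}\le 2\beta_T\sqrt{T\sum_{t=1}^T\|x_{t,a_t}\|_{A_t^{-1}}^2}$.

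The remaining ingredient, and the one piece of genuine work, is the \textbf{elliptical potential (determinant--trace) lemma} controlling the self-normalized sum $\sum_{t=1}^T\|x_{t,a_t}\|_{A_t^{-1}}^2$. Here I would use the rank-one update identity $\det(A_{t+1})=\det(A_t)(1+\|x_{t,a_t}\|_{A_t^{-1}}^2)$ to telescope, note that $\lambda\ge\max(1,L^2)$ forces $\|x_{t,a_t}\|_{A_t^{-1}}^2\le L^2/\lambda\le 1$ so that the elementary inequality $u\le 2\log(1+u)$ applies term by term, and conclude $\sum_{t=1}^T\|x_{t,a_t}\|_{A_t^{-1}}^2\le 2\log\frac{|A_{T+1}|}{|A_1|}\le 2d\log(1+TL^2/(\lambda d))$, using $|A_{T+1}|\le(\lambda+TL^2/d)^d$ and $|A_1|=\lambda^d$. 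Substituting into the previous display gives $R_T\le 2\sqrt{2}\,\beta_T\sqrt{Td\log(1+TL^2/(\lambda d))}$, i.e.\ the asserted bound (up to the absolute constant $2\sqrt 2$), valid on the good event and hence with probability at least $1-\delta$. The main obstacle is precisely this self-normalized sum bound --- once Theorem \ref{HighProbThm} is granted, the rest is optimism plus two applications of Cauchy--Schwarz --- with a secondary bookkeeping point being to keep the confidence event uniform in $t$ so that the single radius $\beta_T$ may be used at every round and outside the probability.
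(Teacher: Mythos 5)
Your proposal is correct and follows essentially the same route as the paper: the self-normalized confidence bound of Theorem \ref{HighProbThm}, optimism to get $\bar r_t \le 2\alpha_t\|x_t\|_{A_t^{-1}}$, Cauchy--Schwarz over rounds, and the determinant--trace bounds of Lemmas \ref{lemma:sumx2} and \ref{lemma:det} to control both $\sum_t\|x_t\|_{A_t^{-1}}^2$ and the radius $\alpha_T$. The only difference is that you prove the elliptical potential lemma inline (via the rank-one determinant update and $u\le 2\log(1+u)$ for $u\le 1$, where $\lambda\ge\max(1,L^2)$ is used) rather than importing it, and you track the absolute constant more carefully than the theorem statement does.
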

To proof Theorem \ref{RegretThm}, We first state two technical lemmas from
\cite{abbasi2011improved}:
\begin{lemma}[\citet{abbasi2011improved}]\label{lemma:sumx2}
We have the following bound:
\[ \sum_{t=1}^{T}\|x_t\|_{A_t^{-1}}^2\leq 2 \log \frac{|A_t|}{\lambda}. \] 
\end{lemma}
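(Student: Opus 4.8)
The plan is to convert the sum of quadratic forms into a telescoping product of determinants of the design matrices. Recall $A_{t+1} = A_t + x_t x_t^\top$ with $A_1 = \lambda \mathrm{I}_d$. First I would factor out $A_t$: writing $A_{t+1} = A_t^{1/2}\bigl(\mathrm{I}_d + (A_t^{-1/2}x_t)(A_t^{-1/2}x_t)^\top\bigr)A_t^{1/2}$ and taking determinants gives $|A_{t+1}| = |A_t|\,\det\bigl(\mathrm{I}_d + vv^\top\bigr)$ with $v = A_t^{-1/2}x_t$. Since $\mathrm{I}_d + vv^\top$ has one eigenvalue $1+\|v\|^2 = 1 + \|x_t\|_{A_t^{-1}}^2$ and the eigenvalue $1$ with multiplicity $d-1$, we get $|A_{t+1}| = |A_t|\bigl(1 + \|x_t\|_{A_t^{-1}}^2\bigr)$. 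Iterating from $t=1$ to $T$ yields $|A_{T+1}| = \lambda^d \prod_{t=1}^T \bigl(1 + \|x_t\|_{A_t^{-1}}^2\bigr)$.

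Second, I would pass from the product back to the sum using logarithms together with the elementary inequality $u \le 2\log(1+u)$, valid for $u \in [0,1]$ (indeed $\log(1+u) \ge u\log 2 \ge u/2$ there). Assuming each term satisfies $\|x_t\|_{A_t^{-1}}^2 \le 1$, this gives $\sum_{t=1}^T \|x_t\|_{A_t^{-1}}^2 \le 2\sum_{t=1}^T \log\bigl(1 + \|x_t\|_{A_t^{-1}}^2\bigr) = 2\log \prod_{t=1}^T \bigl(1 + \|x_t\|_{A_t^{-1}}^2\bigr) = 2\log\bigl(|A_{T+1}|/\lambda^d\bigr)$, which is the asserted bound (the $|A_t|$ in the statement being the final design matrix and the denominator $\lambda$ standing for $\lambda^d = |A_1|$).

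The main obstacle — in fact the only point requiring care — is justifying the per-step bound $\|x_t\|_{A_t^{-1}}^2 \le 1$ needed to invoke $u \le 2\log(1+u)$. Since $A_t \succeq \lambda \mathrm{I}_d$, we have $\|x_t\|_{A_t^{-1}}^2 \le \|x_t\|^2/\lambda \le L^2/\lambda$, so the bound holds exactly under the standing hypothesis $\lambda \ge \max(1, L^2)$ that is in force for Theorem \ref{RegretThm}. If one prefers to drop that hypothesis, the same telescoping argument still delivers $\sum_{t=1}^T \min\bigl(1, \|x_t\|_{A_t^{-1}}^2\bigr) \le 2\log\bigl(|A_{T+1}|/\lambda^d\bigr)$ unconditionally (apply $u \le 2\log(1+u)$ to $\min(1,\|x_t\|_{A_t^{-1}}^2)$ and then use $1 + \min(1,u) \le 1+u$ inside the log), and a weaker constant $\tfrac{L^2/\lambda}{\log(1+L^2/\lambda)}$ in place of $2$ otherwise. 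Everything beyond this is the routine determinant identity and telescoping described above.
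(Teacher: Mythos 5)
Your proof is correct. The paper itself does not prove this lemma---it is stated as a cited technical result from \citet{abbasi2011improved}---and your argument (the determinant identity $|A_{t+1}| = |A_t|(1+\|x_t\|_{A_t^{-1}}^2)$, telescoping, and the elementary bound $u \le 2\log(1+u)$ for $u \in [0,1]$, justified via $\|x_t\|_{A_t^{-1}}^2 \le L^2/\lambda \le 1$ under the standing hypothesis $\lambda \ge \max(1, L^2)$) is exactly the standard proof from that reference. You also correctly diagnose the typographical slips in the statement as printed: the right-hand side should read $2\log\bigl(|A_{T+1}|/\lambda^d\bigr)$ rather than $2\log\bigl(|A_t|/\lambda\bigr)$, consistent with how the bound is actually used later in the proof of Theorem~\ref{RegretThm}.
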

\begin{lemma}[\citet{abbasi2011improved}]\label{lemma:det}
The determinant $|A_t|$ can be bounded as:
 \[ |A_t| \leq (\lambda + tL^2/d) ^ d.  \] 
\end{lemma}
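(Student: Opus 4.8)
The plan is to bound the determinant by the trace of $A_t$, exploiting that $A_t$ is symmetric positive definite and that every feature vector has bounded $\ell_2$-norm. First I would write $A_t$ explicitly in terms of rank-one updates: since the rows of $D_t \in \mathrm{R}^{t\times d}$ are the feature vectors $x_1, \dots, x_t$ of the arms pulled up to time $t$, we have
\[
A_t = D_t^\top D_t + \lambda \mathrm{I}_d = \lambda \mathrm{I}_d + \sum_{s=1}^{t} x_s x_s^\top,
\]
where each $x_s$ satisfies $\|x_s\| \le L$ by assumption.

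The key step is that the trace of $A_t$ is trivial to control, whereas the determinant is not. Using linearity of the trace together with the identity $\mathrm{tr}(x_s x_s^\top) = \|x_s\|^2$, I would compute
\[
\mathrm{tr}(A_t) = \lambda d + \sum_{s=1}^{t}\|x_s\|^2 \le \lambda d + tL^2,
\]
where the inequality invokes $\|x_s\| \le L$. This reduces the problem to passing from a trace bound to a determinant bound.

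To make that passage, I would use the eigenvalue representation. Because $A_t$ is symmetric and positive definite (it is the sum of $\lambda \mathrm{I}_d$ and a positive semidefinite Gram matrix), it has $d$ strictly positive eigenvalues $\mu_1, \dots, \mu_d$ whose sum equals $\mathrm{tr}(A_t)$ and whose product equals $|A_t|$. Applying the AM-GM inequality to these nonnegative eigenvalues then gives
\[
|A_t| = \prod_{i=1}^{d}\mu_i \le \left(\frac{1}{d}\sum_{i=1}^{d}\mu_i\right)^{\! d} = \left(\frac{\mathrm{tr}(A_t)}{d}\right)^{\! d} \le \left(\lambda + \frac{tL^2}{d}\right)^{\! d},
\]
which is exactly the claimed bound.

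The argument is essentially routine once the right tool is identified; there is no genuinely hard obstacle. The only real insight is to route from the determinant to the trace via the spectral decomposition and AM-GM rather than attacking the determinant directly, since the trace linearizes the rank-one updates while the determinant does not. The points requiring mild care are verifying positive definiteness so that AM-GM applies to genuinely nonnegative quantities, and checking that the indexing of the accumulated updates (here $t$ rank-one terms from the $t$ rows of $D_t$) matches the factor $tL^2/d$ appearing in the stated bound.
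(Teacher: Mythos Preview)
Your argument is correct and is precisely the standard proof: the paper does not prove this lemma itself but cites \citet{abbasi2011improved}, and their proof is exactly the trace-then-AM-GM route you take---write $A_t = \lambda \mathrm{I}_d + \sum_{s=1}^t x_s x_s^\top$, bound $\mathrm{tr}(A_t) \le \lambda d + tL^2$, and apply AM-GM to the positive eigenvalues. Nothing is missing.
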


We can now simplify $\alpha_t$ as 
\begin{align*}
\alpha_t &\leq  R \sqrt{2\log \left(|A_t|^{1/2} \lambda^{-1/2}\delta^{-1}\right)} +\lambda^{1/2} S \\
  &\leq R \sqrt{d\log(1 + TL^2/ (\lambda d)) + 2\log(1/\delta)}  +\lambda^{1/2} S
\end{align*}
where $d\ge 1$ and $\lambda \ge \max(1, L^2)$ to have  $\lambda^{1/d} \ge \lambda$.

\begin{proof} {[Theorem~\ref{RegretThm}]}
Let $\bar r_t$ denote the instantaneous regret at time $t$. With probability at least $1-\delta$, for all $t$:
\begin{align}
\bar r_t  &= x_{t,*}^\top\theta^* - x_t^\top\theta^*  \nonumber \\
	&\leq  x_t^\top\hat\theta_t + \alpha_t\|x_t\|_{A_t^{-1}} - x_t^T\theta^* \label{ThRuseOFU} \\
        &\leq  x_t^\top\hat\theta_t + \alpha_t\|x_t\|_{A_t^{-1}} - x_t^\top\hat\theta_t + \alpha_t\|x_t\|_{A_t^{-1}}   \label{ThRuseConf} \\
	& = 2  \alpha_t\|x_t\|_{A_t^{-1}} \nonumber
\end{align}

The inequality \eqref{ThRuseOFU} is by the algorithm design and
reflects the optimistic principle of LinUCB. Specifically,
$ x_{*}^\top\hat{\theta}_t + \alpha_t\|x_{*}\|_{A_t^{-1}} \leq x_t^\top\hat\theta_t + \alpha_t\|x_{t}\|_{A_t^{-1}},
$
from which:
$$
 x_{*}^\top\theta^*  \leq x_{*}^\top\hat{\theta}_t + \alpha_t\|x_{*}\|_{A_t^{-1}}\leq x_t^\top\hat\theta_t + \alpha_t\|x_{t}\|_{A_t^{-1}}  \\
$$

In \eqref{ThRuseConf}, we applied Theorem~\ref{HighProbThm} to get:
\[
 x_t^\top\hat\theta_t \leq x_{t,*}^\top\theta^*    + \alpha_t\|x_t\|_{A_t^{-1}}
\]

Finally by Lemmas~\ref{lemma:sumx2} and~\ref{lemma:det}:
\begin{align*}
R_T &= \sum_{t=1}^T \bar r_t \leq \sqrt{T \sum_{t=1}^T \bar r_t^2}  \\
   &\leq 2 \alpha_T \sqrt{T  \sum_{t=1}^T \|x_t\|^2_{A_t^{-1}}} \\
   &\leq 2 \alpha_T \sqrt{T  \log\frac{|A_t|}{\lambda}} \\
    &\leq 2 \alpha_T \sqrt{T ( d \log (\lambda + TL^2 /d) - \log \lambda) }  \\
    &\leq 2 \alpha_T \sqrt{T  d \log (1 + TL^2 /(d \lambda))}
\end{align*}

Above we used that $\alpha_t \leq \alpha_T$ because $\alpha_t$ is not decreasing $t$.
Next we used that $\lambda \ge \max(1, L^2)$ to have  $\lambda^{1/d} \ge \lambda$.
By plugging $\alpha_t$, we get:
\begin{align*}
R_T &\leq \sqrt{T  d \log (1 + T L^2/(d \lambda))} \times \\
 &\times \left( R \sqrt{d\log(1 + TL^2/ (\lambda d)) + 2\log(1/\delta)}  +\lambda^{1/2} S \right) \\
 &= O( d \sqrt{T \log ((1 + T)/\delta)})
\end{align*}
\end{proof}

Inspired by \citet{auer2003using}, \citet{chu2011contextual} proposed SupLinUCB
algorithm, which is a variant of LinUCB. It is mainly used for theoretical
analysis, but not a practical algorithm. SupLinUCB constructs S sets to store
previously pulled arms and rewards. The algorithm are designed so that within the same set
the sequence of feature vectors are fixed and the rewards are independent. As a
results, an arm's predicted reward in the current round is a linear combination of
rewards that are independent random variables, and so Azuma's inequality can be
used to get the regret bound of the algorithm.

\citet{chu2011contextual} proved that with probability at least $1-\delta$, the
regret bound of SupLinUCB is $O\left( \sqrt{Td\ln^3(KT\ln (T)/\delta)}  \right)$.

\subsubsection{LinREL/SupLinREL}
The problem setting of LinREL \citep{auer2003using} is the same as LinUCB, so we use the same
notations here. LinREL and LinUCB both assume that for each
arm there is an associated feature vector $x_{t, a}$ and the expected reward of
arm $a$ is linear with respect to its feature vector: $\mathrm{E}[r_{t, a}|x_{t,
a}] = x_{t, a}^\top \theta^*$, where $\theta^*$ is the true coefficient
vector. However, these two algorithms take two different forms of
regularization. LinUCB takes a $\ell_2$ regularization term similar to ridge
regression; that is, it adds a diagonal matrix $\lambda \mathrm{I}_d$ to matrix
$D_t^\top D_t$. LinREL, on the other hand, do regularization by setting
$D_t^\top D_t$ matrix's small eigenvalues to zero. LinREL algorithm is described
in Algorithm \ref{algo:linrel}. We have the following theorem to show that
Equation (\ref{eq:linrel_sta}) is the upper confidence bound of the true reward
of arm $a$ at time $t$. Note that the following theorem assumes the rewards
observed at each time $t$ are independent random variables. However, similar to
LinUCB, this assumption is not true. We will deal with this problem later.

\begin{algorithm}[!htp]
\caption{LinREL}
\label{algo:linrel}
\begin{algorithmic}
  \Require $\delta \in [0, 1]$, number of trials $T$.
  \State Let $D_t \in \mathrm{R}^{t \times d} \text{ and } c_t \in \mathrm{R}^t$ be the
  matrix and vector to store previously pulled arm feature vectors and rewards.
  \For{t=1, 2, ..., T}
  \State Calculate eigendecomposition
  \begin{align*}
    D_t^\top D_t  = U_t^\top \text{diag}(\lambda_t^1, \lambda_t^2, ...,
    \lambda_t^d) U_t
  \end{align*}
  \State where $\lambda_t^1, ..., \lambda_t^k \ge 1$, $\lambda_t^{k+1}, ...,
  \lambda_t^d < 1$, and $U_t^\top \cdot U_t = \text{I}_d$
  \State Observe features of all $K$ arms $a\in \mathcal{A}_t : x_{t, a} \in
  \mathrm{R}^d$
  \For{a=1, 2, ... K}
  \State
  \begin{align}
    \tilde{x}_{t, a} &= (\tilde{x}_{t, a}^1, ..., \tilde{x}_{t, a}^d) = U_t x_{t, a} \nonumber \\
    \tilde{u}_{t, a} &= (\tilde{x}_{t, a}^1, ..., \tilde{x}_{t, a}^k, 0, ..., 0)^\top \nonumber \\
    \tilde{v}_{t, a} &= (0, ..., 0, \tilde{x}_{t, a}^{k+1}, ..., \tilde{x}_{t, a}^d)^\top \nonumber \\
    w_{t, a} &= \left(\tilde{u}_{t, a}^\top \cdot
               \text{diag}\left(\frac{1}{\lambda_t^1}, ...,
               \frac{1}{\lambda_t^k}, 0, ..., 0\right) \cdot U_t \cdot D_t^\top \right)^\top \label{eq:linrel_wta} \\
    s_{t, a} &= w_{t, a}^\top c_t + ||w_{t,a}||\left(\sqrt{\ln(2TK/\delta)}\right) + ||\tilde{v}_{t, a}|| \label{eq:linrel_sta}
  \end{align}
  \EndFor
  \State Choose arm $a_t = \argmax_{a} s_{t, a}$, break ties arbitrarily
  \State Receive reward $r_t \in [0, 1]$, append $x_{t, a}$ and $r_{t, a}$ to
  $D_t$ and $c_t$.
  \EndFor
\end{algorithmic}
\end{algorithm}

\begin{theorem}
\label{thm:linrel_ucb}
Suppose the rewards $r_{\tau, a}, \tau \in {1,...,t-1}$ are independent random variables
with mean $\mathrm{E}[x_{\tau, a}] = x_{\tau, a}^\top \theta^*$. Then at time t,
with probability $1-\delta/T$ all arm $a \in \mathcal{A}_t$ satisfy
\begin{align*}
|w_{t,a}^\top c_t - x_{t,a}^\top \theta^*| \le ||w_{t, a}|| \left(
  \sqrt{2\ln(2TK/\delta)}  \right) + ||\tilde{v}_{t, a}||
\end{align*}
\end{theorem}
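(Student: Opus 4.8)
The plan is to decompose the prediction error $w_{t,a}^\top c_t - x_{t,a}^\top\theta^*$ into a deterministic ``truncation bias'' term, controlled by $\|\tilde v_{t,a}\|$, and a zero-mean ``noise'' term to which a Hoeffding-type concentration inequality applies. The key structural fact that enables this split is that, under the hypothesis of the theorem, the rows of $D_t$ (the feature vectors of the arms pulled in rounds $1,\dots,t-1$) are treated as fixed, so $U_t$, the eigenvalues $\lambda_t^i$, and hence the weight vector $w_{t,a}$ defined in \eqref{eq:linrel_wta} are all deterministic; only $c_t$ is random, with independent coordinates satisfying $\mathrm{E}[(c_t)_\tau] = x_{\tau,a_\tau}^\top\theta^* = (D_t\theta^*)_\tau$. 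I would write $c_t = D_t\theta^* + \eta_t$, where $\eta_t$ has independent mean-zero coordinates with $|(\eta_t)_\tau|\le 1$ since the rewards lie in $[0,1]$.

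The first step is a linear-algebra identity. Using the eigendecomposition $D_t^\top D_t = U_t^\top\mathrm{diag}(\lambda_t^1,\dots,\lambda_t^d)U_t$, the orthogonality $U_tU_t^\top=\mathrm{I}_d$, and the observation that $\mathrm{diag}(1/\lambda_t^1,\dots,1/\lambda_t^k,0,\dots,0)\,\mathrm{diag}(\lambda_t^1,\dots,\lambda_t^d)$ is the coordinate projection onto the first $k$ components, one checks from \eqref{eq:linrel_wta} that $w_{t,a}^\top D_t = \tilde u_{t,a}^\top U_t$ (the last $d-k$ coordinates of $\tilde u_{t,a}$ being zero). Since $U_t$ is orthogonal, $x_{t,a} = U_t^\top\tilde x_{t,a}$ and $\tilde x_{t,a} = \tilde u_{t,a} + \tilde v_{t,a}$, hence $x_{t,a}^\top\theta^* = \tilde u_{t,a}^\top U_t\theta^* + \tilde v_{t,a}^\top U_t\theta^* = w_{t,a}^\top D_t\theta^* + \tilde v_{t,a}^\top U_t\theta^*$. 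Combining, I get
\[
w_{t,a}^\top c_t - x_{t,a}^\top\theta^* \;=\; w_{t,a}^\top\eta_t \;-\; \tilde v_{t,a}^\top U_t\theta^*.
\]

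The second step bounds the two pieces. For the bias piece, Cauchy--Schwarz together with $\|U_t\theta^*\| = \|\theta^*\| \le 1$ gives $|\tilde v_{t,a}^\top U_t\theta^*| \le \|\tilde v_{t,a}\|$. For the noise piece, $w_{t,a}^\top\eta_t$ is a sum of independent mean-zero terms, the $\tau$-th bounded in absolute value by the $\tau$-th coordinate of $w_{t,a}$, so Hoeffding's inequality yields $\Pr\!\big[\,|w_{t,a}^\top\eta_t| > \|w_{t,a}\|\sqrt{2\ln(2TK/\delta)}\,\big] \le \delta/(TK)$. A union bound over the $K$ arms available at time $t$ then gives the stated inequality with probability $1-\delta/T$, after one application of the triangle inequality to recombine the two bounds.

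The main obstacle is really the first step, together with the legitimacy of treating $w_{t,a}$ as deterministic: the identity $w_{t,a}^\top D_t = \tilde u_{t,a}^\top U_t$ is exactly what makes the bias equal to $\tilde v_{t,a}^\top U_t\theta^*$, so that zeroing out the sub-unit eigendirections costs only $\|\tilde v_{t,a}\|$; and the concentration step is valid only because the theorem assumes the rewards are independent and the feature sequence stored in $D_t$ is fixed — an assumption that fails for the actual LinREL run, which is precisely why the survey next introduces SupLinREL to enforce it. The remaining work (the eigenvalue bookkeeping and the choice of deviation level in Hoeffding's inequality) is routine.
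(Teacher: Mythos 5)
Your proposal is correct and follows essentially the same route as the paper's proof: the same linear-algebra identity $x_{t,a}^\top\theta^* = w_{t,a}^\top D_t\theta^* + \tilde v_{t,a}^\top U_t\theta^*$ to isolate a truncation bias bounded by $\|\tilde v_{t,a}\|$ (via $\|\theta^*\|\le 1$), the same Hoeffding/Azuma concentration of $w_{t,a}^\top c_t$ around $w_{t,a}^\top D_t\theta^*$ at level $\|w_{t,a}\|\sqrt{2\ln(2TK/\delta)}$, and the same union bound over the $K$ arms. Your direct verification of $w_{t,a}^\top D_t = \tilde u_{t,a}^\top U_t$ from the definition in \eqref{eq:linrel_wta} is in fact a cleaner rendering of the paper's somewhat terse chain of equalities, and your closing remark about why the independence assumption fails for the actual LinREL run matches the paper's motivation for SupLinREL.
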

  Suppose $D_t^\top D_t$ is invertible, then we can estimate the model parameter
  $\hat{\theta} = (D_t^\top D_t)^{-1}D^\top c_t$. Given a feature vector $x_{t,
    a}$, the predicted reward is 
\begin{align*}
r_{t, a} = x_{t_a}^\top \hat{\theta} = (x_{t,a}^\top (D_t^\top D_t)^{-1}D^\top) c_t
\end{align*}
So we can view $r_{t,a}$ as a linear combination of previous rewards. In
Equation (\ref{eq:linrel_wta}), $w_{t, a}$ is essentially the weights for each
previous reward (after regularization). We use $w_{t, a}^\tau$ to denote the weight
of reward $r_{\tau, a}$.
\begin{proof}[Theorem \ref{thm:linrel_ucb}]
Let $z_{\tau} = r_{t, a} \cdot w_{t,a}^\tau$, then $|z_{\tau}| \le
w_{t,a}^\tau$, and
\begin{gather*}
w_{t,a}^\top c_t = \sum_{\tau=1}^{t-1} z_{\tau} = \sum_{\tau=1}^{t-1}r_{t,a}
  \cdot w_{t, a}^\tau \\
\sum_{\tau=1}^{t-1}\mathrm{E}[z_{\tau}|z_1, ..., z_{\tau-1}] =
  \sum_{\tau=1}^{t-1}\mathrm{E}[z_{\tau}] = \sum_{\tau=1}^{t-1} x_{\tau, a}^\top
  \theta^* \cdot w_{t,a}^\tau
\end{gather*}
Apply Azuma's inequality we have
\begin{align*}
  &\mathrm{P}\left( w_{t,a}^\top c_t - \sum_{\tau=1}^{t-1} x_{\tau,
  a}^\top\theta^* \cdot w_{t,a}^\tau \ge ||w_{t,a}||
  \left(\sqrt{2\ln(2TK/\delta)}\right)\right) \\
  &=\mathrm{P}\left( w_{t,a}^\top c_t - w_{t,a}^\top D_t \theta^* \ge ||w_{t,a}||
  \left(\sqrt{2\ln(2TK/\delta)}\right)\right) \\
&\le \frac{\delta}{TK}
\end{align*}
Now what we really need is the inequality between $w_{t,a}^\top c_t$ and $x_{t,
  a}^\top \theta^*$. Note that 
\begin{align*}
x_{t,a} &= U_t^\top \tilde{x}_{t,a} \\
&= U_t^\top \tilde{u}_{t,a} + U_t^\top \tilde{v}_{t,a} \\
&=D_t^\top D_t (D^\top D_t)^{-1} D^\top_t w_{t,a} + U_t^\top \tilde{v}_{t,a} \\
&=D^\top_t w_{t, a} + U_t^\top \tilde{v}_{t,a}
\end{align*}
Assuming $||\theta^*|| \le 1$,  we have
\begin{align*}
  \mathrm{P}\left( w_{t,a}^\top c_t - x_{t,a}^\top \theta^* \ge ||w_{t,a}||
  \left(\sqrt{2\ln(2TK/\delta)}\right) + ||\tilde{v}_{t,a}||\right)
\le \frac{\delta}{TK}
\end{align*}
Take the union bound over all arms, we prove the theorem.
\end{proof}

The above proof uses the assumption that all the rewards observed are
independent random variables. However in LinREL, the actions taken in previous
rounds will influence the estimated $\hat{\theta}$, and thus influence the
decision in current round. To deal with this problem, \citet{auer2003using}
proposed SupLinREL algorithm. SupLinREL construct S sets $\Psi_t^1, ...,
\Psi_t^S$, each set $\Psi_t^s$ contains arm
pulled at stage $s$. It is designed so that the rewards of arms inside one stage
is independent, and within one stage they apply LinREL algorithm. They proved
that the regret bound of SupLinREL is $O\left( \sqrt{Td}(1+\ln (2KT\ln T))^{3/2} \right)$.
\subsubsection{CofineUCB}
\subsubsection{Thompson Sampling with Linear Payoffs}
Thompson sampling is a heuristic to balance exploration and exploitation, and it
achieves good empirical results on display ads and news recommendation
\citep{chapelle2011empirical}. Thompson sampling can be applied to both
contextual and non-contextual multi-armed bandits problems. For example
\citet{agrawal2013further} provides a $O(\sqrt{NT\ln T})$ regret bound for
non-contextual case. Here we focus on the contextual case.

Let $\mathcal{D}$ be the set of past observations $(x_t, a_t, r_t)$, where $x_t$
is the context, $a_t$ is the arm pulled, and $r_t$ is the reward of that
arm. Thompson sampling assumes a parametric likelihood function $P(r|a, x,
\theta)$ for the reward, where $\theta$ is the model parameter. We denote the
true parameters by $\theta^*$. Ideally, we would choose an arm that maximize
the expected reward $\max_a \mathrm{E}(r|a, x, \theta^*)$, but of course we
don't know the true parameters. Instead Thompson sampling apply a prior believe
$P(\theta)$ on parameter $\theta$, and then based on the data observed, it
update the posterior
distribution of $\theta$ by $P(\theta|\mathcal{D}) \propto P(\theta)
\prod_{t=1}^T P(r_t|x_t, a_t, \theta)$. Now if we just want to maximize the
immediate reward, then we would choose an arm that maximize $\mathrm{E}(r |
a, x) = \int \mathrm{E}(a, x, \theta)P(\theta|\mathcal{D})d\theta$, but in an
exploration/exploitation setting, we want to choose an arm according to its
probability of being optimal. So Thompson sampling randomly selects an action $a$ according to 
\begin{align*}
\int \mathbb{I} \left[ E(r|a, \theta) = \max_{a'} E(r|a', \theta) \right]
  P(\theta|\mathcal{D}) d\theta
\end{align*}
In the actual algorithm, we don't need to calculate the integral, it suffices to draw a random parameter $\theta$ from posterior
distribution and then select the arm with highest reward under that
$\theta$. The general framework of Thompson sampling is described in Algorithm \ref{algo:ts}.
\begin{algorithm}[!htp]
\caption{General Framework of Thompson Sampling}
\label{algo:ts}
\begin{algorithmic}
  \State Define $\mathcal{D} = \{\}$
  \For{$t=1, ..., T$}
  \State Receive context $x_t$
  \State Draw $\theta_t$ from posterior distribution $P(\theta|\mathcal{D})$
  \State Select arm $a_t=\argmax_a\mathrm{E}(r| x_t, a, \theta_t)$
  \State Receive reward $r_t$
  \State $\mathcal{D} = \mathcal{D} \cup \{x_t, a_t, r_t\}$
  \EndFor
\end{algorithmic}
\end{algorithm}

\noindent According to the prior we choose or the likelihood function we use, we can have
different variants of Thompson sampling. In the following section we introduce two of them.

\citet{agrawal2012thompson} proposed a Thompson sampling algorithm with linear
payoffs. Suppose there are a total of $K$ arms, each arm $a$ is associated with a d-dimensional feature vector
$\ve{x}_{t, a}$ at time $t$. Note that $\ve{x}_{t, a} \ne \ve{x}_{t', a}$. There
is no assumption on the distribution of $\ve{x}$, so the context can be chosen by
an adversary. A
linear predictor is defined by a d-dimensional parameter $\ve{\mu} \in
\mathrm{R}^d$, and predicts the mean reward of arm $a$ by $\ve{\mu}
\cdot \ve{x}_{t, a}$. \citet{agrawal2012thompson} assumes an unknown underlying
parameter $\ve{\mu}^* \in \mathrm{R}^d$ such that the expected reward for arm
$a$ at time $t$ is
$\bar{r}_{t, a} = \ve{\mu^*} \cdot \ve{x}_{t,a}$. The real reward $r_{t, a}$ of
arm $a$ at time $t$ is generated from an
unknown distribution with mean $\bar{r}_{t, a}$. At each time $t \in \{1, ..., T\}$ the
algorithm chooses an arm $a_t$ and receives reward $r_t$. Let $a^*$ be the optimal arm at time
$t$:
\begin{align*}
a^*_t = \argmax_a \bar{r}_{t, a}
\end{align*}
and $\Delta_{t,
a}$ be the difference of the expected reward between the optimal arm and arm
$a$:
\begin{align*}
  \Delta_{t, a} = \bar{r}_{t, a^*} - \bar{r}_{t, a}
\end{align*}
Then the regret of the algorithm is defined as:
\begin{align*}
R_T = \sum_{t=1}^T \Delta_{t, a_t}
\end{align*}
In the paper they assume $\delta_{t, a} = r_{t, a} - \bar{r}_{t, a}$
is conditionally R-sub-Gaussian, which means for a constant $R \ge 0$, $r_{t, a}
\in [\bar{r}_{t, a} - R, \bar{r}_{r, t} + R]$.
There are many likelihood distributions that satisfy this R-sub-Gaussian
condition. But to make the algorithm simple, they use Gaussian likelihood and Gaussian prior. The
likelihood of reward $\bar{r}_{t, a}$ given the context $\ve{x}_{t,a}$ is given by the
pdf of Gaussian distribution $\mathcal{N}(\ve{x}_{t, a}^\top \ve{\mu}^*, v^2)$. $v$ is
defined as $v=R\sqrt{\frac{24}{\epsilon}d\ln(\frac{t}{\delta})}$, where $\epsilon
\in (0, 1)$ is the algorithm parameter and $\delta$ controls the high probability
regret bound. Similar to the closed-form of linear regression, we define 
\begin{align*}
B_t &= I_d + \sum_{\tau=1}^{t-1} \ve{x}_{\tau, a}\ve{x}_{\tau, a}^\top \\
\hat{\ve{\mu}}_{t} &= B_t^{-1} \left(\sum_{\tau=1}^{t-1} \ve{x}_{\tau, a} r_{\tau, a}\right)
\end{align*}
Then we have the following theorem:
\begin{theorem}
\label{thm:tslppp}
if the prior of $\ve{\mu}^*$ at time $t$ is defined as
$\mathcal{N}(\hat{\ve{\mu}}_t, v^2B_t^{-1})$, then the posterior of $\ve{\mu}^*$ is
$\mathcal{N}(\hat{\ve{\mu}}_{t+1}, v^2 B_{t+1}^{-1})$.
\end{theorem}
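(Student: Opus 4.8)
The plan is to prove this by a direct Bayesian conjugacy computation, i.e.\ by completing the square in the exponent of a product of Gaussian densities. Since the ``prior'' $\mathcal{N}(\hat\mu_t, v^2 B_t^{-1})$ is understood to already summarize the observations from rounds $1,\dots,t-1$, the posterior at the end of round $t$ is obtained from Bayes' rule by multiplying this prior by the single new likelihood factor for the observation $(x_t, r_t)$ at round $t$, where I write $x_t := x_{t, a_t}$ for the feature vector of the played arm. Concretely,
\[
P(\mu \mid \mathcal{D}_{t}) \ \propto\ \exp\!\left(-\tfrac{1}{2v^2}(\mu-\hat\mu_t)^\top B_t (\mu-\hat\mu_t)\right)\cdot \exp\!\left(-\tfrac{1}{2v^2}\,(r_t - x_t^\top\mu)^2\right),
\]
and the whole argument is about reading off the mean and covariance of the Gaussian on the right-hand side.

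First I would expand the exponent and group terms by their degree in $\mu$. The quadratic-in-$\mu$ part is $-\tfrac{1}{2v^2}\,\mu^\top\!\left(B_t + x_t x_t^\top\right)\mu$, and by the definition of $B_{t+1}$ this precision matrix is exactly $B_{t+1}$; the linear-in-$\mu$ part is $\tfrac{1}{v^2}\,\mu^\top\!\left(B_t\hat\mu_t + x_t r_t\right)$; the remaining terms do not involve $\mu$ and are absorbed into the normalizing constant. Completing the square then shows that $P(\mu\mid\mathcal{D}_t)$ is the density of $\mathcal{N}\!\left(B_{t+1}^{-1}(B_t\hat\mu_t + x_t r_t),\ v^2 B_{t+1}^{-1}\right)$, so in particular the covariance is already in the claimed form $v^2 B_{t+1}^{-1}$.

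It then remains to identify the mean with $\hat\mu_{t+1}$. Using $\hat\mu_t = B_t^{-1}\sum_{\tau=1}^{t-1} x_\tau r_\tau$ we get $B_t\hat\mu_t = \sum_{\tau=1}^{t-1}x_\tau r_\tau$, hence $B_t\hat\mu_t + x_t r_t = \sum_{\tau=1}^{t} x_\tau r_\tau$, and therefore $B_{t+1}^{-1}(B_t\hat\mu_t + x_t r_t) = B_{t+1}^{-1}\sum_{\tau=1}^{t} x_\tau r_\tau = \hat\mu_{t+1}$, as desired. There is no genuine analytic obstacle here — the statement is pure Gaussian conjugacy — so the only thing that needs care is the bookkeeping: that a \emph{single} likelihood factor (not $t$ of them) enters because the prior is itself updated each round; that the $v^2$ scaling is shared consistently by the prior covariance and the likelihood variance, which is exactly what makes the posterior covariance come out as $v^2 B_{t+1}^{-1}$ rather than something messier; and that the paper's $B_t$, written with a generic $x_{\tau,a}$, should be read as the played-arm feature $x_{\tau,a_\tau}$. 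One could equivalently phrase the computation as an induction started from the base case $B_1 = I_d$, $\hat\mu_1 = 0$ and multiplying in all $t$ likelihood terms at once, completing the square a single time to reach the same conclusion.
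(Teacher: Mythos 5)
Your proposal is correct and follows essentially the same route as the paper: multiply the Gaussian prior $\mathcal{N}(\hat{\mu}_t, v^2 B_t^{-1})$ by the single Gaussian likelihood factor for round $t$, complete the square to read off the precision $B_{t+1} = B_t + x_t x_t^\top$ and the mean $B_{t+1}^{-1}(B_t\hat{\mu}_t + x_t r_t) = \hat{\mu}_{t+1}$. You actually spell out the identification of the posterior mean with $\hat{\mu}_{t+1}$ more explicitly than the paper does, which simply jumps to the completed-square form.
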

\begin{proof}
\begin{align*}
P(\ve{\mu} | r_{t, a}) &\propto P(r_{t,a}|\ve{\mu})P(\ve{\mu}) \\
&\propto \exp\left( -\frac{1}{2v^2}((r_{t,a}-\ve{\mu}^\top x_{t, a} + (\ve{\mu} -
  \ve{\hat{\mu}}_t)^\top B_t (\ve{\mu} - \ve{\hat{\mu}}_t) \right) \\
&\propto \exp\left( -\frac{1}{2v^2}(\ve{\mu}^\top B_{t+1} \ve{\mu} -
  2\ve{\mu}^\top B_{t+1}\ve{\hat{\mu}}_{t+1} ) \right) \\
&\propto \exp\left( -\frac{1}{2v^2}(\ve{u} - \ve{\hat{\mu}}_{t+1})^\top B_{t+1}
  (\ve{u} - \ve{\hat{\mu}}_{t+1}) \right) \\
&\propto \mathcal{N}(\ve{\hat{\mu}}_{t+1}, v^2B_{t+1}^{-1})
\end{align*}
\end{proof}

\noindent Theorem \ref{thm:tslppp} gives us a way to update our believe about
the parameter after observing new data. The algorithm is described in Algorithm \ref{algo:tslp}.

\begin{algorithm}[]
\caption{Thompson Sampling with Linear Payoff}
\label{algo:tslp}
\begin{algorithmic}
\Require $\delta \in (0, 1]$
\State Define $v = R\sqrt{\frac{24}{\epsilon}d\ln(\frac{t}{\delta})}, B = I_d, \ve{\hat{\mu}} = 0_d, f = 0_d$
\For{$t=1, 2..., T$}
\State Sample $\ve{u}_t$ from distribution $\mathcal{N}(\ve{\hat{\mu}}, v^2B^{-1})$
\State Pull arm $a_t = \argmax_a \ve{x}_{t, a}^\top \ve{u}_t$ 
\State Receive reward $r_t$
\State Update:
\begin{gather*}
B = B + \ve{x}_{t, a}\ve{x}_{t, a}^\top \\
\ve{f} = \ve{f} + \ve{x}_{t, a} r_t \\
\ve{\hat{\mu}} = B^{-1} \ve{f}
\end{gather*}
\EndFor
\end{algorithmic}
\end{algorithm}

\begin{theorem}
  With probability $1-\delta$, the regret is bounded by:
\begin{align*}
R_T = O\left(\frac{d^2}{\epsilon}\sqrt{T^{1+\epsilon}} (\ln(Td)\ln\frac{1}{\delta})\right)
\end{align*}
\end{theorem}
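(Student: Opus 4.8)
The plan is to follow the saturated/unsaturated-arm analysis that is by now standard for Thompson sampling. Fix a round $t$ and condition on the history $\mathcal{F}_{t-1}$ of everything before round $t$. Write $\theta_{t,a}=\ve{x}_{t,a}^\top\ve{\mu}^*$ for the true mean of arm $a$, $\hat\theta_{t,a}=\ve{x}_{t,a}^\top\hat{\ve{\mu}}_t$ for the ridge estimate, $\tilde\theta_{t,a}=\ve{x}_{t,a}^\top\ve{u}_t$ for the value the algorithm actually sorts on, and $s_{t,a}=\sqrt{\ve{x}_{t,a}^\top B_t^{-1}\ve{x}_{t,a}}$ for the per-arm confidence width. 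The whole point is that $a_t=\argmax_a\tilde\theta_{t,a}$ is, on average over the posterior draw $\ve{u}_t$, an \emph{optimistic} choice, so the instantaneous regret can be charged to the $s_{t,a}$'s, which then telescope.

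First I would introduce two high-probability events. Let $E^{\mu}(t)$ be the event that $|\hat\theta_{t,a}-\theta_{t,a}|\le \ell_t\, s_{t,a}$ for every arm $a$, where $\ell_t$ is the self-normalized confidence radius of Theorem~\ref{HighProbThm} (so $\ell_t=O(R\sqrt{d\ln(t/\delta)}+S)$); by Theorem~\ref{HighProbThm} it holds for all $t$ at once with probability at least $1-\delta$. Let $E^{\theta}(t)$ be the event that $|\tilde\theta_{t,a}-\hat\theta_{t,a}|\le v\,s_{t,a}\sqrt{4d\ln(tK)}$ for every $a$; conditioned on $\mathcal{F}_{t-1}$ this is a Gaussian tail statement (standard deviation $v\,s_{t,a}$), so a tail bound and a union over the $K$ arms give $\Pr(E^{\theta}(t)\mid\mathcal{F}_{t-1})\ge 1-t^{-2}$. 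On $E^{\mu}(t)\cap E^{\theta}(t)$ we get $|\tilde\theta_{t,a}-\theta_{t,a}|\le g_t\,s_{t,a}$ with $g_t=\ell_t+v\sqrt{4d\ln(tK)}$. Next comes the crucial anti-concentration step: conditioned on $\mathcal{F}_{t-1}$ and $E^{\mu}(t)$, $\tilde\theta_{t,a_t^*}$ is Gaussian with mean within $\ell_t s_{t,a_t^*}$ of $\theta_{t,a_t^*}$ and standard deviation $v\,s_{t,a_t^*}$, and because $v=R\sqrt{\tfrac{24}{\epsilon}d\ln(t/\delta)}$ makes $\ell_t/v$ at most a universal constant, the Gaussian lower-tail bound $\Pr(Z>z)\ge \tfrac{1}{\sqrt{2\pi}}\tfrac{z}{z^2+1}e^{-z^2/2}$ yields a constant $p>0$ with $\Pr(\tilde\theta_{t,a_t^*}>\theta_{t,a_t^*}\mid\mathcal{F}_{t-1})\ge p$ on $E^{\mu}(t)$. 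Call arm $a$ \emph{saturated} at round $t$ if $\Delta_{t,a}>g_t\,s_{t,a}$ and \emph{unsaturated} otherwise (the optimal arm is always unsaturated); combining the anti-concentration bound with $E^{\theta}(t)$ shows $\Pr(a_t\text{ is unsaturated}\mid\mathcal{F}_{t-1})\ge p-t^{-2}$.

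Now I would bound the instantaneous regret. Let $\bar a_t$ be the unsaturated arm with the smallest $s_{t,a}$. On $E^{\mu}(t)\cap E^{\theta}(t)$, inserting $\tilde\theta$, using $\tilde\theta_{t,a_t}\ge\tilde\theta_{t,\bar a_t}$, and the definitions of ``unsaturated'' and $g_t$ give $\Delta_{t,a_t}=O(g_t)\,(s_{t,a_t}+s_{t,\bar a_t})$. Since $s_{t,\bar a_t}\le s_{t,a_t}$ whenever $a_t$ is unsaturated, the previous step gives $\mathrm{E}[s_{t,\bar a_t}\mid\mathcal{F}_{t-1}]\le (p-t^{-2})^{-1}\mathrm{E}[s_{t,a_t}\mid\mathcal{F}_{t-1}]$, so $\mathrm{E}[\Delta_{t,a_t}\mathbb{I}(E^{\mu}(t)\cap E^{\theta}(t))\mid\mathcal{F}_{t-1}]=O(g_t/p)\,\mathrm{E}[s_{t,a_t}\mid\mathcal{F}_{t-1}]+O(g_t/t^2)$. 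I would then set $X_t=\Delta_{t,a_t}\mathbb{I}(E^{\mu}(t)\cap E^{\theta}(t))$ and form the compensated partial sums $Y_t=\sum_{w\le t}(X_w-c\,g_w s_{w,a_w}/p-c\,g_w/w^2)$, which by the last display is a super-martingale with increments of size $O(g_T)$ (here $s_{t,a}\le 1$ and the gaps are bounded). Azuma--Hoeffding then gives, with probability $1-\delta$, $\sum_t X_t\le (c\,g_T/p)\sum_t s_{t,a_t}+O(g_T\sqrt{T\ln(1/\delta)})$; the regret from the complementary events is lower order ($E^{\mu}$ fails with total probability $\le\delta$, and the $E^{\theta}(t)$ failures contribute $O(1)$ in expectation), so $R_T=\sum_t\Delta_{t,a_t}$ satisfies the same bound. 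Finally Cauchy--Schwarz and Lemmas~\ref{lemma:sumx2}--\ref{lemma:det} (with $A_t\equiv B_t$, $\lambda=1$) give $\sum_t s_{t,a_t}\le\sqrt{T\sum_t\|\ve{x}_{t,a_t}\|_{B_t^{-1}}^2}=O(\sqrt{dT\ln T})$; substituting $g_T=O(v\sqrt{d\ln(TK)}+\ell_T)$ and $v=R\sqrt{\tfrac{24}{\epsilon}d\ln(T/\delta)}$ and collecting the powers of $d$, $1/\epsilon$ and $T$ gives $R_T=O\!\left(\tfrac{d^2}{\epsilon}\sqrt{T^{1+\epsilon}}\,\ln(Td)\ln\tfrac{1}{\delta}\right)$, the extra $T^{\epsilon/2}$ and the loss from $d^{3/2}$ to $d^2$ being artefacts of the loose bookkeeping in this last step.

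I expect the main obstacle to be the anti-concentration / ``optimism in expectation'' step: showing that the posterior sample for the optimal arm exceeds its true mean with a \emph{constant} probability, uniformly in $t$ and in the (unknown) bias of $\hat{\ve{\mu}}_t$, and then pushing this through the saturated/unsaturated split to lower-bound the probability of playing an unsaturated arm. A close second is getting the super-martingale right --- the compensating terms must be chosen so the increments are bounded, which is exactly why the regret has to be truncated to the good events before Azuma--Hoeffding is applied.
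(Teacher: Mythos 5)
The survey itself does not prove this theorem: it is stated as a citation of \citet{agrawal2012thompson}, with no argument given, so there is no in-paper proof to compare against. Your sketch is, in substance, a faithful reconstruction of the argument in the cited source: the split into saturated and unsaturated arms, the pair of concentration events (one for the ridge estimate via Theorem~\ref{HighProbThm}, one for the Gaussian posterior sample), the anti-concentration step giving optimism with constant probability, the charge of the instantaneous regret to $g_t(s_{t,a_t}+s_{t,\bar a_t})$, the super-martingale plus Azuma--Hoeffding to move from conditional expectations to a high-probability statement, and the elliptical-potential bound $\sum_t s_{t,a_t}=O(\sqrt{dT\ln T})$ via the analogues of Lemmas~\ref{lemma:sumx2} and~\ref{lemma:det}. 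The logic of each step is right, including the subtle point that $\bar a_t$ and the saturation partition are $\mathcal{F}_{t-1}$-measurable, so $s_{t,\bar a_t}\Pr(a_t\ \text{unsaturated}\mid\mathcal{F}_{t-1})\le \mathrm{E}[s_{t,a_t}\mid\mathcal{F}_{t-1}]$.

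The one place you are too quick is the final accounting, and it is worth being precise about why. The factors $1/\epsilon$ and $T^{\epsilon/2}$ in the stated bound are not ``artefacts of loose bookkeeping'': they enter through the algorithm's own choice $v=R\sqrt{\tfrac{24}{\epsilon}d\ln(t/\delta)}$, which is exactly what makes the ratio $\ell_t/v$ a constant ($\approx\sqrt{\epsilon/8}$) so that the Gaussian lower tail gives a time-uniform constant $p$; the price is that $g_t=\ell_t+v\sqrt{4d\ln(tK)}$ inflates by $\sqrt{d/\epsilon}$, and the remaining powers of $T^{\epsilon}$ and $1/\epsilon$ are absorbed generously into the stated bound. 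Carried out carefully, your computation actually yields something of the form $O\bigl(\tfrac{d^{3/2}}{\sqrt{\epsilon}}\sqrt{T}\,\mathrm{polylog}(T,K,1/\delta)\bigr)$, which \emph{implies} the theorem as stated (the stated bound is weaker in both $d$ and $T$), so this looseness is not a gap in correctness --- but you should say explicitly that you are proving a stronger bound and then dominating it by the stated one, rather than attributing the discrepancy to sloppiness. The only genuinely unproved ingredient at the level of detail given is the inequality $\Delta_{t,a_t}\le g_t(s_{t,a_t}+s_{t,\bar a_t})$ on the good events, which needs the chain $\theta_{t,a_t^*}-\theta_{t,a_t}\le \theta_{t,a_t^*}-\theta_{t,\bar a_t}+ (\tilde\theta_{t,\bar a_t}-\tilde\theta_{t,a_t}) + g_t(s_{t,\bar a_t}+s_{t,a_t})\le g_t s_{t,\bar a_t}+0+g_t(s_{t,\bar a_t}+s_{t,a_t})$ using that $\bar a_t$ is unsaturated and $a_t$ maximizes $\tilde\theta$; you assert this but should write it out.
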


\citet{chapelle2011empirical} described a way of doing Thompson sampling with
logistic regression. Let $\ve{w}$ be the weight vector of logistic regression
and $w_i$ be the $i^{th}$ element. Each $w_i$ follows a Gaussian distribution
$w_i \sim \mathcal{N}(m_i, q_i^{-1})$. They apply Laplace approximation to get
the posterior distribution of the weight vector, which is a Gaussian
distribution with diagonal covariance matrix. The algorithm is described in
Algorithm \ref{algo:tslr}.
\begin{algorithm}[]
\caption{Thompson Sampling with Logistic Regression}
\label{algo:tslr}
\begin{algorithmic}
  \State Require $\lambda \ge 0$, batch size $S \ge 0$
  \State Define $\mathcal{D} = \{\}$, $m_i = 0, q_i = \lambda$ for all elements
  in the weight vector $\ve{w} \in \mathrm{R}^d$.
  \For{each batch $b=1, ..., B$} \quad $\triangleright$ Process in mini-batch style
  \State Draw $\ve{w}$ from posterior distribution $\mathcal{N}(\ve{m}, \text{diag}(\ve{q})^{-1})$
  \For{$t=1, ..., S$}
  \State Receive context $\ve{x}_{b, t, j}$ for each article $j$.
  \State Select arm $a_t=\argmax_j 1/(1+\exp(-\ve{x}_{b,t,j}\cdot \ve{w}))$
  \State Receive reward $r_t \in \{0, 1\}$
  \State $\mathcal{D} = \mathcal{D} \cup \{x_{b, t, a_t}, a_t, r_t\}$
  \EndFor
  \State Solve the following optimization problem to get $\ve{\bar{w}}$
  \begin{align*}
    \frac{1}{2} \sum_{i=1}^d q_i(\bar{w}_i-m_i)^2 + \sum_{(\ve{x}, r) \in \mathcal{D}} \ln
    (1+\exp(-r \ve{\bar{w}} \cdot \ve{x}))
    \end{align*}
  \State Set prior for next block
  \State \begin{gather*}
  m_i = \bar{w}_i\\
  q_i = q_i + \sum_{(\ve{x}, r) \in \mathcal{D}} x_{i}^2 p_j(1-p_j), p_j =
  (1+\exp(-\ve{\bar{w}} \cdot
  \ve{x}))^{-1}\end{gather*}
  \EndFor
\end{algorithmic}
\end{algorithm}
\citet{chapelle2011empirical} didn't give a regret bound for this algorithm, but
showed that it achieve good empirical results on display advertising.

\subsubsection{SpectralUCB}

\subsection{Kernelized Stochastic Contextual Bandits}
Recall that in section \ref{sec:scb_linear} we assume a linear relationship
between the arm's features and the expected reward: $\mathrm{E}(r) =
x^\top \theta^*$; however, linearity assumption is not always
true. Instead, in this section we assume the expected reward of an arm is given
by an unknown (possibly non-linear) reward function $f:\mathrm{R}^d \rightarrow
\mathrm{R}$:
\begin{align}
r = f(x) + \epsilon \label{eq:kscb_fx}
\end{align}
where $\epsilon$ is a noise term with mean zero. We further assume that $f$ is
from a Reproducing Kernel Hilbert Spaces (RKHS) corresponding to some kernel
$k(\cdot,\cdot)$. We define $\phi:\mathrm{R}^d \rightarrow \mathcal{H}$ as the
mapping from the domain of $x$ to the RKHS $\mathcal{H}$, so that $f(x) =
\langle f, \phi(x) \rangle_{\mathcal{H}}$. In the following we talk about
GP-UCB/CGP-UCB and KernelUCB. GP-UCB/CGP-UCB is a Bayesian approach that puts a
Gaussian Process prior on $f$ to encode the assumption of smoothness, and
KernelUCB is a Frequentist approach that builds estimators from linear
regression in RKHS $\mathcal{H}$ and choose an appropriate regularizer to encode
the assumption of smoothness.
\subsubsection{GP-UCB/CGP-UCB}
The Gaussian Process can be viewed as a prior over a regression function.
\begin{align*}
f(x) \sim GP(\mu(x), k(x, x'))
\end{align*}
where $\mu(x)$ is the mean function and $k(x,x')$ is the covariance function:
\begin{align*}
\mu(x)&=\mathrm{E}(f(x)) \\
k(x,x')&=\mathrm{E}\left((f(x)-\mu(x))(f(x')-\mu(x'))\right)
\end{align*}
Assume the noise term $\epsilon$ in Equation (\ref{eq:kscb_fx}) follows
Gaussian distribution $\mathcal{N}(0, \sigma^2)$ with some variance
$\sigma^2$. Then, given any finite points $\{x_1, ..., x_N\}$, their response
$\boldsymbol{r}_N = [r_1, ..., r_N]^\top$ follows multivariate Gaussian distribution:
\begin{align*}
\boldsymbol{r}_N \sim \mathcal{N}([\mu(x_1), ..., \mu(x_N)]^\top, K_N+\sigma^2\mathrm{I}_N)
\end{align*}
where $(K_N)_{ij} = k(x_i, x_j)$.
It turns out that the posterior distribution of $f$ given $\{x_1, ..., x_N\}$ is
also a Gaussian Process distribution $GP(\mu_N(x), k_N(x, x'))$ with
\begin{align*}
\mu_N(x) &= \boldsymbol{k}_N(x)^\top(K_N+\sigma^2 \mathrm{I})^{-1}\boldsymbol{r}_N \\
k_N(x, x') &=  k(x, x') - \boldsymbol{k}_N(x)^\top(K_N + \sigma^2 \mathrm{I})^{-1}\boldsymbol{k}_N(x')
\end{align*}
where $\boldsymbol{k}_N(x) = [k(x_1, x), ..., k(x_N, x)]^\top$.

GP-UCB~\citep{srinivas2010gaussian} is a Bayesian approach to infer the unknown
reward function $f$. The domain of $f$ is denoted by
$\mathcal{D}$. $\mathcal{D}$ could be a finite set containing $|\mathcal{D}|$
$d-$dimensional vectors, or a infinite set such as $\mathrm{R}^d$. GP-UCB puts a
Gaussian process prior on $f$ : $f \sim GP(\mu(x), k(x, x'))$, and it updates
the posterior distribution of $f$ after each observation. Inspired by the
UCB-style algorithm~\citep{auer2002finite}, it selects an point $x_t$ at time
$t$ with the following strategy:
\begin{align}
x_t = \argmax_{x \in \mathcal{D}} \mu_{t-1}(x) + \sqrt{\beta_t} \sigma_{t-1}(x) \label{eq:kscb_argmax}
\end{align}
where $\mu_{t-1}(x)$ is the posterior mean of $x$, $\sigma_{t-1}^2(x) =
k_{t-1}(x,x)$, and $\beta_t$ is appropriately chosen
constant. (\ref{eq:kscb_argmax}) shows the exploration-exploitation tradeoff of
GP-UCB: large $\mu_{t-1}(x)$ represents high estimated reward, and large
$\sigma_{t-1}(x)$ represents high uncertainty. GP-UCB is described in Algorithm
\ref{algo:gpucb}.
\begin{algorithm}[]
\caption{GP-UCB}
\label{algo:gpucb}
\begin{algorithmic}
\Require $\mu_0=0$, $\sigma_0$, kernel $k$
\For{$t=1,2, ...$}
\State select arm $a_t = \argmax_{a \in \mathcal{A}} \mu_{t-1}(x_{t, a}) + \sqrt{\beta_t} \sigma_{t-1}(x_{t,a})$
\State receive reward $r_{t}$
\State Update posterior distribution of $f$; obtain $\mu_t$ and $\sigma_t$
\EndFor
\end{algorithmic}
\end{algorithm}

The regret of GP-UCB is defined as follow:
\begin{align}
R_T = \sum_{t=1}^T f(x^*) - f(x_t) \label{eq:gpucb_regret}
\end{align}
where $x^* = \argmax_{x \in \mathcal{D}} f(x)$.
From a bandits algorithm's perspective, we can view each data point $x$ in
GP-UCB as an arm; however, in this case the features of an arm won't change
based on the contexts observed, and the best arm is always the same. We can also
view each data point $x$ as a feature vector that encodes both the arm and
context information, however, in that case $x^*$ in Equation
(\ref{eq:gpucb_regret}) becomes $x^* = \argmax_{x \in \mathcal{D}_t} f(x)$ where
$\mathcal{D}_t$ is the domain of $f$ under current context.

Define $\mathrm{I}(\boldsymbol{r}_A; f)=\mathrm{H}(\boldsymbol{r}_A) -
\mathrm{H}(\boldsymbol{r}_A|f)$ as the mutual information between $f$ and
rewards of a set of arms $A \in \mathcal{D}$. Define the maximum information
gain $\gamma_T$ after T rounds as
\begin{align*}
\gamma_T = \max_{A:|A|=T} \mathrm{I}(\boldsymbol{r}_A; f)
\end{align*}
Note that $\gamma_T$ depends on the kernel we
choose. \citet{srinivas2010gaussian} showed that if
$\beta_t=2\ln(Kt^2\pi^2/6\delta)$, then GP-UCB achieves a regret bound of
$\tilde{O}\left( \sqrt{T\gamma_T \ln K} \right)$ with high
probability. \citet{srinivas2010gaussian} also analyzed the agnostic setting,
that is, the true function $f$ is not sampled from a Gaussian Process prior, but
has bounded norm in RKHS:
\begin{theorem}
Suppose the true $f$ is in the RKHS $\mathcal{H}$ corresponding to kernel $k(x,
x')$. Assume $\langle f, f \rangle_{\mathcal{H}} \le B$. Let $\beta_t=2B+300\gamma_t
\ln^3(t/\delta)$, let the prior be $GP(0,k(x, x'))$, and the noise model be
$\mathcal{N}(0, \sigma^2)$. Assume the true noise $\epsilon$ has zero mean
and is bounded by $\sigma$ almost surely. Then the regret bound of GP-UCB is
\begin{align*}
R_T = \tilde{O}\left(\sqrt{T}\left( B\sqrt{\gamma_T}+\gamma_T \right) \right)
\end{align*}
with high probability.
\end{theorem}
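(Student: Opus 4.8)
The plan is to run the optimism-in-the-face-of-uncertainty argument already used for LinUCB (Theorem~\ref{RegretThm}), but carried out in the RKHS $\mathcal{H}$ rather than in $\mathrm{R}^d$. Three ingredients suffice: (i) a confidence bound showing that, with probability at least $1-\delta$, $|\mu_{t-1}(x) - f(x)| \le \sqrt{\beta_t}\,\sigma_{t-1}(x)$ holds simultaneously for all $t \ge 1$ and all $x \in \mathcal{D}$; (ii) a per-round regret bound $f(x^*) - f(x_t) \le 2\sqrt{\beta_t}\,\sigma_{t-1}(x_t)$, immediate from (i) and the selection rule (\ref{eq:kscb_argmax}); and (iii) a bound $\sum_{t=1}^T \sigma_{t-1}^2(x_t) = O(\gamma_T)$ tying the cumulative posterior variance to the maximum information gain, which plays the role that Lemma~\ref{lemma:sumx2} played in the linear analysis. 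Given (i)--(iii), a single application of Cauchy--Schwarz produces the claimed bound.

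The substantive step is (i), and it is the only place the agnostic hypothesis enters. Since the true $f$ is a fixed element of $\mathcal{H}$ with $\langle f, f \rangle_{\mathcal{H}} \le B$ rather than a sample from the $GP(0, k)$ prior that the algorithm assumes, I would decompose $\mu_{t-1}(x) - f(x)$ into a deterministic approximation-error term and a stochastic term. The approximation-error term measures the mismatch between the posterior mean and $f$ with the noise switched off; it is controlled using only $\|f\|_{\mathcal{H}}^2 \le B$ and contributes the $2B$ summand of $\beta_t$. The stochastic term is a weighted sum of the zero-mean noises $\epsilon_1, \dots, \epsilon_{t-1}$, which form a bounded martingale-difference sequence; I would bound it with a self-normalized concentration inequality whose variance proxy is exactly $\sigma_{t-1}^2(x)$, and then expand the associated normalizing determinant in terms of the information gain, which surfaces a factor $O(\sqrt{\gamma_t \ln^3(t/\delta)})$ and hence the $300\,\gamma_t \ln^3(t/\delta)$ summand of $\beta_t$. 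A union bound over $t$ (together with a discretization of $\mathcal{D}$ when it is infinite) then yields the uniform confidence event. This is the main obstacle: the prior/likelihood mismatch has to be absorbed quantitatively, and the martingale bound must be chained with the log-determinant estimate so that $\gamma_t$ appears with the right power of $\ln(t/\delta)$.

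Everything after that is bookkeeping. On the confidence event, the rule (\ref{eq:kscb_argmax}) gives $f(x^*) \le \mu_{t-1}(x^*) + \sqrt{\beta_t}\,\sigma_{t-1}(x^*) \le \mu_{t-1}(x_t) + \sqrt{\beta_t}\,\sigma_{t-1}(x_t) \le f(x_t) + 2\sqrt{\beta_t}\,\sigma_{t-1}(x_t)$, so the instantaneous regret at time $t$ is at most $2\sqrt{\beta_t}\,\sigma_{t-1}(x_t)$. Since $\gamma_t$ and $\ln^3(t/\delta)$ are nondecreasing, so is $\beta_t$, and Cauchy--Schwarz gives $R_T \le 2\sqrt{\beta_T}\sqrt{T \sum_{t=1}^T \sigma_{t-1}^2(x_t)}$. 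For the variance sum I would use the identity $\mathrm{I}(\boldsymbol{r}_{A_T}; f) = \tfrac12 \sum_{t=1}^T \ln\!\left(1 + \sigma^{-2}\sigma_{t-1}^2(x_t)\right)$ for the played set $A_T = \{x_1, \dots, x_T\}$, together with the elementary fact that $s \le \sigma^{-2}\ln(1+s)/\ln(1+\sigma^{-2})$ for $s \in [0, \sigma^{-2}]$ (valid because $k(x,x)\le 1$ WLOG, so $\sigma_{t-1}^2(x_t)\le 1$); this gives $\sum_{t=1}^T \sigma_{t-1}^2(x_t) \le \tfrac{2}{\ln(1+\sigma^{-2})}\,\gamma_T$. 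Substituting $\beta_T = 2B + 300\,\gamma_T \ln^3(T/\delta)$ and using $\sqrt{a+b}\le \sqrt a + \sqrt b$ yields $R_T = \tilde{O}\!\left(\sqrt{T}\,(\sqrt{B\gamma_T} + \gamma_T)\right)$; since one may assume $B \ge 1$, this is in particular $\tilde{O}\!\left(\sqrt{T}\,(B\sqrt{\gamma_T} + \gamma_T)\right)$, where $\tilde{O}$ absorbs the $\ln^{3/2}(T/\delta)$ and $\ln(1+\sigma^{-2})$ factors.
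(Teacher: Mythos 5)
The survey itself does not prove this theorem---it simply quotes the agnostic-case result of \citet{srinivas2010gaussian}---so there is no in-paper proof to compare against. Your outline is, however, a faithful reconstruction of the architecture of the original proof: the uniform confidence bound $|\mu_{t-1}(x)-f(x)|\le\sqrt{\beta_t}\,\sigma_{t-1}(x)$ with $\beta_t$ split into a bias term controlled by $\|f\|_{\mathcal{H}}^2\le B$ and a martingale term whose self-normalized/log-determinant analysis produces the $300\,\gamma_t\ln^3(t/\delta)$ summand; the optimism step giving instantaneous regret $2\sqrt{\beta_t}\,\sigma_{t-1}(x_t)$; the information-gain identity $\mathrm{I}(\boldsymbol{r}_{A_T};f)=\tfrac12\sum_t\ln(1+\sigma^{-2}\sigma_{t-1}^2(x_t))$ plus the elementary ratio inequality to get $\sum_t\sigma_{t-1}^2(x_t)\le\frac{2}{\ln(1+\sigma^{-2})}\gamma_T$; and Cauchy--Schwarz. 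Your closing observation is also right that the argument naturally yields $\tilde{O}(\sqrt{T}(\sqrt{B\gamma_T}+\gamma_T))$, which is the form in the cited source and implies the $B\sqrt{\gamma_T}$ form stated here once $B\ge1$. Two caveats keep this a sketch rather than a proof: step (i), which you correctly flag as the crux, is described but not executed (in the original the uniformity over $x$ comes for free from a Cauchy--Schwarz bound in the RKHS, so the discretization of $\mathcal{D}$ you mention is not actually needed in the agnostic case, and the precise mechanism producing the $\ln^3$ power is the genuinely delicate part); and the information-gain identity should be noted to be a property of the assumed Gaussian model, which is why it survives even though the true noise is only bounded, not Gaussian. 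As a blind reconstruction of the intended argument, it is correct.
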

\citet{srinivas2010gaussian} also showed the bound of $\gamma_T$ for some common
kernels. For finite dimensional linear kernel $\gamma_T=\tilde{O}(d\ln T)$; for
squared exponential kernel $\gamma_T=\tilde{O}((\ln T)^{d+1})$.

CGP-UCB~\citep{krause2011contextual} extends GP-UCB and explicitly model the
contexts. It defines a context space $\mathcal{Z}$ and an arm space
$\mathcal{D}$; Both $\mathcal{Z}$ and $\mathcal{D}$ can be infinite
sets. CGP-UCB assumes the unknown reward function $f$ is defined over the join
space of contexts and arms:
\begin{align*}
r = f(z, x) + \epsilon
\end{align*}
where $z \in \mathcal{Z}$ and $x \in \mathcal{D}$. The algorithm framework is
the same as GP-UCB except that now we need to choose a kernel k over the joint
space of $\mathcal{Z}$ and $\mathcal{D}$. \citet{krause2011contextual} proposed
one possible kernel $k(\{z, x\}, \{z', x'\}) = k_Z(z, z')k_\mathcal{D}(x,
x')$. We can use different kernels for the context spaces and arm spaces.

\subsubsection{KernelUCB}
KernelUCB~\citep{valko2013finite} is a Frequentist approach to learn the unknown
reward function $f$. It estimates $f$ using regularized linear regression in
RKHS corresponding to some kernel $k(\cdot, \cdot)$. We can also view KernelUCB as a
Kernelized version of LinUCB.

Assume there are $K$ arms in the arm set $\mathcal{A}$, and the best arm at time
$t$ is $a^*=\argmax_{a\in \mathcal{A}}f(x_{t,a})$, then the regret is defined as
\begin{align*}
R_T = \sum_{t=1}^T f(x_{t, a_t^*}) - f(x_{t,a_t})
\end{align*}
We apply kernelized ridge regression to estimate $f$. Given the arms pulled
$\{x_1, ..., x_{t-1}\}$ and
their rewards $\boldsymbol{r}_t=[r_1, ..., r_{t-1}]$ up to time $t-1$,
define the dual variable 
\begin{align*}
\alpha_t = (K_t + \gamma \mathrm{I}_t)^{-1} \boldsymbol{r}_t
\end{align*}
where $(K_t)_{ij} = k(x_i, x_j)$.
Then the predictive value of a given arm $x_{t,a}$ has the following closed form
\begin{align*}
\hat{f}(x_{t,a}) = k_t(x_{t,a})^\top \alpha_t
\end{align*}
where $k_t(x_{t,a}) = [k(x_1, x_{t,a}), ..., k(x_{t-1},x_{t,a})]^\top$. Now we have the
predicted reward, we need to compute the half width of the
confidence interval of the predicted reward. Recall that in LinUCB such half
width is defined as $\sqrt{x_{t,a} (D_t^\top D_t+\gamma\mathrm{I}_d)^{-1}x_{t,a}}$, similarly
in kernelized ridge regression we define the half width as
\begin{align}
\hat{\sigma}_{t,a} = \sqrt{\phi(x_{t,a})^\top(\Phi_t^T\Phi_t+\gamma
  \mathrm{I})^{-1}\phi(x_{t,a})} \label{eq:kucb_width}
\end{align}
where $\phi(\cdot)$ is the mapping from the domain of $x$ to the RKHS, and
$\Phi_t=[\phi(x_1)^\top, ..., \phi(x_{t-1})^\top]^\top$. In order to compute
(\ref{eq:kucb_width}), \citet{valko2013finite} derived a dual representation of
(\ref{eq:kucb_width}):
\begin{align*}
\hat{\sigma}_{t,a} = \gamma^{-1/2} \sqrt{k(x_{t,a}, x_{t,a}) - k_t(x_{t,a})^\top (K_t+\gamma\mathrm{I})^{-1}k_t(x_{t,a})}
\end{align*}
KernelUCB chooses the action $a_t$ at time $t$ with the following strategy
\begin{align*}
a_t = \argmax_{a\in \mathcal{A}}\left(k_t(x_{t,a})^\top \alpha_t + \eta \hat{\sigma}_{t,a}\right)
\end{align*}
where $\eta$ is the scaling parameter.

To derive regret bound, \citet{valko2013finite} proposed SupKernelUCB based on
KernelUCB, which is similar to the relationship between SupLinUCB and
LinUCB. Since the dimension of $\phi(x)$ may be infinite, we cannot directly
apply LinUCB or SupLinUCB's regret bound. Instead, \citet{valko2013finite}
defined a data dependent quantity $\tilde{d}$ called effective dimension: Let
$(\lambda_{i,t})_{i\ge1}$ denote the eigenvalues of $\Phi_t^\top \Phi_t + \gamma
\mathrm{I}$ in decreasing order, define $\tilde{d}$ as
\begin{align*}
  \tilde{d} = \min\{ j:j\gamma \ln T \ge \Lambda_{T, j} \} \text{ where }
  \Lambda_{T, j} = \sum_{i > j} \lambda_{i, T} - \gamma
\end{align*}
$\tilde{d}$ measures how quickly the eigenvalues of $\Phi^\top_t \Phi_t$ are
decreasing. \citet{valko2013finite} showed that if $\sqrt{\langle f,
  f\rangle_{\mathcal{H}}} \le B$ for some $B$ and if we set regularization
parameter $\gamma = 1/B$ and scaling parameter $\eta=\sqrt{2\ln 2TN/\eta}$, then
the regret bound of SupKernelUCB is $\tilde{O}(\sqrt{B\tilde{d}T})$. They showed
that for linear kernel $\tilde{d} \le d$; Also, compared with GP-UCB,
$\mathrm{I}(\boldsymbol{r}_A; f) \ge \Omega(\tilde{d}\ln\ln T)$, which means
KernelUCB achieves better regret bound than GP-UCB in agnostic case.

\subsection{Stochastic Contextual Bandits with Arbitrary Set of Policies}
\subsubsection{Epoch-Greedy}
Epoch-Greedy \citep{langford2008epoch} treats contextual bandits as a classification problem, and it
solves an empirical risk minimization (ERM) problem to find the currently best
policy. One advantage of Epoch-Greedy is that the hypothesis space can be finite
or even infinite with finite VC-dimension, without an assumption of linear payoff.

There are two key problems Epoch-Greedy need to solve in order to achieve low
regret: 1.\ how to get unbiased estimator from ERM; 2.\ how to balance
exploration and exploitation when we don't know the time horizon $T$. To solve
the first problem, Epoch-Greedy makes explicit distinctions between exploration
and exploitation steps. In an exploration step, it selects an arm uniformly at
random, and the goal is to form unbiased samples for learning. In
an exploitation step, it selects the arm based on the best policy learned from
the exploration samples. Of course, Epoch-Greedy adopts the trick we described
in Section \ref{sec:unbiased_estimator_trick} to get unbiased estimator. For the
second problem, note that since Epoch-Greedy strictly separate exploration and 
exploitation steps, so if it already know $T$ in advance then it should always explore
for the first $T'$ steps, and then exploit for the following $T-T'$ steps. The
reason is that there is no advantage to take an exploitation step before the
last exploration step. However generally $T$ is unknown, so Epoch-Greedy
algorithm runs in a mini-batch style: it runs one epoch at a time, and within
that epoch, it first performs one step of exploration, and followed by several
steps of exploitation. The algorithm is shown in Algorithm
\ref{algo:epochgreedy}. 

\begin{algorithm}[!htp]
\caption{Epoch-Greedy}
\label{algo:epochgreedy}
\begin{algorithmic}
  \Require $s(W_\ell)$: exploitation steps given samples $W_\ell$
  \State Init exploration samples $W_0 = \{\}, t_1 = 1$
  \For{$\ell = 1, 2, ...$}
      \State $t = t_\ell$ \quad $\triangleright$ One step of exploration
      \State Draw an arm $a_t \in \{1, ..., K\}$ uniformly at random
      \State Receive reward $r_{a_t} \in [0, 1]$
      \State $W_\ell = W_{\ell - 1} \cup {(x_t, a_t, r_{a_t})}$
      \State Solve $\hat{h}_\ell = \max_{h\in \mathcal{H}} \sum_{(x, a, r_a) \in W_\ell} \frac{r_a \mathds{I}(h(x) = a)}{1/K}$
      \State $t_{\ell + 1} = t_\ell + s(W_\ell) + 1$
      \For{$t=t_\ell+1, ..., t_\ell - 1$} \quad $\triangleright$ $s(W_\ell)$ steps of exploration
          \State Select arm $a_t = \hat{h}_\ell(x_t)$
          \State Receive reward $r_{a_t} \in [0, 1]$
      \EndFor
  \EndFor
\end{algorithmic}
\end{algorithm}

Different from the EXP4 setting, we do not assume an adversary environment
here. Instead, we assume there is a distribution $P$ over $(x, \ve{r})$,
where $x \in \mathcal{X}$ is the context and $\ve{r} = [r_1, ..., r_K] \in [0, 1]^K$ is the reward
vector. At time $t$, the world reveals context $x_t$, and the algorithm selects
arm $a_t \in \{1, ... K\}$ based on the context, and then the world reveals the
reward $r_{a_t}$ of arm $a_t$. 
The algorithm makes its decision based on a policy/hypothesis $h \in \mathcal{H}:
\mathcal{X} \rightarrow \{1, ..., K\}$. $\mathcal{H}$ is the policy/hypothesis
space, and it can be an infinite space such as all linear hypothesis in dimension
$d$, or it can be a finite space consists of $N = |\mathcal{H}|$ hypothesis. In this
survey we mainly focus on finite space, but it is easy
to extend to infinite space.

Let $Z_t = (x_t, a_t, r_{a_t})$ be the $t^{th}$ exploration sample, and $Z_1^n
= \{Z_1, ..., Z_n\}$. The expected reward of a hypothesis $h$ is
\begin{align*}
R(h) = \mathrm{E}_{(x,\ve{r}) \sim P} [r_{h(x)}]
\end{align*}
so the regret of the algorithm is
\begin{align*}
R_T = \sup_{h\in \mathcal{H}}TR(h) - \mathrm{E} \sum_{t=1}^T r_{a_t}
\end{align*}
The expectation is with respect to $Z_1^n$ and any random variable in the
algorithm.

Denote the data-dependent exploitation step count by
$s(Z_1^n)$, so $s(Z_1^n)$ means that based on all samples $Z_1^n$ from
exploration steps, the algorithm
should do $s(Z_1^n)$ steps exploitation. The hypothesis that maximizing the
empirical reward is
\begin{align*}
\hat{h}(Z_1^n) = \argmax_{h\in \mathcal{H}} \sum_{t=1}^n \frac{r_{a_t} \mathds{I}(h(x_t) = a_t)}{1/K}
\end{align*}
The per-epoch exploitation cost is
defined as 
\begin{align*}
\mu_n(\mathcal{H}, s) &= \mathrm{E}_{Z_1^n}\left(\sup_{h\in \mathcal{H}}R(h) -
  R(\hat{h}(Z_1^n)) \right) s(Z_1^n)
\end{align*}
When $S(Z_1^n) = 1$
\begin{align*}
\mu_n(\mathcal{H}, 1) &= \mathrm{E}_{Z_1^n}\left(\sup_{h\in \mathcal{H}}R(h) -
  R(\hat{h}(Z_1^n)) \right) 
\end{align*}
The per-epoch exploration regret is less or equal to 1 since we only do one step
exploration, so we would want to select a $s(Z_1^n)$ such that the per-epoch exploitation regret
$\mu_n(\mathcal{H}, s) = 1$. Later we will show how to choose $s(Z_1^n)$.
\begin{theorem}
  \label{thm:epochgreedy1}
For all $T$, $n_\ell$, $L$ such that: $T \le L + \sum_{\ell = 1}^L n_\ell$, the
regret of Epoch-Greedy is bounded by
\begin{align*}
R_T \le L + \sum_{\ell=1}^L \mu_\ell (\mathcal{H}, s) + T\sum_{\ell=1}^L
  P[s(Z_1^\ell) < n_\ell]
\end{align*} 
\end{theorem}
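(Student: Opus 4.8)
The plan is to expand the cumulative regret as a sum of per-round instantaneous regrets, group these rounds by epoch, and then separate the epochs into the ``budgeted'' ones $\ell=1,\dots,L$ and the ``overflow'' ones $\ell>L$, which are reached only when some data-dependent count $s(Z_1^\ell)$ comes out smaller than its budget $n_\ell$. Two elementary facts about a single round drive the argument. First, since $r_{a_t}\in[0,1]$ and $\sup_{h\in\mathcal H}R(h)\in[0,1]$, the instantaneous regret $\sup_{h\in\mathcal H}R(h)-r_{a_t}$ is at most $1$ surely; this is all that is needed for exploration rounds and for overflow rounds. Second, for an exploitation round lying in epoch $\ell$ the played arm is $\hat h(Z_1^\ell)(x_t)$ where $x_t$ is a fresh draw from $P$ independent of the exploration sample $Z_1^\ell$ used to build $\hat h(Z_1^\ell)$, so conditioned on $Z_1^\ell$ the expected instantaneous regret equals $\sup_{h\in\mathcal H}R(h)-R(\hat h(Z_1^\ell))\ge 0$. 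The non-negativity here is the key point that lets me replace the (possibly truncated) number of exploitation rounds actually taken in epoch $\ell$ by the full count $s(Z_1^\ell)$ when upper bounding.

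Next I would carry out the accounting. Grouping rounds by epoch, epoch $\ell$ contributes its one exploration round (cost at most $1$) plus its exploitation rounds, each of conditional expected cost $\sup_h R(h)-R(\hat h(Z_1^\ell))$; bounding the number of those exploitation rounds that fall inside the horizon by $s(Z_1^\ell)$ and taking expectations, epoch $\ell$'s contribution to the expected regret is at most $1+\mu_\ell(\mathcal H,s)$ by the very definition of the per-epoch exploitation cost. Summing over $\ell=1,\dots,L$ gives $L+\sum_{\ell=1}^L\mu_\ell(\mathcal H,s)$. For the overflow, note that epoch $L+1$ is reached within the first $T$ rounds only if $\sum_{\ell=1}^L(1+s(Z_1^\ell))<T$, and since $T\le L+\sum_{\ell=1}^L n_\ell$ this forces $s(Z_1^\ell)<n_\ell$ for at least one $\ell\le L$; on the complementary event no epoch past $L$ is ever touched. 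Hence the total regret incurred across all epochs $\ell>L$ is at most $T$ whenever $\{\exists\,\ell\le L:\ s(Z_1^\ell)<n_\ell\}$ occurs and is $0$ otherwise, so in expectation it is at most $T\,\mathrm P[\exists\,\ell\le L:\ s(Z_1^\ell)<n_\ell]\le T\sum_{\ell=1}^L\mathrm P[s(Z_1^\ell)<n_\ell]$ by a union bound. Adding the budgeted part and the overflow part yields the claimed inequality.

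I expect the main obstacle to be exactly the bookkeeping around the random, data-dependent epoch boundaries: the number of epochs needed to reach round $T$ and the length of the last, partially completed epoch are themselves functions of $Z_1^1,Z_1^2,\dots$, so one has to be careful that bounding the realized exploitation count by $s(Z_1^\ell)$ is a genuine over-estimate (this is where $\sup_h R(h)-R(\hat h)\ge 0$ is used, together with measurability of the truncated count with respect to $Z_1^\ell$) and that the overflow mass is collapsed into the single correction term $T\sum_\ell\mathrm P[s(Z_1^\ell)<n_\ell]$ without being double-counted against the $\mu_\ell$ terms. Everything else reduces to linearity of expectation, the independence of exploitation contexts from the exploration sample, and a union bound.
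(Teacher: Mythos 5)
Your proof is correct and follows essentially the same route as the paper's: a case split on whether $s(Z_1^\ell) \ge n_\ell$ holds for all $\ell \le L$, charging $1 + \mu_\ell(\mathcal{H},s)$ to each of the first $L$ epochs and at most $T$ to the overflow event, finished with a union bound over $\ell$. Your bookkeeping is in fact slightly cleaner than the paper's --- you bound the first-$L$-epochs contribution unconditionally, using the non-negativity of $\sup_{h}R(h) - R(\hat h(Z_1^\ell))$ to extend the realized exploitation count to $s(Z_1^\ell)$, and bound the overflow separately by $T\,P[\exists\,\ell\le L: s(Z_1^\ell) < n_\ell]$, whereas the paper multiplies the unconditional quantity $\mu_\ell$ by $\prod_{\ell}P[s(Z_1^\ell)\ge n_\ell]$ as though it were a conditional expectation.
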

The above theorem means that suppose we only consider the first $L$ epochs, and for
each epoch $\ell$, we use a sample independent variable $n_\ell$ to bound
$S(Z_1^\ell)$, then the regret up to time $T$ is bounded by the above.
\begin{proof}
based on the relationship between $s(Z_1^n)$ and $n_\ell$, one of the following
two events will occur:
\begin{enumerate}
\item $s(Z_1^\ell) < n_\ell$ for some $\ell = 1,..., L$
\item $s(Z_1^\ell) \ge n_\ell$ for all $\ell = 1,...,L$
\end{enumerate}
In the second event, $n_\ell$ is the lower bound for $s(Z_1^\ell)$, so $T \le L
+ \sum_{\ell=1}^L n_\ell \le \L + \sum_{\ell=1}^L s(Z_1^\ell)$, so the epoch
that contains $T$ must be less or equal to epoch $L$, hence the regret is less
or equal to the sum of the regret in the first $L$ epochs. Also within each
epoch, the algorithm do one step exploration and then $s_{Z_1^\ell}$ exploitation, so
the regret bound when event 2 occurs is
\begin{align*}
R_{T, 2} \le L + \sum_{\ell=1}^L \mu_\ell(\mathcal{H}, s)
\end{align*}
The regret bound when event 1 occurs is $R_{T, 1} \le T$ because the reward $r \in
[0, 1]$. Together we get the regret bound
\begin{align*}
R_T &\le T \sum_{\ell=1}^L P[s(Z_1^\ell) < n_\ell] + \prod_{\ell=1}^L
      P[s(Z_1^\ell) \ge n_\ell] \sum_{\ell=1}^L (1 +
  \mu_\ell(\mathcal{H}, s))  \\
& \le T \sum_{\ell=1}^L P[s(Z_1^\ell) < n_\ell] + L + \sum_{\ell=1}^L
  \mu_\ell(\mathcal{H}, s)
\end{align*}
\end{proof}

Theorem \ref{thm:epochgreedy1} gives us a general bound, we now derive a
specific problem-independent bound based on that.

One essential thing we need to do is to bound $\sup_{h \in \mathcal{H}} R(h) - R(\hat{h}(Z_1^n))$.
If hypothesis space $\mathcal{H}$ is finite, we can use finite class uniform
bound, and if $\mathcal{H}$ is infinite, we can use VC-dimension or other
infinite uniform bound techniques. The two proofs are similar, and here to
consistent with the original paper, we assume $\mathcal{H}$ is a finite space.

\begin{theorem}[Bernstein]
If $P(|Y_i| \le c) = 1$ and $\mathrm{E}(Y_i) = 0$, then for any $t > 0$,
\begin{align*}
P(|\overline{Y}_n| > \epsilon) \le 2\exp\left\{ -\frac{n\epsilon^2}{2\sigma^2+2c\epsilon/3} \right\}
\end{align*}
where $\sigma^2 = \frac{1}{n}\sum_{i=1}^n Var(Y_i)$.
\end{theorem}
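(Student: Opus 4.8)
The plan is to follow the classical Chernoff-bounding argument, tailored to the Bernstein regime where control on the variance, not just the range, is exploited. First I would reduce the two-sided estimate to a one-sided one: it suffices to bound $P(\overline{Y}_n > \epsilon)$, since applying the same estimate to $-Y_i$ (which also has mean zero and satisfies $|-Y_i| \le c$) and adding the two tail probabilities produces the stated factor of $2$. Writing $S_n = \sum_{i=1}^n Y_i$, Markov's inequality applied to $e^{\lambda S_n}$ for $\lambda > 0$, together with independence, gives
\begin{align*}
P(S_n > n\epsilon) \le e^{-\lambda n \epsilon} \prod_{i=1}^n \mathrm{E}\left[e^{\lambda Y_i}\right].
\end{align*}

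The heart of the proof is the moment generating function estimate for each summand. Since $\mathrm{E}(Y_i) = 0$ and $|Y_i| \le c$, I would expand $e^{\lambda Y_i} = 1 + \lambda Y_i + \sum_{k \ge 2} \frac{\lambda^k Y_i^k}{k!}$, take expectations so the linear term drops, and bound $\mathrm{E}[Y_i^k] \le \mathrm{E}[Y_i^2]\, c^{k-2} = \mathrm{Var}(Y_i)\, c^{k-2}$ for $k \ge 2$. Summing the resulting series, using $k! \ge 2\cdot 3^{k-2}$ so that the tail collapses into a geometric series, gives for $0 < \lambda < 3/c$
\begin{align*}
\mathrm{E}\left[e^{\lambda Y_i}\right] \le 1 + \frac{\lambda^2 \mathrm{Var}(Y_i)}{2\left(1 - \lambda c/3\right)} \le \exp\left(\frac{\lambda^2 \mathrm{Var}(Y_i)}{2\left(1 - \lambda c/3\right)}\right),
\end{align*}
the last step being $1 + x \le e^x$. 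Multiplying over $i$ and recalling $\sum_i \mathrm{Var}(Y_i) = n\sigma^2$ yields
\begin{align*}
P(S_n > n\epsilon) \le \exp\left(-\lambda n \epsilon + \frac{\lambda^2 n \sigma^2}{2\left(1 - \lambda c/3\right)}\right).
\end{align*}

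Finally I would optimize the exponent over $\lambda \in (0, 3/c)$. The choice $\lambda = \epsilon / (\sigma^2 + c\epsilon/3)$ is admissible and, after substitution and simplification, produces exponent $-\frac{n\epsilon^2}{2(\sigma^2 + c\epsilon/3)} = -\frac{n\epsilon^2}{2\sigma^2 + 2c\epsilon/3}$, exactly the claimed bound for the one-sided event; combining it with the mirror-image bound for $-\overline{Y}_n$ finishes the argument. I expect the main obstacle to be the moment generating function step — carrying out the series bound on $\mathrm{E}[e^{\lambda Y_i}]$ carefully enough to land the constant $3$ in the denominator (the factorial estimate $k! \ge 2\cdot 3^{k-2}$ is precisely what makes the geometric series close up correctly), and then checking that the proposed $\lambda$ is feasible. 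The reduction to one side and the closing algebra are routine.
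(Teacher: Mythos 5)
Your proof is correct. Note that the paper itself gives no proof of this statement: it quotes Bernstein's inequality as a standard tool and immediately applies it in the Epoch-Greedy regret analysis, so there is no in-paper argument to compare against. Your derivation is the classical Chernoff--Bernstein route, and the details check out: the moment bound $\mathrm{E}[Y_i^k]\le c^{k-2}\mathrm{Var}(Y_i)$, the factorial estimate $k!\ge 2\cdot 3^{k-2}$ closing the geometric series for $\lambda<3/c$, and the choice $\lambda=\epsilon/(\sigma^2+c\epsilon/3)$, which is feasible and makes the exponent collapse exactly to $-n\epsilon^2/(2\sigma^2+2c\epsilon/3)$. The only caveat is that you (correctly) invoke independence of the $Y_i$ to factor the moment generating function; the theorem as stated in the paper omits this hypothesis, but it is clearly intended, since the inequality is false without some such assumption and the paper applies it to independent exploration samples.
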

\begin{theorem}
With probability $1-\delta$, the problem-independent regret of Epoch-Greedy is
\begin{align*}
R_T \le c T^{2/3}(K \ln (|\mathcal{H}|/\delta))^{1/3}
\end{align*}
\end{theorem}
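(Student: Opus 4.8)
The plan is to turn the generic bound of Theorem~\ref{thm:epochgreedy1} into a concrete rate by (i) controlling the excess risk $\sup_{h\in\mathcal H}R(h)-R(\hat h(Z_1^n))$ of the empirical risk minimizer, (ii) using that control to design the exploitation schedule $s(\cdot)$ together with deterministic surrogate counts $n_\ell$ so that the per-epoch exploitation regret $\mu_\ell(\mathcal H,s)$ is $O(1)$, and (iii) counting how many epochs $L$ are needed to exhaust the horizon $T$.

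\emph{Step (i): uniform deviation of the IPS estimator.} For a fixed $h$, the estimator $\hat R_n(h)=\frac1n\sum_{t=1}^n K\,r_{a_t}\mathds{I}(h(x_t)=a_t)$ is an average of i.i.d.\ terms lying in $[0,K]$ with mean $R(h)$ and, since the indicator equals $1$ with probability $1/K$ under uniform exploration and rewards lie in $[0,1]$, variance at most $K$. Bernstein's inequality then gives $P(|\hat R_n(h)-R(h)|>\epsilon)\le 2\exp\{-n\epsilon^2/(2K+\tfrac23K\epsilon)\}$; a union bound over the $N=|\mathcal H|$ hypotheses and the (at most) $L$ epochs shows that with probability at least $1-\delta$, $\sup_{h}|\hat R_\ell(h)-R(h)|\le\epsilon_\ell:=c_1\sqrt{K\ln(NL/\delta)/\ell}$ for all $\ell$ simultaneously, and hence $\sup_h R(h)-R(\hat h(Z_1^\ell))\le 2\epsilon_\ell$ by the standard ERM argument ($R(\hat h)\ge\hat R_\ell(\hat h)-\epsilon_\ell\ge\hat R_\ell(h^*)-\epsilon_\ell\ge R(h^*)-2\epsilon_\ell$). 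The extra $\ln L$ is lower order and will be absorbed into the constant.

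\emph{Steps (ii)--(iii): schedule and epoch count.} Choose the deterministic surrogate $n_\ell:=\lfloor\tfrac12\sqrt{\ell/(c_1^2K\ln(NL/\delta))}\rfloor$, so that $2\epsilon_\ell\, n_\ell\le1$, and let the algorithm set $s(Z_1^\ell)\ge n_\ell$ from the realized confidence width so that its exploitation regret in epoch $\ell$ is at most $1$ on the good event; then $P[s(Z_1^\ell)<n_\ell]$ is bounded by the per-epoch failure probability, which is summable. Consequently the first two terms of Theorem~\ref{thm:epochgreedy1} total at most $2L$ and the third term is of lower order, so $R_T\le 2L+o(L)$ on the good event. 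Finally, epoch $\ell$ consumes $1+s(Z_1^\ell)\ge1+n_\ell$ steps, so $L$ epochs cover at least $\sum_{\ell=1}^L(1+n_\ell)=\Theta\!\big(L^{3/2}/\sqrt{K\ln(N/\delta)}\big)$ steps, using $\sum_{\ell\le L}\sqrt\ell=\Theta(L^{3/2})$. Taking $L=\Theta\!\big((T\sqrt{K\ln(N/\delta)})^{2/3}\big)=\Theta\!\big(T^{2/3}(K\ln(|\mathcal H|/\delta))^{1/3}\big)$ makes this at least $T$, which yields $R_T\le c\,T^{2/3}(K\ln(|\mathcal H|/\delta))^{1/3}$.

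The main obstacle I anticipate is the honest, simultaneous calibration of $s(Z_1^\ell)$ and $n_\ell$: $s$ must be large enough that the exploitation phases actually absorb $\Theta(T)$ of the horizon (so only $\approx T^{2/3}(K\ln(|\mathcal H|/\delta))^{1/3}$ epochs occur) yet small enough that each epoch's exploitation regret stays $O(1)$, and the surrogate $n_\ell$ and the deviation probabilities $P[s(Z_1^\ell)<n_\ell]$ must be tuned so that the $T$-prefactor in the third term of Theorem~\ref{thm:epochgreedy1} collapses to lower order. This balancing act is exactly what the Bernstein tail buys, and it is where the $\ln(1/\delta)$ (and a hidden $\ln T$) enters the final bound.
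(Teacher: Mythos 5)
Your proposal is correct and follows essentially the same route as the paper's proof: Bernstein plus a union bound to get the $\sqrt{K\ln(N/\delta)/\ell}$ uniform deviation, the standard ERM argument to bound the excess risk by twice that, the schedule $s(Z_1^\ell)\approx\sqrt{\ell/(K\ln(N/\delta))}$ (with $n_\ell$ chosen to make the third term of Theorem~\ref{thm:epochgreedy1} vanish), and the epoch count $L^{3/2}\sim T\sqrt{K\ln(N/\delta)}$. Your extra union bound over epochs (and the resulting hidden $\ln T$) is actually slightly more careful than the paper, which applies the per-epoch high-probability bound without explicitly accounting for simultaneity across epochs.
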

\begin{proof}
Follow Section \ref{sec:unbiased_estimator_trick}, define $\hat{R}(h) = \frac{1}{n}\sum_i \frac{\mathds{I}(h(x_i) = a_i)r_{a_i}}{1/K}$, the empirical sample reward of a
hypothesis $h$. Also define $\hat{R}_i =
\frac{\mathds{I}(h(x_i)=a_i)r_{a_i}}{1/K}$, then $\mathrm{E}\hat{R}(h) =
R(h)$, and
\begin{align*}
\mathrm{var}(\hat{R}_i) &\le \mathrm{E} (\hat{R}_i^2) \\
&= \mathrm{E} K^2\mathds{I}(h(x_i) = a_i) r_{a_i}^2 \\
&\le \mathrm{E} K^2\mathds{I}(h(x_i) = a_i) \\
&= \mathrm{E} K^2 1/K \\
& = K 
\end{align*}
So the variance is bounded by K and we can apply Bernstein
inequality to get:
\begin{align*}
P(|\hat{R}(h) - R(h)| > \epsilon) \le 2\exp\left\{
  -\frac{n\epsilon^2}{2K+2c\epsilon/3} \right\}
\end{align*}
From union bound we have
\begin{align*}
P\left(\sup_{h\in \mathcal{H}}|\hat{R}(h) - R(h)| > \epsilon \right) \le 2N\exp\left\{
-\frac{n\epsilon^2}{2K+2c\epsilon/3} \right\}
\end{align*}
Set the right-hand side to $\delta$ and solve for $\epsilon$ we have,
\begin{align*}
\epsilon = c\sqrt{\frac{K\ln(N/\delta)}{n}}
\end{align*}
So, with probability $1-\delta$,
\begin{align*}
\sup_{h\in \mathcal{H}}|\hat{R}(h) - R(h)| \le c\sqrt{\frac{K\ln(N/\delta)}{n}}
\end{align*}
Let $\hat{h}$ be the estimated hypothesis, and $h_*$ be the best hypothesis, then with probability $1-\delta$,
\begin{align*}
R(\hat{h}) \le \hat{R}(\hat{h}) + c\sqrt{\frac{K\ln(N/\delta)}{n}} \le
  \hat{R}(h_*) + c\sqrt{\frac{K\ln(N/\delta)}{n}} \le R(h_*) + 2c\sqrt{\frac{K\ln(N/\delta)}{n}}
\end{align*}
So \begin{align*}\mu_\ell(\mathcal{H}, 1) \le 2c
     \sqrt{\frac{K\ln(N/\delta)}{\ell}}\end{align*}
To make $\mu_\ell(\mathcal{H}, s) \le 1$, we can choose 
\begin{align*}s(Z_1^\ell) = \lfloor
c'\sqrt{\ell/(K \ln (N/\delta))} \rfloor\end{align*}
Take $n_\ell = \lfloor c'\sqrt{\ell/(K \ln (N/\delta))} \rfloor$, then
$P[s(Z_1^\ell) < n_\ell] = 0$. So the regret 
\begin{align*}
R_T &\le L + \sum_{\ell=1}^L \mu_\ell(\mathcal{H}, s) \\ 
&\le 2L
\end{align*}
Now the only job is to find the $L$. We can pick a $L$ such that $T \le
\sum_{\ell=1}^L n_\ell$, so $T$ will also satisfy $T \le L + \sum_{\ell=1}^L n_\ell$.
\begin{align*}
T &= \sum_{\ell=1}^L n_\ell \\
&= \sum_{\ell=1}^L \lfloor c' \sqrt{\ell/K\ln (N/\delta)} \rfloor \\
&= c'\lfloor \sqrt{1/(K\ln (N/\delta))} (\sum_{\ell=1}^L \sqrt{\ell}) \rfloor \\
&= c''\lfloor \sqrt{1/(K\ln (N/\delta))} L^{3/2} \rfloor
\end{align*}
So 
\begin{align*}
L = c'' \lfloor (K\ln (N/\delta))^{1/3} T^{2/3} \rfloor \\
R_T  \le  c'''(K\ln (N/\delta))^{1/3} T^{2/3}
\end{align*}
Hence, with probability $1-\delta$, the regret of Epoch-Greedy is $O((K\ln(
N/\delta))^{1/3} T^{2/3})$.
\end{proof}
Compared to EXP4, Epoch-Greedy has a weaker bound but it converge with
probability instead of expectation; Compared to EXP4.P, Epoch-Greedy has a
weaker bound but it does not require the knowledge of $T$.

\subsubsection{RandomizedUCB}
Recall that in EXP4.P and Epoch-Greedy we are always competing with the best
policy/expert, and the optimal regret bound $O(\sqrt{KT\ln N})$ scales only
logarithmically in the number of policies, so we could boost the model
performance by adding more and more potential policies to the policy set $\mathcal{H}$. With
high probability EXP4.P achieves the optimal regret, however the running time
scales linearly instead of logarithmically in the number of experts. As a
results, we are constrained by the computational bottleneck. Epoch-Greedy could
achieve sub-linear running time depending on what assumptions we make about the
$\mathcal{H}$ and ERM, however the regret bound is $O((K\ln N)^{(1/3)}
T^{(2/3)})$, which is sub-optimal. RandomizedUCB \citep{dudik2011efficient}, on the other hand, could
achieve optimal regret while having a polylog(N) running time. One key
difference compared to Epoch-Greedy is that it assigns a non-uniform
distribution over policies, while Epoch-Greedy assigns uniform distribution when
doing exploration. Also RandomizedUCB does not make explicit distinctions
between exploration and exploitation. 

Similar to Epoch-Greedy, let $\mathcal{A}$ be a set of K arms $\{1, ..., K\}$,
and $D$ be an arbitrary distribution over $(x, \ve{r})$, where $x \in
\mathcal{X}$ is the context and $\ve{r} \in [0, 1]^K$ is the reward
vector. Let $D_X$ be the marginal distribution of $D$ over $x$.
At time $t$, the world samples a $(x_t, \ve{r}_t)$ pair and reveals $x_t$ to
the algorithm, the algorithm then picks an arm $a_t \in \mathcal{A}$ and then receives reward
$r_{a_t}$ from the world.
Denote a set of policies ${h: \mathcal{X} \rightarrow \mathcal{A}}$ by
$\mathcal{H}$. The algorithm has access to $\mathcal{H}$ and makes decisions
based on $x_t$ and $\mathcal{H}$. The expected reward of a policy $h \in
\mathcal{H}$ is
\begin{align*}
R(h) = \mathrm{E}_{(x, \ve{r}) \sim D}[r_{h(x)}] 
\end{align*}
and the regret is defined as
\begin{align*}
R_T = \sup_{h\in \mathcal{H}}TR(h) - \mathrm{E} \sum_{t=1}^T r_{a_t}
\end{align*}
Denote the sample at time $t$ by $Z_t = (x_t, a_t, r_{a_t}, p_{a_t})$, where
$p_{a_t}$ is the probability of choosing $a_t$ at time $t$. Denote all the
samples up to time $t$ by $Z_1^t = \{Z_1, ..., Z_t\}$. Then the unbiased
reward estimator of policy $h$ is
\begin{align*}
\hat{R}(h) = \frac{1}{t} \sum_{(x, a, r, p) \in Z_1^t} \frac{r \mathds{I}(h(x) = a)}{p}
\end{align*} 
The unbiased empirical reward maximization estimator at time $t$ is
\begin{align*}
\hat{h}_t = \argmax_{h\in \mathcal{H}} \sum_{(x, a, r, p) \in Z_1^t} \frac{r \mathds{I}(h(x) = a)}{p}
\end{align*}
RandomizedUCB chooses a distribution $P$ over policies $\mathcal{H}$ which in
turn induce distributions over arms. Define
\begin{align*}
W_P(x, a) = \sum_{h(x) = a} P(h)
\end{align*}
be the induced distribution over arms, and
\begin{align*}
W'_{P, \mu}(x, a) = (1-K\mu)W_P(x, a) + \mu
\end{align*}
be the smoothed version of $W_P$ with a minimum probability of $\mu$. Define 
\begin{align*}
R(W) &= \mathrm{E}_{(x, \ve{r})\sim D} [\ve{r} \cdot W(x)] \\
\hat{R}(W) &= \frac{1}{t} \sum_{(x, a, r, p) \in Z_1^t}\frac{rW(x, a)}{p}
\end{align*}
To introduce RandomizedUCB, let's introduce POLICYELIMINATION algorithm
first. POLICYELIMINATION is not practical but it captures the basic ideas behind
RandomizedUCB. The general idea is to find the best policy by empirical
risk. However empirical risk suffers from variance (no bias since we again adopt
the trick in Section \ref{sec:unbiased_estimator_trick}), so POLICYELIMINATION
chooses a distribution $P_t$ over all policies to control the variance of
$\hat{R}(h)$ for all policies, and then eliminate policies that are not likely to be
optimal. 

\begin{algorithm}[!htp]
\caption{POLICYELIMINATION}
\label{algo:policyelimination}
\begin{algorithmic}
\Require $\delta \in (0, 1]$
\State Define $\delta_t = \delta / 4Nt^2$, $b_t = 2\sqrt{\frac{2K\ln
    (1/\delta_t)}{t}}, \mu_t = \min\left\{ \frac{1}{2K}, \sqrt{\frac{\ln(1/\delta_t)}{2Kt}} \right\}$
\For{t=1,..., T}
\State Choose a distribution $P_t$ over $\mathcal{H}_{t-1}$ s.t.\ $\forall \ h \in
\mathcal{H}_{t-1}$
\begin{align}
\mathrm{E}_{x \in D_X} \left[ \frac{1}{W'_{P_t, \mu_t}(x, h(x))} \right] \le 2K
\label{algo:policyelimination:eq1}
\end{align}
\State Sample $a_t$ from $W'_t = W'_{P_t, \mu_t}(x_t, \cdot)$
\State Receive reward $r_{a_t}$
\State Let 
\begin{align}
\mathcal{H}_{t} = \left\{ h \in \mathcal{H}_{t-1} : \delta_t(h) \ge \left(\max_{h'\in
  \mathcal{H}_{t-1}} \delta_t(h')\right) - 2b_t \right\}
  \label{algo:policyelimination:eqelim}
\end{align}
\EndFor
\end{algorithmic}
\end{algorithm}
By Minimax theorem \citet{dudik2011efficient} proved that there always exists a
distribution $P_t$ satisfy the constrain in Algorithm
\ref{algo:policyelimination}.

\begin{theorem}[Freedman-style Inequality] \label{thm:freedman} Let $y_1, ..., y_T$ be a sequence of
  real-valued random variables. Let $V, R \in \mathrm{R}$ such that
  $\sum_{t=1}^T \mathrm{var}[y_t] \le V$, and for all $t$, $y_t -
  \mathrm{E}_t[y_t] \le R$. Then for any $\delta > 0$ such that $R \le
  \sqrt{V/\ln(2/\delta)}$, with probability at least $1-\delta$,
\begin{align*}
\left| \sum_{t=1}^T y_t - \sum_{t=1}^T\mathrm{E}_t[y_t] \right| \le 2 \sqrt{V \ln(2/\delta)}
\end{align*}
\end{theorem}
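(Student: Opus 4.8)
The plan is to prove this by the exponential-moment (Chernoff) method for martingale differences, letting the conditional-variance budget $V$ enter through a tuned multiplier. First I would recenter: set $X_t = y_t - \mathrm{E}_t[y_t]$, so that $\mathrm{E}_t[X_t] = 0$, $X_t \le R$, and (reading $\mathrm{var}[y_t]$ as the conditional variance) $\sum_{t=1}^T \mathrm{E}_t[X_t^2] \le V$. Writing $S_n = \sum_{t=1}^n X_t$, the goal reduces to showing $P(S_T \ge a) \le \delta/2$, and symmetrically for $-S_T$, with $a = 2\sqrt{V \ln(2/\delta)}$.

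The heart of the argument is a one-step moment generating function bound. For $\lambda > 0$ with $\lambda R \le 1$ we have $\lambda X_t \le 1$, so the elementary inequality $e^x \le 1 + x + x^2$ (valid for all $x \le 1$) applied pointwise, followed by taking $\mathrm{E}_t[\cdot]$ and using $\mathrm{E}_t[X_t] = 0$ and $1 + u \le e^u$, gives $\mathrm{E}_t[e^{\lambda X_t}] \le \exp(\lambda^2 \mathrm{E}_t[X_t^2])$. Consequently $L_n = \exp(\lambda S_n - \lambda^2 \sum_{t \le n} \mathrm{E}_t[X_t^2])$ is a supermartingale with $L_0 = 1$, hence $\mathrm{E}[L_T] \le 1$. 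Since $\sum_{t \le T}\mathrm{E}_t[X_t^2] \le V$ deterministically, on the event $\{S_T \ge a\}$ we have $L_T \ge \exp(\lambda a - \lambda^2 V)$, and Markov's inequality yields $P(S_T \ge a) \le \exp(-(\lambda a - \lambda^2 V))$ for every $\lambda \in (0, 1/R]$.

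It then remains to optimize and match the hypothesis. The unconstrained optimum is $\lambda^* = a/(2V)$, giving $P(S_T \ge a) \le \exp(-a^2/(4V))$; equating the right side with $\delta/2$ forces exactly $a = 2\sqrt{V\ln(2/\delta)}$, the claimed deviation. The one thing to verify is that $\lambda^*$ is admissible, i.e. $\lambda^* R \le 1$; substituting $a$ this reads $R\sqrt{\ln(2/\delta)/V} \le 1$, which is precisely the stated condition $R \le \sqrt{V/\ln(2/\delta)}$, so the hypothesis is used with no slack. Finally a union bound over the two tails of $S_T$ gives the two-sided bound with failure probability $\delta$.

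The main obstacle is quantitative rather than structural: the exponential bound has to be sharp enough that the admissibility constraint on the optimal $\lambda$ collapses to exactly the given hypothesis, so I would be careful not to lose constants in the MGF step (e.g.\ using $e^x \le 1+x+x^2$ rather than a cruder quadratic bound, and $1+u\le e^u$ rather than a looser estimate). A secondary point of care is the lower tail: the literal assumption $y_t - \mathrm{E}_t[y_t] \le R$ is one-sided, so running the same argument on $-X_t$ needs $-X_t \le R$ too; I would either read the hypothesis as the two-sided $|y_t - \mathrm{E}_t[y_t]| \le R$ or note that in the intended bandit applications the relevant quantities are bounded on both sides, which justifies the symmetric conclusion.
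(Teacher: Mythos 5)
The paper does not prove this statement at all---it is imported verbatim (from the RandomizedUCB literature, ultimately Freedman's martingale inequality) and used as a black box in the proof of the POLICYELIMINATION regret bound---so there is no in-paper argument to compare against. Your supermartingale/Chernoff argument is correct and is the standard route: the inequality $e^{x}\le 1+x+x^{2}$ does hold for all $x\le 1$, the process $L_n=\exp\bigl(\lambda S_n-\lambda^{2}\sum_{t\le n}\mathrm{E}_t[X_t^{2}]\bigr)$ is indeed a supermartingale for $\lambda\in(0,1/R]$, and your optimization shows the admissibility constraint $\lambda^{*}R\le 1$ is exactly the stated hypothesis $R\le\sqrt{V/\ln(2/\delta)}$, so the constants match with no slack. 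Your closing caveat is also well taken and worth keeping: as literally stated, the one-sided condition $y_t-\mathrm{E}_t[y_t]\le R$ does not control the lower tail of $S_T$ (a mean-zero variable bounded only from above can have a heavy lower tail), so the two-sided conclusion genuinely requires reading the hypothesis as $|y_t-\mathrm{E}_t[y_t]|\le R$; in the paper's application the estimators $\hat{R}_t(h)$ lie in $[0,1/\mu_t]$, so this stronger reading is satisfied there. The only minor point to tidy in a full write-up is to state explicitly that $\sum_t\mathrm{var}[y_t]\le V$ is assumed to hold almost surely when $\mathrm{var}$ denotes conditional variance, which you already flag.
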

\begin{theorem}
\label{thm:policyeliminationregret}
With probability at least $1-\delta$, the regret of POLICYELIMINATION is bounded
by:
\begin{align*}
R_T = O(16\sqrt{2TK\ln \frac{4T^2N}{\delta}})
\end{align*}
\end{theorem}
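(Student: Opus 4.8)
The plan is to define a single ``good event'' on which the empirical reward estimator $\hat{R}_t(h)$ (written $\delta_t(h)$ in Algorithm~\ref{algo:policyelimination}) is close to the true reward $R(h)$ for every policy $h$ and every round $t$, and then to reason deterministically on that event. Fix $h$ and $t$, let $y_s = \frac{r_{a_s}\mathds{I}(h(x_s)=a_s)}{p_{a_s}}$ for $s\le t$ so that $t\hat{R}_t(h) = \sum_{s=1}^t y_s$, and take the filtration generated by $Z_1^{s-1}$, which determines $P_s,\mu_s,W'_s$ before $x_s$ is drawn. Then $\mathrm{E}_s[y_s]=R(h)$ because contexts are i.i.d.\ from $D_X$ and the importance weighting is unbiased, and the predictable second moment is $\mathrm{E}_s[y_s^2]\le \mathrm{E}_{x\sim D_X}[1/W'_{P_s,\mu_s}(x,h(x))]\le 2K$ by the defining constraint \eqref{algo:policyelimination:eq1}; also $y_s-\mathrm{E}_s[y_s]\le 1/\mu_s$. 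Applying the Freedman-style inequality (Theorem~\ref{thm:freedman}) with $V=2Kt$ and failure probability $\delta_t=\delta/(4Nt^2)$ gives $|\hat{R}_t(h)-R(h)|\le \tfrac1t\cdot 2\sqrt{2Kt\ln(2/\delta_t)}$, which up to the $\ln 2$ is exactly the half-width $b_t = 2\sqrt{2K\ln(1/\delta_t)/t}$ used in the elimination rule. A union bound over the $N$ policies and all $t$ costs $\sum_t N\delta_t = \tfrac{\delta}{4}\sum_t t^{-2}<\delta$, so all these statements hold simultaneously with probability at least $1-\delta$; call this event $\mathcal{E}$.

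Next I would show, by induction on $t$ and on $\mathcal{E}$, that the globally optimal policy $h^* = \argmax_h R(h)$ is never eliminated and that every surviving policy has small regret. If $h^*\in\mathcal{H}_{t-1}$ then $\hat{R}_t(h^*)\ge R(h^*)-b_t\ge R(h')-b_t\ge \hat{R}_t(h')-2b_t$ for all $h'\in\mathcal{H}_{t-1}$, so $h^*$ passes the test \eqref{algo:policyelimination:eqelim} and remains in $\mathcal{H}_t$. Conversely, any $h\in\mathcal{H}_t$ satisfies $\hat{R}_t(h)\ge \max_{h'}\hat{R}_t(h')-2b_t\ge \hat{R}_t(h^*)-2b_t\ge R(h^*)-3b_t$, whence $R(h)\ge\hat{R}_t(h)-b_t\ge R(h^*)-4b_t$. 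Thus at round $t+1$ every policy in the support of $P_{t+1}$ (which lies in $\mathcal{H}_t$) has regret at most $4b_t$.

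Then I would convert this into a per-round regret bound. The expected reward collected at round $t$ is $R(W'_t)=\mathrm{E}_{(x,\ve r)}[\ve r\cdot W'_t(x,\cdot)]$, and writing $W'_t=(1-K\mu_t)W_{P_t}+\mu_t\mathbf{1}$ with $P_t$ supported on $\mathcal{H}_{t-1}$ gives $R(W'_t)\ge (1-K\mu_t)\sum_h P_t(h)R(h)\ge(1-K\mu_t)(R(h^*)-4b_{t-1})$, so using $R(h^*)\le 1$ the instantaneous regret is at most $K\mu_t + 4b_{t-1}$. Summing over $t$, using $\sum_{t\le T}t^{-1/2}\le 2\sqrt{T}$ and $\ln(1/\delta_t)\le\ln(4NT^2/\delta)$, yields $\sum_t 4b_{t-1}\le 16\sqrt{2KT\ln(4NT^2/\delta)}$ and $\sum_t K\mu_t$ of the same order with a smaller constant, giving $R_T = O\bigl(16\sqrt{2TK\ln(4T^2N/\delta)}\bigr)$.

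The main obstacle I anticipate is not the algebra but making the probabilistic step airtight: one must (i) fix the right filtration so that the importance-weighted estimates form a genuine martingale difference sequence even though $P_t$ is chosen adaptively from past data; (ii) verify the side condition $R\le\sqrt{V/\ln(2/\delta_t)}$ required by Theorem~\ref{thm:freedman}, which can fail in the early rounds where $\mu_t=1/(2K)$ and must be handled separately (those $O(K\ln(NT/\delta))$ rounds contribute only a lower-order term, since the per-round regret is trivially at most $1$); and (iii) keep track that the set $\mathcal{H}_{t-1}$ defining $P_t$ is the one from the previous round, so the bound available at round $t$ is $4b_{t-1}$ rather than $4b_t$. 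The existence of a feasible $P_t$ is already granted by the minimax argument cited before the statement, so that is not a concern here.
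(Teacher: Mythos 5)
Your proposal is correct and follows essentially the same route as the paper: apply the Freedman-style inequality with the variance bound $2K$ coming from constraint \eqref{algo:policyelimination:eq1}, union bound over policies and rounds with $\delta_t = \delta/(4Nt^2)$ to get $|\hat{R}_t(h)-R(h)|\le b_t$ everywhere, deduce that $h^*$ always survives elimination while every survivor has true regret at most $4b_t$, and sum $4\sum_t b_t \le 16\sqrt{2TK\ln(4NT^2/\delta)}$. If anything you are more careful than the paper, which silently drops the $K\mu_t$ per-round contribution from the smoothing of $W_{P_t}$ and the off-by-one between $\mathcal{H}_{t-1}$ and $\mathcal{H}_t$, both of which you correctly identify as lower-order.
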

\begin{proof}
Let \begin{align*}\hat{R}_i(h) =
      \frac{r_t\mathds{I}(h(x_t)=a_t)}{W'_t(h(x_t))}\end{align*}
the estimated reward of policy $h$ at time $t$. To make use of Freedman's
inequality, we need to bound the variance of $\hat{R}_i(h)$
\begin{align*}
\mathrm{var}(\hat{R}_i(h)) &\le \mathrm{E} \hat{R}_i(h)^2 \\
&=\mathrm{E} \frac{r_t^2\mathds{I}(h(x_t) = a_t)}{W'_t(h(x_t))^2} \\
&\le \mathrm{E} \frac{\mathds{I}(h(x_t) = a_t)}{W'_t(h(x_t))^2} \\
&= \mathrm{E} \frac{1}{W'_t(h(x_t))} \\
&\le 2K
\end{align*}
The last inequality is from the constrain in Equation\
(\ref{algo:policyelimination:eq1}).
So 
\begin{align*}
\sum_{i=1}^t \mathrm{var}[\hat{R}_i(h)^2] \le 2Kt = V_t
\end{align*}
Now we need to check if $R_t$ satisfy the constrain in Theorem
\ref{thm:freedman}. Let $t_0$ be the first $t$ such that $\mu_t < 1/2 K$. when $t
\ge t_0$, then for all $t' \le t$,
\begin{align*}
\hat{R}_{t'}(h) \le 1/\mu_{t'} \le 1/\mu_t = \sqrt{\frac{2Kt}{\ln (1/\delta_t)}} =
  \sqrt{\frac{V_t}{\ln (1/\delta_t)}}
\end{align*}
So now we can apply Freedman's inequality and get
\begin{align*}
P(|\hat{R}(h) - R(h)| \ge b_t) \le 2\delta_t
\end{align*}
Take the union bound over all policies and $t$
\begin{align*}
  \sup_{t'\in t}\sup_{h\in \mathcal{H}} P(|\hat{R}(h) - R(h)| \ge b_t) &\le
  2N\sum_{t'=1}^t \delta_{t'} \\
&\le \sum_{t'=1}^t \frac{\delta}{2t'^2} \\
&\le \delta
\end{align*}
So with probability $1-\delta$, we have
\begin{align*}
|\hat{R}(h) - R(h)| \le b_t
\end{align*}
When $t < t_0$, then $\mu_t < 1/2K$ and $b_t \ge 4K\mu_t \ge 2$, then the above
bound still holds since reward is bounded by 1.

To sum up, we make use of the convergence of $\sum_{t} \frac{1}{t^2}$ to
construct $\delta_t$ so that the union bound is less than $\delta$, and we use $R_t$'s
constrain in Freedman's inequality to construct $u_t$ and Freedman's inequality
to construct $b_t$.
\begin{lemma}
\label{lemma:policyelimination1}
With probability at least $1-\delta$,
\begin{align*}
  |\hat{R}(h) - R(h)| \le b_t
\end{align*}
\end{lemma}

\noindent From Lemma \ref{lemma:policyelimination1} we have
\begin{align*}
 \hat{R}(h) - b_t \le R(h) &\le R(h^*) \le \hat{R}(h^*) + b_t \\
\hat{R}(h) &\le \hat{R}(h^*) + 2b_t \\
\end{align*}
where $h^* = \max_{h \in \mathcal{H}} R(h^*)$.
So we can see that $h^*$ is always in $\mathcal{H}_t$ after the policy
elimination step (Equation\ \ref{algo:policyelimination:eqelim}) in Algorithm \ref{algo:policyelimination}.
Also, if $R(h) \le R(h^*) - 4b_t$, then
\begin{align*}
\hat{R}(h) - b_t \le R(h) &\le R(h^*) - 4b_t \le \hat{R}(h^*) + b_t - 4b_t \\
\hat{R}(h) &\le \hat{R}(h^*) - 2b_t
\end{align*}
However, as we can see from the elimination step, all the policies which satisfy
$\hat{R}(h) \le \hat{R}(h^*) - 2b_t$ is eliminated. So for all the remaining
policies $h \in \mathcal{H}_{t}$, we have $R(h^*) - R(h) \le 4b_t$, so the
regret
\begin{align*}
  R_T &\le \sum_{t=1}^T R(h^*) - R(h) \\
&\le 4\sum_{t=1}^T b_t \\
&\le 8\sqrt{2K\ln \frac{4Nt^2}{\delta}} \sum_{t=1}^T\frac{1}{\sqrt{t}} \\
&\le 8\sqrt{2K\ln \frac{4Nt^2}{\delta}} 2 \sqrt{T} \\
&\le 16 \sqrt{2TK\ln \frac{4NT^2}{\delta}}
\end{align*}
\end{proof}

POLICYELIMINATION describes the basic idea of RandomizedUCB, however
POLICYELIMINATION is not practical because it does not actually show how to find
the distribution $P_t$, also it requires the knowledge of $D_x$. To solve these
problems, RandomzedUCB always considers the full set of policies and use an
argmax oracle to find the distribution $P_t$ over all policies, and instead of using
$D_x$, the algorithm uses history samples. Define
\begin{align*}
\Delta_D(W) = R(h^*) - R(W) \\
\Delta_t(W) = \hat{R}(\hat{h}_t) - \hat{R}(W)
\end{align*}
RandomizedUCB is described in
Algorithm \ref{algo:randomizeducb}.
\begin{algorithm}[!htp]
\caption{RandomizedUCB}
\label{algo:randomizeducb}
\begin{algorithmic}
\State Define $W_0 = \{\}$, $C_t=2\ln (\frac{Nt}{\delta})$, $\mu_t = \min\left\{ \frac{1}{2K}, \sqrt{\frac{C_t}{2Kt}} \right\}$
\For{t=1,..., T}
\State Solve the following optimization problem to get distribution $P_t$ over
$\mathcal{H}$
\begin{align*}
\min_P \sum_{h\in \mathcal{H}} P(h) \Delta_{t-1}(h)
\end{align*}
\State \quad s.t. for all distribution $Q$ over $\mathcal{H}$:
\begin{align}
\mathrm{E}_{h \sim Q} \left[ \frac{1}{t-1}\sum_{i=1}^{t-1} \frac{1}{W'_{P_t,
  \mu_t}(x, h(x))} \right] \le \max\left\{ 4K,
  \frac{(t-1)\Delta_{t-1}(W_Q)^2}{180C_{t-1}} \right\}
\label{algo:randomizeducb:eqst}
\end{align}
\State Sample $a_t$ from $W'_t = W'_{P_t, \mu_t}(x_t, \cdot)$
\State Receive reward $r_{a_t}$
\State $W_t = W_{t-1} \cup (x_t, a_t, r_{a_t}, W'_t(a_t))$
\EndFor
\end{algorithmic}
\end{algorithm}
Similar to POLICYELIMINATION, $P_t$ in RandomizedUCB algorithm is to control
the variance. However, instead of controlling each policy separately, it
controls the expectation of the variance with respect to the distribution
$Q$. The right-hand side of Equation\ (\ref{algo:randomizeducb:eqst}) is upper bounded
by $c\Delta_{t-1}(W_Q)^2$, which measures the empirical performance of
distribution $Q$. So the general idea of this optimization problem is to bound
the expected variance of empirical reward with respect to all possible
distribution $Q$, whereas if $Q$ achieves high empirical reward then the bound is
tight hence the variance is tight, and if $Q$ has low empirical reward, the
bound is loose. This makes sure that $P_t$ puts more weight on policies with low
regret. \citet{dudik2011efficient} showed that the regret of RandomizedUCB is
$O(\sqrt{TK\ln(TN/\delta)})$.

To solve the optimization problem in the algorithm, RandomizedUCB uses an
argmax oracle($\mathcal{AMO}$) and relies on the ellipsoid method. The main
contribution is the following theorem:
\begin{theorem}
In each time $t$ RandomizedUCB makes $O(t^5K^4\ln^2(\frac{tK}{\delta}))$ calls to
$\mathcal{AMO}$, and requires additional $O(t^2K^2)$ processing time. The total
running time at each time $t$ is $O(t^5K^4\ln^2(\frac{tK}{\delta})\ln N)$, which is sub-linear.
\end{theorem}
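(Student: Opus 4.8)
The plan is to realize the distribution $P_t$ of Algorithm~\ref{algo:randomizeducb} as the output of the ellipsoid method in which the argmax oracle $\mathcal{AMO}$ implements the separation oracle, and then to bound, in turn, the number of ellipsoid iterations, the number of $\mathcal{AMO}$ calls consumed per iteration, and the remaining arithmetic. First I would remove the linear objective: binary searching on a guessed value $V$ of $\sum_h P(h)\Delta_{t-1}(h)$ to accuracy $1/\mathrm{poly}(t,K)$ reduces the problem to deciding feasibility of the convex set
\begin{align*}
\mathcal{P}_V = \Big\{ P\ \text{a distribution over}\ \mathcal{H} : \textstyle\sum_h P(h)\Delta_{t-1}(h)\le V \text{ and } \eqref{algo:randomizeducb:eqst}\ \text{holds for every}\ Q \Big\}.
\end{align*}
That $\mathcal{P}_V$ is convex follows because $W'_{P_t,\mu_t}(x,h(x))=(1-K\mu_t)W_{P_t}(x,h(x))+\mu_t$ is a positive affine function of $P$, hence its reciprocal is convex in $P$, hence the left-hand side of \eqref{algo:randomizeducb:eqst} is convex in $P$, while the right-hand side does not involve $P$. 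The key structural trick is that the ellipsoid is never run in $\mathrm{R}^N$: one maintains only the coordinates of $P$ on the polynomially many policies that $\mathcal{AMO}$ has so far returned, in a column-generation loop, so the working dimension is $\mathrm{poly}(t,K)$; this is sound because the inner solver described below, and the separating hyperplanes it produces, only ever reference policies returned by $\mathcal{AMO}$, and feasibility of the program itself (guaranteed by a minimax argument analogous to the one for POLICYELIMINATION) ensures the loop terminates.

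Second I would build the separation oracle. For a trial $P$, the simplex facets and the objective cut $\sum_h P(h)\Delta_{t-1}(h)\le V$ are checked directly. The substance is deciding whether some $Q$ violates \eqref{algo:randomizeducb:eqst}. Since $\Delta_{t-1}(W_Q)=\mathrm{E}_{h\sim Q}\Delta_{t-1}(h)$ is linear in $Q$, the right-hand side $\max\{4K,(t-1)\Delta_{t-1}(W_Q)^2/(180C_{t-1})\}$ is convex in $Q$, while the left-hand side is linear in $Q$, so deciding violation amounts to maximizing the \emph{concave} function $\mathrm{LHS}(P,Q)-\mathrm{RHS}(Q)$ over the simplex of distributions over $\mathcal{H}$ and testing whether the maximum exceeds $0$. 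This inner concave program is solved by Frank--Wolfe: at each step the linear subproblem is exactly one $\mathcal{AMO}$ call, because the gradient in the policy-weights is, context by context over the $t-1$ history points, an explicit per-arm weight vector assembled from $1/W'_{P_t,\mu_t}$ and from $\Delta_{t-1}$. One shows the whole test uses $\mathrm{poly}(t,K)$ oracle calls; a violating $Q$ exhibits a policy and hence a separating linear inequality in $P$, otherwise $P$ is reported feasible.

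Third I would count. After the column-generation reduction the ellipsoid acts in dimension $D=\mathrm{poly}(t,K)$; all data are bounded by $\mathrm{poly}(t,K)$ — here one uses $\mu_t=\Omega(1/(Kt))$ so that every reciprocal $1/W'_{P_t,\mu_t}$ is $O(Kt)$ — and the required precision is $1/\mathrm{poly}(t,K)$, so the ellipsoid halts after $O(D^2\ln(tK/\delta))$ iterations. Multiplying the iteration count, the $\mathrm{poly}(t,K)$ oracle calls spent in each separation step and each Frank--Wolfe run, and the binary search over $V$, and then tracking the exponents, gives $O(t^5K^4\ln^2(tK/\delta))$ calls to $\mathcal{AMO}$ in round $t$. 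The residual arithmetic — maintaining the history aggregates that define $\hat R$ and $\Delta_{t-1}$ and assembling the per-arm coefficient vectors passed to $\mathcal{AMO}$ — is $O(t^2K^2)$. Finally, each $\mathcal{AMO}$ call returns a policy of $\mathcal{H}$, and merely indexing, storing and comparing such a policy carries a $\Theta(\ln N)$ overhead; folding this factor in gives total running time $O(t^5K^4\ln^2(tK/\delta)\ln N)$, which is $\mathrm{polylog}(N)$ as claimed.

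The hard part is the separation oracle: eliminating the universal quantifier over $Q$ in \eqref{algo:randomizeducb:eqst} using only a polynomial number of $\mathcal{AMO}$ calls, and proving that restricting the ellipsoid to oracle-generated policies rather than all of $\mathcal{H}$ cannot change the feasibility verdict, so that column generation returns a genuinely feasible $P_t$. Nailing the precise exponents $t^5K^4$ in the iteration bound is a further, purely mechanical, chore.
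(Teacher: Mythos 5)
First, a point of reference: the survey itself states this theorem without proof, merely attributing it to \citet{dudik2011efficient}, so there is no in-paper argument to compare yours against; your proposal has to stand on its own as a reconstruction of the original analysis. At the level of strategy you are on the right track --- the result is indeed proved by running the ellipsoid method with $\mathcal{AMO}$ powering the separation oracle, and your observations about convexity (the reciprocal of the positive affine map $P \mapsto W'_{P,\mu_t}(x,h(x))$ is convex, the right-hand side of \eqref{algo:randomizeducb:eqst} is convex in $Q$ and independent of $P$) are correct and necessary. One small imprecision: a violating $Q$ does not directly ``exhibit a separating linear inequality in $P$''; it exhibits a violated \emph{convex} constraint in $P$, from which you must take a subgradient cut at the current iterate --- standard, but worth saying.

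The genuine gaps are the two you yourself flag, and they are not peripheral: they are the theorem. (i) The column-generation device --- running the ellipsoid only on coordinates of policies that $\mathcal{AMO}$ has returned --- is the step most likely to fail as stated: you give no argument that a $P$ supported on the oracle-generated policies and feasible for the cuts seen so far is feasible for the full program, and ``the loop terminates because the program is feasible'' does not address soundness. The cleaner route, and the one consistent with the stated complexities, is to observe that every constraint depends on $P$ only through the induced arm-probabilities $W_P(x_i,a)$ on the $t-1$ historical contexts, so the ellipsoid can be run honestly in a space of dimension $O(tK)$ from the outset; this immediately explains the $O(t^2K^2)$ per-iteration arithmetic and, combined with an iteration count of $\tilde O(t^2K^2)$ and a per-separation oracle-call budget, the $t^5K^4\ln^2(tK/\delta)$ total. (ii) Relatedly, ``tracking the exponents gives $O(t^5K^4\ln^2(tK/\delta))$'' is an assertion, not a derivation: the exponents are the content of the statement, and your sketch never pins down the Frank--Wolfe iteration count for the inner concave maximization over $Q$ (you need an explicit accuracy target tied to $\mu_t = \Omega(\sqrt{1/(Kt)})$ and to the $2b_t$-scale slack in the constraints, plus a curvature or smoothness bound, to turn ``$\mathrm{poly}(t,K)$ calls'' into a specific power). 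Until those two items are supplied, the proposal is a plausible plan rather than a proof.
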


\subsubsection{ILOVETOCONBANDITS}
\noindent (need more details) \\
Similar to RandomizedUCB, Importance-weighted LOw-Variance Epoch-Timed
Oracleized CONtextual BANDITS algorithm (ILOVETOCONBANDITS) proposed by \citet{agarwal2014taming} aims to run in time sub-linear with
respect to $N$ (total number of policies) and achieves optimal regret bound $O(\sqrt{KT\ln
  N})$. RandomizedUCB makes $O(T^6)$ calls to $\mathcal{AMO}$ over all
$T$ steps,  and ILOVETOCONBANDITS tries to further reduce this time complexity.
\begin{theorem}
ILOVETOCONBANDITS achieves optimal regret bound, requiring
$\tilde{O}(\sqrt{\frac{KT}{\ln(N/\delta)}})$ calls to $\mathcal{AMO}$ over $T$
rounds, with probability at least $1-\delta$.
\end{theorem}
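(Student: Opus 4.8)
The plan is to prove the two halves of the statement --- the $O(\sqrt{KT\ln(N/\delta)})$ regret bound and the $\tilde{O}(\sqrt{KT/\ln(N/\delta)})$ bound on calls to $\mathcal{AMO}$ --- separately, both resting on the same structure. ILOVETOCONBANDITS runs in epochs whose lengths grow geometrically (a doubling schedule), and at the \emph{start} of each epoch $m$ (only $O(\log T)$ times, not once per round as in RandomizedUCB) it computes, via an oracle-based coordinate-descent solver, a \emph{sparse} distribution $Q_m$ over $\mathcal{H}$, which is then smoothed to $W'_{Q_m,\mu_m}$ exactly as in RandomizedUCB and used to sample arms for the rest of the epoch. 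The distribution $Q_m$ approximately solves an optimization problem (OP) balancing a \emph{low-regret} objective $\sum_h Q_m(h)\,\widehat{\mathrm{Reg}}_m(h)$ against, for every $h$, a \emph{low-variance} constraint of the shape $\mathrm{E}_{x\sim D_X}\big[1/W'_{Q_m,\mu_m}(x,h(x))\big] \le c\big(K + \widehat{\mathrm{Reg}}_m(h)/(\text{scale})\big)$, so that only policies with small estimated regret are required to have small estimator variance. As always, rewards are estimated by the importance-weighted estimators of Section~\ref{sec:unbiased_estimator_trick}, so they are unbiased and only the variance, which (OP) budgets, must be controlled.

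For the regret bound I would mirror the POLICYELIMINATION/RandomizedUCB analysis. First, combine the low-variance constraint with the Freedman-style inequality (Theorem~\ref{thm:freedman}) to show that, with probability $1-\delta$, uniformly over all $h\in\mathcal{H}$ and all epochs, $|\hat{R}_m(h)-R(h)|$ is at most a deviation width $b_m$ of order $\sqrt{K\ln(N/\delta)/\tau_m}$, where $\tau_m$ is the number of samples available when epoch $m$ begins; the union bound over $N$ policies and the $O(\log T)$ epochs is absorbed by $\ln(N/\delta)$ together with a convergent $\sum 1/m^2$-type series in the confidence parameters. Feeding these deviations into the low-regret objective shows the true regret accrued in epoch $m$ is of order (epoch length)$\times b_m$; since the lengths double, these contributions telescope (equivalently, $\sum_t b_t \sim \sqrt{KT\ln(N/\delta)}$ via $\sum_t 1/\sqrt{t}\sim\sqrt{T}$), giving $R_T = O(\sqrt{KT\ln(N/\delta)})$. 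The one subtlety is that $Q_m$ is frozen inside epoch $m$: its variance constraint is formed from the previous epoch's samples yet is used to bound estimators built from the current epoch's samples. Because the epoch lengths only double, these two sample counts are within a constant factor, which is precisely why the geometric schedule is used.

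For the oracle complexity I would analyze the coordinate-descent solver for (OP): from a current weight vector, each step calls $\mathcal{AMO}$ to find the policy $h$ whose low-variance constraint is most violated and bumps $Q_m(h)$ by a closed-form amount. The engine is a potential-function argument: a nonnegative potential tied to the OP objective (morally the relative entropy from the smoothed action distribution $W'$ to uniform, bounded by $O(\ln K)$ at a cold start) strictly decreases by at least a fixed amount at every coordinate step, so the number of steps --- hence $\mathcal{AMO}$ calls --- needed to solve (OP) at epoch $m$ is at most (initial potential)$/$(per-step decrease), which works out to $\tilde{O}(\sqrt{K\tau_m/\ln(N/\delta)})$. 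The decisive device is to \emph{warm-start} the epoch-$m$ solve from the epoch-$(m-1)$ solution, so that ``initial potential'' is replaced by the small incremental change between consecutive solutions; summing over epochs then telescopes and is dominated by the final epoch, yielding the claimed $\tilde{O}(\sqrt{KT/\ln(N/\delta)})$ total. One must also certify that (OP) always admits a low-regret feasible point, so the solver never fails --- the constructive counterpart of the minimax-existence argument invoked for RandomizedUCB.

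I expect the oracle-complexity half to be the main obstacle: it needs the precise ``potential decreases by a fixed amount per coordinate step'' lemma, a careful verification that warm-starting genuinely composes across the doubling epochs (so that $\tilde{O}(\sqrt{KT/\ln(N/\delta)})$ is a \emph{total} call count, not a per-epoch one), and the feasibility guarantee for (OP). By comparison, the regret half is largely routine once the Freedman-style machinery already developed for RandomizedUCB is in hand; its only delicate point is the frozen-within-epoch sampling versus the epoch at which concentration is applied, which the geometric schedule resolves.
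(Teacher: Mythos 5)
The survey does not actually prove this theorem: it states the result and attributes both halves --- the optimal regret and the $\tilde{O}(\sqrt{Kt/\ln(N/\delta)})$ oracle-call count via coordinate descent on (OP) under the doubling schedule $\tau_m = 2^{m-1}$ --- to \citet{agarwal2014taming}, so there is no in-paper proof to compare against. Your outline matches the ingredients the surrounding text sketches (geometric epoch schedule, a sparse $Q_m$ computed only at epoch boundaries, the regret/variance tradeoff in constraints (\ref{eq:ilovetoconbandit1})--(\ref{eq:ilovetoconbandit2}), Freedman-style concentration as in Theorem \ref{thm:freedman}) and is faithful to the structure of the proof in the cited work, so as a plan it is on the right track.

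It is, however, a plan rather than a proof, and two concrete issues remain. First, you write the low-variance constraint as an expectation over $x\sim D_X$; in ILOVETOCONBANDITS it is an expectation over the empirical history $H_t$ (Equation (\ref{eq:ilovetoconbandit2})), precisely because $D_X$ is unknown --- and bridging the empirical constraint to the true variance that Freedman's inequality needs is itself a nontrivial uniform concentration step over all $N$ policies that your sketch silently assumes away. Second, the two load-bearing lemmas for the oracle half --- that a bounded potential decreases by a fixed amount per coordinate-descent step, and that (OP) always admits a feasible point --- are named but not supplied; without them the $\tilde{O}(\sqrt{KT/\ln(N/\delta)})$ call count is asserted, not derived. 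A smaller point: with $\tau_m=2^{m-1}$ the per-epoch counts $\tilde{O}(\sqrt{K\tau_m/\ln(N/\delta)})$ already form a geometric series dominated by the final epoch, so under this particular schedule the warm-start is not what makes the total sum come out right, though it is central to how the cited analysis bounds the iteration count in the first place.
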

Let $\mathcal{A}$ be a finite set of $K$ actions, $x \in \mathcal{X}$ be a
possible contexts, and $\ve{r} \in [0, 1]^K$ be the reward vector of arms
in $\mathcal{A}$. We assume $(x, \ve{r})$ follows a distribution $\mathcal{D}$. Let
$\Pi$ be a finite set of policies that map contexts $x$ to actions $a\in
\mathcal{A}$, let $Q$ be a distribution over all policies $\Pi$, and $\Delta^\Pi$ be the set
of all possible $Q$. ILOVETOCONBANDITS is described in Algorithm
\ref{algo:ilovetoconbandits}. The $Sample(x_t, Q_{m-1}, \pi_{\tau_m-1},
\mu_{m-1})$ function is described in Algorithm
\ref{algo:ilovetoconbanditssample}, it samples an action from a sparse
distribution over policies.

\begin{algorithm}[!htp]
\caption{ILOVETOCONBANDITS}
\label{algo:ilovetoconbandits}
\begin{algorithmic}
  \Require Epoch schedule $0=\tau_0 < \tau_1 < \tau_2 < ...$, $\delta \in (0, 1)$
  \State Initial weights $Q_0 = \ve{0}$, $m=1$, $\mu_m=\min\{\frac{1}{2K}, \sqrt{\ln(16\tau_m^2|\Pi|/\delta)/(K\tau_m)}\}$
  \For{$t=1, 2, ..., T$}
    \State $(a_t, p_t(a_t)) = Sample(x_t, Q_{m-1}, \pi_{\tau_m-1}, \mu_{m-1})$
    \State Pull arm $a_t$ and receive reward $r_t \in [0, 1]$
    \If{$t = \tau_m$}
      \State Let $Q_m$ be a solution to (OP) with history $H_t$ and minimum probability $\mu_m$
      \State $m=m+1$
    \EndIf
  \EndFor
\end{algorithmic}
\end{algorithm}

\begin{algorithm}[!htp]
\caption{Sample}
\label{algo:ilovetoconbanditssample}
\begin{algorithmic}
  \Require $x, Q, \mu$
  \For{$\pi \in \Pi$ and $Q(\pi) > 0$}
  \State $p_{\pi(x)} = (1-K\mu)Q(\pi) + \mu$
  \EndFor
  \State Randomly draw action $a$ from $p$
  \State return $(a, p_{a})$
\end{algorithmic}
\end{algorithm}

As we can see, the main procedure of ILOVETOCONBANDITS is simple. It solves an
optimization problem on pre-specified rounds $\tau_1, \tau_2, ...$ to get a
sparse distribution $Q$ over all policies, then it samples an action based on
this distribution. The main problem now is to choose an sparse distribution $Q$
that achieves low regret and requires calls to $\mathcal{AMO}$ as little as
possible. 

Let $\hat{R}_t(\pi)$ be the unbiased reward estimator of policy $\pi$ over the
first $t$ rounds (see section \ref{sec:unbiased_estimator_trick}), and let $\pi_t = \argmax_\pi \hat{R}_t(\pi)$, then the
estimated empirical regret of $\pi$ is $\widehat{Reg}_t(\pi) = \hat{R}_t(\pi_t) - \hat{R}(\pi)$.
Given a history $H_t$ and minimum probability $\mu_m$, and define $b_\pi =
\frac{\widehat{Reg}_t(\pi)}{\psi \mu_m}$ for $\psi = 100$, then the optimization problem is
to find a distribution $Q \in \Delta^\Pi$ such that
\begin{gather}
\sum_{\pi \in \Pi} Q(\pi)b_\pi \le 2K \label{eq:ilovetoconbandit1}\\
\forall \pi \in \Pi: \mathrm{E}_{x\sim H_t}\left[ \frac{1}{Q^{\mu_m}(\pi(x)|x)}
  \le 2K + b_\pi \right] \label{eq:ilovetoconbandit2}
\end{gather}
where $Q^{\mu_m}$ is the smoothed version of $Q$ with minimum probability
$\mu_m$.

Note that $b_\pi$ is a scaled version of empirical regret of $\pi$, so
Equation\ (\ref{eq:ilovetoconbandit1}) is actually a bound on the expected empirical regret
with respect to $Q$. This equation can be treated as the exploitation since we want to
choose a distribution that has low empirical regret. Equation\
(\ref{eq:ilovetoconbandit2}), similar to RandomizedUCB, is a bound on the
variance of the reward estimator of each policy $\pi \in \Pi$. If the policy has low
empirical regret, we want it to have smaller variance so that the reward estimator is more
accurate, on the other hand, if the policy has high empirical regret, then we allow it to have a
larger variance.

\citet{agarwal2014taming} showed that this optimization problem can be solved via
coordinate descent with at most $\tilde{O}(\sqrt{Kt/\ln(N/\delta)})$ calls to
$\mathcal{AMO}$ in round $t$, moreover, the support (non-zeros) of the resulting
distribution $Q$ at time $t$ is at most $\tilde{O}(\sqrt{Kt/\ln(N/\delta)})$
policies, which is the same as the number of calls to $\mathcal{AMO}$. This
results sparse $Q$ and hence sub-linear time complexity for $Sample$
procedure.

\citet{agarwal2014taming} also showed that the requirement of $\tau$
is that $\tau_{m+1}- \tau_m = O(\tau_m)$. So we can set $\tau_m = 2^{m-1}$, then
the total number of calls to $\mathcal{AMO}$ over all $T$ round is only
$\tilde{O}(\sqrt{Kt/\ln(N/\delta)})$, which is a vast improvement over RandomizedUCB.

\begin{theorem}
With probability at least $1-\delta$, the regret of ILOVETOCONBANDITS is
\begin{align*}
O(\sqrt{KT\ln(TN/\delta)} + K\ln(TN/\delta))
\end{align*}
\end{theorem}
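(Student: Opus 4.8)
The plan is to reduce the regret of ILOVETOCONBANDITS to two ingredients: a uniform deviation bound showing that the importance‑weighted estimates $\hat{R}_t(\pi)$ track the true values $R(\pi)$ for every policy simultaneously, and the two constraints (\ref{eq:ilovetoconbandit1})--(\ref{eq:ilovetoconbandit2}) of the optimization problem, which turn that deviation bound into an $O(K\mu_m)$ per‑round regret inside epoch $m$. Writing $Reg(\pi)=\max_{\pi'}R(\pi')-R(\pi)$, the skeleton is: (i) control $|\hat R_t(\pi)-R(\pi)|$, (ii) deduce $Reg(\pi)$ and $\widehat{Reg}_t(\pi)$ agree up to slack $O(K\mu_m)$, (iii) bound the instantaneous regret by $O(K\mu_m)$ using (\ref{eq:ilovetoconbandit1}), and (iv) sum over rounds.

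\textbf{Step 1 (uniform deviation bound).} Fix $\pi$ and write $\hat{R}_t(\pi)-R(\pi)=\frac{1}{t}\sum_{i=1}^t y_i$ with $y_i=r_i\mathds{I}(\pi(x_i)=a_i)/p_i(a_i)-R(\pi)$. By the minimum‑probability smoothing, $|y_i|\le 1/\mu_{m(i)}$, and for $i$ lying in epoch $m$ the conditional variance of the increment is at most $\mathrm{E}_x[1/Q_m^{\mu_m}(\pi(x)|x)]\le 2K+b_\pi$ by (\ref{eq:ilovetoconbandit2}) (with an extra empirical‑to‑true transfer of the context distribution, which is itself a deviation argument). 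Summing over the $O(\log T)$ epochs and applying Theorem~\ref{thm:freedman} with a union bound over $\pi\in\Pi$ and over epochs --- distributing the failure probability as $\delta/(16\tau_m^2|\Pi|)$ so that $\sum_m \tau_m^{-2}$ converges, exactly as in the proof of Theorem~\ref{thm:policyeliminationregret} --- yields, with probability $1-\delta$, a bound of the form $|\hat{R}_t(\pi)-R(\pi)|\le c\sqrt{(2K+b_\pi)\ln(\tau_m|\Pi|/\delta)/t}$ for all $\pi$ and all $t$. The delicate point is that $b_\pi=\widehat{Reg}_{\tau_{m-1}}(\pi)/(\psi\mu_m)$ is random and is used to define $Q_m$; this forces an induction on $m$, in which the deviation bound through epoch $m-1$ controls $b_\pi$, hence the epoch‑$m$ increment variance through (\ref{eq:ilovetoconbandit2}), hence the bound through epoch $m$. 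Verifying the side condition $R\le\sqrt{V/\ln(2/\delta)}$ of Theorem~\ref{thm:freedman} uses the definition of $\mu_m$ just as in the POLICYELIMINATION argument.

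\textbf{Step 2 (empirical versus true regret) and Step 3 (per‑round regret and summation).} Applying the Step~1 bound to $\pi$ and to the empirical and true maximizers gives two‑sided inequalities $Reg(\pi)\le c_1\widehat{Reg}_t(\pi)+c_2 K\mu_m$ and $\widehat{Reg}_t(\pi)\le c_1 Reg(\pi)+c_2 K\mu_m$; the $O(K\mu_m)$ slack drops out after substituting $b_\pi=\widehat{Reg}/(\psi\mu_m)$ into $\sqrt{(2K+b_\pi)\ln(\cdot)/t}$ and using that $\mu_m^2$ is of order $\ln(\cdot)/(Kt)$, so both $\sqrt{K\ln(\cdot)/t}$ and $\ln(\cdot)/(\mu_m t)$ are $O(K\mu_m)$ --- this is routine algebra. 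Then at round $t$ in epoch $m$ the conditional expected regret of the played action is at most $\sum_\pi Q_m(\pi)Reg(\pi)+K\mu_m$ (the $K\mu_m$ accounting for the uniform‑exploration mass), and by Step~2, $\sum_\pi Q_m(\pi)Reg(\pi)\le c_1\sum_\pi Q_m(\pi)\widehat{Reg}_t(\pi)+O(K\mu_m)=c_1\psi\mu_m\sum_\pi Q_m(\pi)b_\pi+O(K\mu_m)\le 2c_1\psi K\mu_m+O(K\mu_m)$ using the exploitation constraint (\ref{eq:ilovetoconbandit1}). So the instantaneous regret is $O(K\mu_m)$ with high probability. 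Finally let $t_0=\Theta(K\ln(TN/\delta))$ be the round where $\mu_m$ switches from $\frac{1}{2K}$ to $\sqrt{\ln(16\tau_m^2|\Pi|/\delta)/(K\tau_m)}$: for $t\le t_0$ bound the per‑round regret trivially by $1$, contributing $O(K\ln(TN/\delta))$; for $t>t_0$ we have $K\mu_m=O(\sqrt{K\ln(TN/\delta)/t})$, and $\sum_{t\le T}t^{-1/2}=O(\sqrt{T})$ gives $O(\sqrt{KT\ln(TN/\delta)})$. Adding the $\delta$ failure probability yields the claimed bound $O(\sqrt{KT\ln(TN/\delta)}+K\ln(TN/\delta))$.

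The hard part is Step~1: making the Freedman concentration rigorous when the sampling distributions $Q_m$ --- and the quantities $b_\pi$ entering their defining constraints --- are chosen adaptively from the same data. This circularity is what forces the epoch‑by‑epoch induction and the careful bookkeeping of the union bound; everything downstream is comparatively mechanical.
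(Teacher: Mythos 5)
The paper states this theorem with no proof at all --- it is quoted from \citet{agarwal2014taming} and the theorem environment is left empty --- so there is no in-paper argument to compare yours against. Judged on its own, your sketch correctly reconstructs the architecture of the original proof: Freedman-type concentration for the importance-weighted estimators with the variance controlled by constraint (\ref{eq:ilovetoconbandit2}), the two-sided comparison of $Reg(\pi)$ and $\widehat{Reg}_t(\pi)$ up to $O(K\mu_m)$ slack, the per-round regret bound via constraint (\ref{eq:ilovetoconbandit1}) plus the $K\mu_m$ exploration mass, and the split of the sum at $t_0=\Theta(K\ln(TN/\delta))$ producing the two terms of the bound. The algebra you do spell out checks. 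What keeps this a roadmap rather than a proof is exactly what you flag at the end: the epoch-by-epoch induction that breaks the circularity between $b_\pi$ and the deviation bound is named but not carried out, and two further steps you gesture at are nontrivial in the original --- the transfer from the empirical context expectation in (\ref{eq:ilovetoconbandit2}) to the true expectation under $D_X$, and the fact that $b_\pi$ is frozen at the epoch boundary $\tau_{m-1}$ while your Step~3 needs $\widehat{Reg}_t$ at interior rounds $t$, which is where the epoch-schedule condition $\tau_{m+1}-\tau_m=O(\tau_m)$ must be invoked. None of these is a wrong turn; they are the parts of \citet{agarwal2014taming} that consume most of the page count.
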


\section{Adversarial Contextual Bandits}
In adversarial contextual bandits, the reward of each arm does not necessarily
follow a fixed probability distribution, and it can be picked by an adversary
against the agent. One way to solve adversarial contextual bandits problem is to model it with expert
advice. In this method, there are $N$ experts, and at each time $t$ each expert gives advice
about which arm to pull based on the contexts. The agent has its own strategy to
pick an arm based on all the advice it gets. Upon receiving the reward of
that arm, the agent may adjust its strategy such as changing the weight or believe of
each expert. 
\subsection{EXP4}
\label{sec:exp4}
Exponential-weight Algorithm for Exploration and Exploitation using Expert
advice (EXP4) \citet{auer2002nonstochastic} assumes each
expert generates an
advice vector based on the current context $x_t$ at time $t$. Advice vectors are
distributions over arms, and are denoted by $\ve{\xi^1_t}, \ve{\xi^2_t}, ..., \ve{\xi^N_t} \in [0,
1]^K$. $\xi^i_{t, j}$ indicates expert $i$'s recommended probability of playing
arm $j$ at time $t$. The algorithm pulls an arm based on these
advice vectors. Let $\ve{r_t} \in [0, 1]^K$ be the true reward vector at time $t$, then
the expected reward of expert $i$ is $\ve{\xi^i_t} \cdot \ve{r_t}$. The algorithm
competes with the best expert, which achieves the highest expected cumulative reward
\begin{align*}
G_{max} = \max_i \sum_{t=1}^T \ve{\xi^i_t} \cdot \ve{r_t}
\end{align*}
The regret is defined as:
\begin{align*}
R_T = \max_i \sum_{t=1}^T \ve{\xi^i_t} \cdot \ve{r_t} - \mathrm{E} \sum_{t=1}^T
  r_{t, a_t}
\end{align*}
The expectation is with respect to the algorithm's random choice of the arm and
any other random variable in the algorithm. Note that we don't have any
assumption on the distribution of the reward, so EXP4 is a adversarial bandits algorithm. 

EXP4 algorithm is described in Algorithm \ref{algo:exp4}. Note that the context $x_t$
does not appear in the algorithm, since it is only used by experts
to generate advice.
\begin{algorithm}[!htp]
\caption{EXP4}
\label{algo:exp4}
\begin{algorithmic}
  \Require $\gamma \in (0, 1]$
  \State Set $w_{t, i} = 1$ for $i = 1, ..., N$
  \For{$t = 1, 2, ..., T$}
  \State Get expert advice vectors $\{\ve{\xi}^1_t, ..., \ve{\xi}^N_t\}$, each vector is a distribution over arms.
  \For{$j=1, ..., K$}
  \begin{align*}
    p_{t, j} = (1-\gamma) \sum_{i=1}^N \frac{w_{t, i} \xi_{t, j}^{
    i}}{\sum_{i=1}^N w_{t, i}} + \frac{\gamma}{K}
\end{align*}
  \EndFor
  \State Draw action $a_t$ according to $p_t$, and receive reward $r_{a_t}$.
  \For{$j=1, ..., K$}  \quad $\triangleright$ Calculate unbiased estimator of $r_t$
  \begin{align*}
    \hat{r}_{t, j} = \frac{r_{t, j}}{p_{t, j}} \mathds{I} (j = a_t)
  \end{align*}
  \EndFor
  \For{$i=1, ..., N$} \quad $\triangleright$ Calculate estimated expected reward and update weight
\begin{align*} 
  \hat{y}_{t, i} &= \ve{\xi}^i_t \cdot \ve{\hat{r}}_{t} \\
  w_{t+1, i} &= w_{t} \exp(\gamma \hat{y}_{t, i} / K)
  \end{align*}
  \EndFor
  \EndFor
\end{algorithmic}
\end{algorithm}

If an expert assigns uniform weight to all actions in each time $t$, then we call the expert a
uniform expert.
\begin{theorem}
For any family of experts which includes a uniform expert, EXP4's regret is
bounded by $O(\sqrt{TK\ln N})$.
\end{theorem}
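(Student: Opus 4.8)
The plan is to run the standard exponential-weights potential argument on the total weight $W_t=\sum_{i=1}^N w_{t,i}$: sandwich $\ln(W_{T+1}/W_1)$ between an upper bound driven by the reward $G_{\mathrm{alg}}=\sum_t r_{t,a_t}$ that the algorithm actually collects and a lower bound driven by the best expert, and then optimize over $\gamma$. Before doing so I would record two preliminary facts. First, because $p_{t,a_t}\ge \gamma/K$ --- which is precisely what the uniform expert (equivalently, the $\gamma/K$ term baked into the definition of $p_t$) guarantees --- we have $\hat r_{t,j}\le K/\gamma$ and hence $\hat y_{t,i}=\ve{\xi}^i_t\cdot\ve{\hat{r}}_t\le K/\gamma$, so $\tfrac{\gamma}{K}\hat y_{t,i}\le 1$. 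Second, by the unbiased-estimator construction of Section~\ref{sec:unbiased_estimator_trick}, conditioning on the history through round $t-1$ and the advice vectors, $\mathrm{E}_t[\hat r_{t,j}]=r_{t,j}$ and $\mathrm{E}_t[\hat y_{t,i}]=\ve{\xi}^i_t\cdot\ve{r}_t=:y_{t,i}$.

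For the upper bound, write $q_{t,i}=w_{t,i}/W_t$ and use $e^x\le 1+x+x^2$ (valid since $x=\tfrac{\gamma}{K}\hat y_{t,i}\le1$) together with $\ln(1+u)\le u$ to get
\begin{align*}
\ln\frac{W_{t+1}}{W_t}
&=\ln\sum_i q_{t,i}\exp\!\Big(\tfrac{\gamma}{K}\hat y_{t,i}\Big) \\
&\le \tfrac{\gamma}{K}\sum_i q_{t,i}\hat y_{t,i}+\tfrac{\gamma^2}{K^2}\sum_i q_{t,i}\hat y_{t,i}^2 .
\end{align*}
I would control the first-order term by the identity $\sum_i q_{t,i}\xi^i_{t,j}=(p_{t,j}-\gamma/K)/(1-\gamma)$, giving $\sum_i q_{t,i}\hat y_{t,i}\le\tfrac{1}{1-\gamma}\sum_j p_{t,j}\hat r_{t,j}=\tfrac{1}{1-\gamma}r_{t,a_t}$, and the second-order term by Jensen's inequality $\hat y_{t,i}^2\le\sum_j\xi^i_{t,j}\hat r_{t,j}^2$, giving $\sum_i q_{t,i}\hat y_{t,i}^2\le\tfrac{1}{1-\gamma}\sum_j p_{t,j}\hat r_{t,j}^2$, whose conditional expectation is $\tfrac{1}{1-\gamma}\sum_j r_{t,j}^2\le\tfrac{K}{1-\gamma}$. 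Summing over $t$ and taking expectations then yields
\begin{align*}
\mathrm{E}\Big[\ln\tfrac{W_{T+1}}{W_1}\Big]\le \frac{\gamma}{K(1-\gamma)}\,\mathrm{E}[G_{\mathrm{alg}}]+\frac{\gamma^2 T}{K(1-\gamma)} .
\end{align*}

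For the lower bound, for any fixed expert $i$ we have $W_{T+1}\ge w_{T+1,i}=\exp(\tfrac{\gamma}{K}\sum_t\hat y_{t,i})$ and $W_1=N$, so $\ln(W_{T+1}/W_1)\ge\tfrac{\gamma}{K}\sum_t\hat y_{t,i}-\ln N$ pathwise; maximizing over $i$, taking expectations, and using $\mathrm{E}[\max_i\sum_t\hat y_{t,i}]\ge\max_i\sum_t y_{t,i}=G_{\max}$ gives $\tfrac{\gamma}{K}G_{\max}-\ln N\le\mathrm{E}[\ln(W_{T+1}/W_1)]$. Chaining the two estimates, multiplying through by $K/\gamma$, and using $\mathrm{E}[G_{\mathrm{alg}}]\le T$ gives
\begin{align*}
G_{\max}-\mathrm{E}[G_{\mathrm{alg}}]\le \frac{K\ln N}{\gamma}+\frac{2\gamma T}{1-\gamma},
\end{align*}
and taking $\gamma=\min\{\tfrac12,\ \sqrt{K\ln N/T}\}$ balances the two terms at $R_T=O(\sqrt{TK\ln N})$.

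The step I expect to be the main obstacle is the second-order term. It has to line up in two ways: the Taylor bound $e^x\le1+x+x^2$ requires $x\le1$, which forces the estimate $\tfrac{\gamma}{K}\hat y_{t,i}\le1$, and this is exactly where the hypothesis is used --- the uniform expert (or the induced $\gamma/K$ forced exploration) is what lower-bounds $p_{t,a_t}$ and thereby bounds the importance-weighted estimators; without it the per-round variance of $\hat y_{t,i}$ is uncontrolled and the whole argument breaks. A smaller but genuine subtlety is that the guarantee is in expectation, so one must take the maximum over experts before the expectation and only then invoke unbiasedness --- i.e.\ use $\mathrm{E}[\max_i\sum_t\hat y_{t,i}]\ge\max_i\mathrm{E}[\sum_t\hat y_{t,i}]=G_{\max}$, not the reverse inequality.
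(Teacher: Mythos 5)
Your proof is correct and follows the same exponential-weights potential argument as the paper: the same potential $W_t=\sum_i w_{t,i}$, the same Taylor bound $e^x\le 1+x+cx^2$ for $x\le 1$, and the same key identity $\sum_i q_{t,i}\xi^i_{t,j}=(p_{t,j}-\gamma/K)/(1-\gamma)$ used to control both the first- and second-order terms. The one place you genuinely diverge is the second-order term: you apply Jensen to get $\hat y_{t,i}^2\le\sum_j\xi^i_{t,j}\hat r_{t,j}^2$ and then compute $\mathrm{E}_t\bigl[\sum_j p_{t,j}\hat r_{t,j}^2\bigr]=\sum_j r_{t,j}^2\le K$ directly, which makes your bound independent of the uniform-expert hypothesis and tunes $\gamma$ against $T$; the paper instead keeps the pathwise bound $\sum_i q_{t,i}\hat y_{t,i}^2\le \hat r_{t,a_t}/(1-\gamma)$ and invokes the uniform expert to replace $\sum_t\frac{1}{K}\sum_j r_{t,j}$ by $G_{\max}$, which buys the slightly sharper first-order form $2.63\sqrt{G_{\max}K\ln N}$ before relaxing $G_{\max}\le T$. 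Relatedly, your closing remark misplaces where that hypothesis acts: the lower bound $p_{t,a_t}\ge\gamma/K$ (hence $\tfrac{\gamma}{K}\hat y_{t,i}\le 1$) is guaranteed by the algorithm's forced $\gamma/K$ exploration mixing regardless of the expert family, and the uniform-expert assumption is needed only, in the paper's version, to express the variance term through $G_{\max}$ --- your own argument never actually uses it.
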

\begin{proof}
The general idea of the proof is to bound the expected cumulative
reward $\mathrm{E}\sum_{t=1}^T r_{t, a_t}$, then since $G_{max}$ is bounded by the
time horizon T, we can get a bound on $G_{max} - \mathrm{E}\sum_{t=1}^T r_{t, a_t}$.

Let $W_{t} = \sum_{i=1}^N w_{t, i}$, and $q_{t, i} = \frac{w_{t, i}}{W_t}$, then
\begin{align}
\frac{W_{t+1}}{W_t} &= \sum_{i=1}^{N} \frac{w_{t+1, i}}{W_t} \nonumber \\
&= \sum_{i=1}^{N} q_{t, i} \exp(\gamma \hat{y}_{t, i} / K) \nonumber \\
&\le \sum_{i=1}^N q_{t, i} \left[ 1 + \frac{\gamma}{K} \hat{y}_{t, i} +
  (e-2)(\frac{\gamma}{K} \hat{y}_{t, i})^2 \right] \label{prof:exp4thm1eq1}\\
&\le 1 + \frac{\gamma}{K} \sum_{i=1}^N q_{t, i} \hat{y}_{t, i} +
  (e-2)\left( \frac{\gamma}{K} \right)^2 \sum_{i=1}^N q_{t, i} \hat{y}_{t, i}^2
  \nonumber \\
&\le \exp \left( \frac{\gamma}{K} \sum_{i=1}^N q_{t, i} \hat{y}_{t, i} + (e-2)\left(
  \frac{\gamma}{K} \right)^2 \sum_{i=1}^N q_{t, i} \hat{y}_{t, i}^2 \right) \label{prof:exp4thm1eq2}
\end{align}
Equation \ref{prof:exp4thm1eq1} is due to $e^x \le 1 + x + (e - 2) x^2$ for $x \le
1$, Equation \ref{prof:exp4thm1eq2} is due to $1+x \le e^x$. Taking logarithms and summing over t
\begin{align}
  \ln \frac{W_{T+1}}{W_1} \le \frac{\gamma}{K} \sum_{t=1}^T \sum_{i=1}^N q_{t, i} \hat{y}_{t,
  i} + (e-2)\left(\frac{\gamma}{K} \right)^2 \sum_{t=1}^T \sum_{i=1}^N q_{t, i}
  \hat{y}_{t, i}^2 \label{prof:exp4thm1eq3}
\end{align}
For any expert k
\begin{align*}
\ln \frac{W_{T+1}}{W_1} & \ge \ln \frac{w_{T+1, k}}{W_1} \\
&= \ln w_{1, k} + \sum_{t=1}^T (\frac{\gamma}{K} \hat{y}_{t, i}) - \ln W_1
  \\
&= \frac{\gamma}{K} \sum_{t=1}^T \hat{y}_{t, i} - \ln N
\end{align*}
Together with Equation\ \ref{prof:exp4thm1eq3} we get
\begin{align}
  \sum_{t=1}^T\sum_{i=1}^N q_{t, i} \hat{y}_{t, i} \ge \sum_{t=1}^T \hat{y}_{t,
  k} - \frac{K \ln N}{\gamma} - (e-2) \frac{r}{K} \sum_{t=1}^T \sum_{i=1}^N
  q_{t,i}\hat{y}_{t, i}^2 \label{prof:exp4thm1eq4}
\end{align}
Now we need to bound $\sum_{i=1}^N q_{t, i}\hat{y}_{t, i}$ and $\sum_{i=1}^N
q_{t, i}\hat{y}_{t, i}^2$. From the definition of $p_{t, i}$ we have
$\sum_{i=1}^N q_{t, i} \xi^i_{t, j} = \frac{p_{t, j} - \gamma/K}{1-\gamma}$, so
\begin{align*}
  \sum_{i=1}^N q_{t, i} \hat{y}_{t, i} &= \sum_{i=1}^N q_{t, i}\left(\sum_{j=1}^K
                                         \xi_{t, j}^i \hat{r}_{t, j}\right) \\
&= \sum_{j=1}^K \left(\sum_{i=1}^N q_{t, i} \xi_{t, j}^i  \right) \hat{r}_{t, j}
  \\
&= \sum_{j=1}^K \left( \frac{p_{t, j} - \gamma/K}{1-\gamma} \right) \hat{r}_{t,
  j} \\
&\le \frac{r_{t, a_t}}{1-\gamma}
\end{align*}
where $a_t$ is the arm pulled at time $t$.
\begin{align*}
\sum_{i=1}^N q_{t, i} \hat{y}_{t, i}^2 &= \sum_{i=1}^N q_{t, i} (\xi^i_{t, a_t}  \hat{r}_{t,
                                         a_t})^2 \\
& \le \sum_{i=1}^N q_{t, i} (\xi_{t, a_t}^i)^2 \hat{r}_{t, a_t}^2 \\
& \le \sum_{i=1}^N q_{t, i} \xi_{t, a_t}^i \hat{r}_{t, a_t}^2 \\
& \le \hat{r}_{t, a_t}^2 \frac{p_{t, a_t}}{1-\gamma} \\
& \le \frac{\hat{r}_{t, a_t}}{1-\gamma}
\end{align*}
Together with Equation \ref{prof:exp4thm1eq4} we have
\begin{align*}
  \sum_{t=1}^T r_{t, a_t} &\ge (1-\gamma) \sum_{t=1}^T \hat{y}_{t, k} - \frac{K \ln
  N}{\gamma}(1-\gamma) - (e-2)\frac{\gamma}{K}\sum_{t=1}^T \sum_{j=1}^K \hat{r}_{t, j} \\
  &\ge (1-\gamma) \sum_{t=1}^T \hat{y}_{t, k} - \frac{K \ln N}{\gamma} -
    (e-2)\frac{\gamma}{K}\sum_{t=1}^T \sum_{j=1}^K \hat{r}_{t, j} \\
\end{align*}
Taking expectation of both sides of the inequality we get
\begin{align}
\mathrm{E} \sum_{t=1}^T r_{t, a_t} & \ge (1-\gamma)\sum_{t=1}^T y_{t, k} -
                                      \frac{K\ln N}{\gamma} -
                                      (e-2)\frac{\gamma}{K} \sum_{t=1}^T
                                      \sum_{j=1}^K r_{t, j} \nonumber \\
& \ge (1-\gamma)\sum_{t=1}^T y_{t, k} - \frac{K\ln N}{\gamma} -
  (e-2) \gamma \sum_{t=1}^T \frac{1}{K}\sum_{j=1}^K r_{t, j}  \label{prof:exp4thm1eq5}
\end{align}
Since there is a uniform expert in the expert set, so $G_{max} \ge \sum_{t=1}^T
\frac{1}{K}\sum_{j=1}^K r_{t, j}$. Let $G_{exp4} = \sum_{t=1}^T r_{t,
a_t}$, then Equation\ \ref{prof:exp4thm1eq5} can be rewritten as
\begin{align*}
\mathrm{E} G_{exp4} \ge (1-\gamma)\sum_{t=1}^T y_{t, k} - \frac{K\ln N}{\gamma} -
  (e-2)\gamma G_{max}
\end{align*}
For any $k$. Let $k$ be the arm with the highest expected reward, then we have
\begin{align*}
\mathrm{E} G_{exp4} \ge (1-\gamma) G_{max} - \frac{K\ln N}{\gamma} -
  (e-2)\gamma G_{max} \\
G_{max} - \mathrm{E} G_{exp4} \le \frac{K \ln N}{\gamma} + (e-1)\gamma G_{max}
\end{align*}
We need to select a $\gamma$ such that the right-hand side of the above
inequality is minimized so that the regret bound is minimized. An additional
constrain is that $\gamma \le 1$. Taking the
derivative with respect to $\gamma$ and setting to 0, we get
\begin{align*}
\gamma^* = \min\left\{1, \sqrt{\frac{K\ln N}{(e-1)G_{max}}}\right\} \\
G_{max} - \mathrm{E} G_{exp4} \le 2.63 \sqrt{G_{max} K \ln N}
\end{align*}
Since $G_{max} \le T$, we have $R_T = O(\sqrt{TK\ln N})$. One important thing
to notice is that to get such regret bound it requires the knowledge of $T$,
the time horizon. Later we will introduce algorithms that does not require such
knowledge.
\end{proof}
\subsection{EXP4.P}
The unbiased estimator of the reward vector used by EXP4 has high variance due
to the increased range of the random variable
$r_{a_t}/p_{a_t}$ \citep{dudik2014doubly}, and the regret bound of EXP4,
$O(\sqrt{TK\ln N})$, is hold only
with expectation. EXP4.P \citep{beygelzimer2010contextual} improves this result and
achieves the same regret with high probability.
To do this, EXP4.P combines the idea of both UCB \citep{auer2002finite} and
EXP4. It computes the confidence interval of the reward vector estimator and
hence bound the cumulative reward of each expert with high probability, then it
designs an strategy to weight each expert.

Similar to the EXP4 algorithm setting, there are K arms $\{1, 2, ..., K\}$ and N
experts $\{1, 2, ..., N\}$. At time $t \in \{1, ..., T\}$, the world reveals
context $x_t$, and each expert $i$ outputs
an advice vector $\ve{\xi}^i_t$ representing its recommendations on each
arm. The agent then selects an arm $a_t$ based on the advice, and an adversary
chooses a reward vector $\ve{r}_t$. Finally the world reveals the reward of the chosen
arm $r_{t, a_t}$. Let $G_i$ be the expected cumulative reward of expert $i$:
\begin{align*}
  G_i = \sum_{t=1}^T \ve{\xi}^i_t \cdot \ve{r}
\end{align*}
let $p_j$ be the algorithm's probability of pulling arm $j$, and let $\ve{\hat{r}}$
be the estimated reward vector, where
\[
\hat{r}_j =
\begin{cases}
  r_j / p_j & \text{if\quad } j=a_t \\
  0 & \text{if\quad } j \ne a_t
\end{cases}
\]
let $\hat{G}_i$ be the estimated expected cumulative reward of expert $i$:
\begin{align*}
\hat{G}_i = \sum_{t=1}^T \ve{\xi}^i_t \cdot \ve{\hat{r}}
\end{align*}
let $G_{exp4.p}$ be the estimated cumulative reward of the algorithm:
\begin{align*}
G_{exp4.p} = \sum_{t=1}^T r_{a_t}
\end{align*}
then the expected regret of the algorithm is 
\begin{align*}
R_T = \max_i G_i - \mathrm{E}G_{exp4.p}
\end{align*}
However, we are interested in regret bound which hold with arbitrarily high
probability. The regret is bounded by $\epsilon$
with probability $1-\delta$ if 
\begin{align*}
P\left( \left(\max_i G_i - G_{exp4.p}\right) > \epsilon \right) \le \delta
\end{align*}
We need to bound $G_i - \hat{G}_i$ with high probability so that we can bound the
regret with high probability. To do that, we need to use the following theorem:

\begin{theorem}
\label{thm:exp4pbasethm}
Let $X_1, ..., X_T$ be a sequence of real-valued random variables. Suppose that $X_t \le
R$ and $\mathrm{E}(X_t) = 0$. Define the random variables
\begin{align*}
  S = \sum_{t=1}^T X_t, \text{\quad} V=\sum_{t=1}^T \mathrm{E}(X_t^2)
\end{align*}
then for any $\delta$, with probability $1-\delta$, we have
\begin{align*}
S \le
\begin{cases}
\sqrt{(e-2)\ln(1/\delta)}\left( \frac{V}{\sqrt{V'}} + \sqrt{V'} \right) &
\text{if\quad } V' \in \left[ \frac{R^2\ln(1/\delta)}{e-2}, \infty \right)\\
R\ln (1/\delta) + (e-2)\frac{V}{R} & \text{if\quad } V' \in \left[ 0, \frac{R^2 \ln(1/\delta)}{e-2} \right]
\end{cases}
\end{align*}
\end{theorem}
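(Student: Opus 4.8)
The plan is to prove Theorem~\ref{thm:exp4pbasethm} by the standard exponential-supermartingale (Bernstein-type martingale) argument, with the one twist that the free parameter $V'$ is used only to select a \emph{deterministic} tuning of the exponential rate, which sidesteps the usual union bound over a grid of rates. Throughout I would read the second-moment terms as conditional second moments: let $\mathcal{F}_{t-1}$ be the $\sigma$-algebra generated by $X_1,\dots,X_{t-1}$, assume $\mathrm{E}[X_t\mid\mathcal{F}_{t-1}]=0$, and write $V_t=\sum_{s\le t}\mathrm{E}[X_s^2\mid\mathcal{F}_{s-1}]$ so that $V=V_T$ (when the $X_t$ are independent this coincides with the statement exactly as written).

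\textbf{One-step moment bound and the supermartingale.} First I would invoke the elementary inequality $e^x\le 1+x+(e-2)x^2$, valid for all $x\le 1$ (the same bound used in the EXP4 proof). Fixing any $\lambda\in[0,1/R]$, the hypothesis $X_t\le R$ gives $\lambda X_t\le 1$, so
\[
\mathrm{E}[e^{\lambda X_t}\mid\mathcal{F}_{t-1}]\le 1+\lambda\,\mathrm{E}[X_t\mid\mathcal{F}_{t-1}]+(e-2)\lambda^2\,\mathrm{E}[X_t^2\mid\mathcal{F}_{t-1}]=1+(e-2)\lambda^2\,\mathrm{E}[X_t^2\mid\mathcal{F}_{t-1}]\le\exp\!\big((e-2)\lambda^2\,\mathrm{E}[X_t^2\mid\mathcal{F}_{t-1}]\big),
\]
using $\mathrm{E}[X_t\mid\mathcal{F}_{t-1}]=0$ and $1+y\le e^y$. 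Hence $M_t=\exp\!\big(\lambda S_t-(e-2)\lambda^2 V_t\big)$, with $S_t=\sum_{s\le t}X_s$ and $M_0=1$, satisfies $\mathrm{E}[M_t\mid\mathcal{F}_{t-1}]\le M_{t-1}$, so $(M_t)$ is a nonnegative supermartingale with $\mathrm{E}[M_T]\le 1$. Markov's inequality then gives $\Pr[M_T\ge 1/\delta]\le\delta$, so with probability at least $1-\delta$,
\[
\lambda S-(e-2)\lambda^2 V\le\ln(1/\delta),\qquad\text{equivalently}\qquad S\le\frac{\ln(1/\delta)}{\lambda}+(e-2)\lambda V .
\]
This ``master inequality'' holds for every fixed $\lambda\in[0,1/R]$, with $V$ the actual (possibly random) conditional-variance sum.

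\textbf{Choosing the rate; the main obstacle.} The unconstrained minimizer of $\lambda\mapsto\ln(1/\delta)/\lambda+(e-2)\lambda V$ is $\sqrt{\ln(1/\delta)/((e-2)V)}$, but this value is random and may exceed $1/R$, which would violate the $\lambda X_t\le1$ step; handling this is the only real obstacle in the argument, and it is exactly what the proxy $V'$ is for. Instead I would plug in the deterministic value $\lambda=\min\{1/R,\ \sqrt{\ln(1/\delta)/((e-2)V')}\}$, which is $\le 1/R$, into the master inequality and split on which term attains the minimum. If $V'\ge R^2\ln(1/\delta)/(e-2)$ then $\sqrt{\ln(1/\delta)/((e-2)V')}\le 1/R$, so $\lambda=\sqrt{\ln(1/\delta)/((e-2)V')}$ and the two terms simplify to $\sqrt{(e-2)\ln(1/\delta)}\,V/\sqrt{V'}$ and $\sqrt{(e-2)\ln(1/\delta)}\,\sqrt{V'}$, giving the first branch. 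If instead $V'\le R^2\ln(1/\delta)/(e-2)$ then $\lambda=1/R$ and the bound reads $R\ln(1/\delta)+(e-2)V/R$, the second branch. Both substitutions are routine algebra, so the whole proof reduces to the supermartingale construction plus this case distinction.
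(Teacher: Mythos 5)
Your proposal is correct, and in fact it supplies something the survey itself omits: Theorem~\ref{thm:exp4pbasethm} is stated here without any proof (it is quoted from the EXP4.P paper of Beygelzimer et al.), so there is no in-paper argument to compare against. Your exponential-supermartingale derivation is essentially the proof given in that original source: the one-step bound via $e^{x}\le 1+x+(e-2)x^{2}$ for $x\le 1$, the supermartingale $M_t=\exp(\lambda S_t-(e-2)\lambda^2 V_t)$ with $\mathrm{E}[M_T]\le 1$, Markov's inequality to get the master bound $S\le \ln(1/\delta)/\lambda+(e-2)\lambda V$, and the deterministic choice $\lambda=\min\{1/R,\sqrt{\ln(1/\delta)/((e-2)V')}\}$ whose two cases reproduce the two branches exactly; I checked the algebra in both branches and it is right. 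Your one substantive editorial decision — reading $\mathrm{E}(X_t)=0$ and $\mathrm{E}(X_t^2)$ as conditional on $\mathcal{F}_{t-1}$ — is not just permissible but necessary: as literally written (unconditional mean zero, no independence), the statement does not follow from this argument and is not what the application needs, since in EXP4.P the increments $\ve{\xi}^i_t\cdot\ve{r}_t-\ve{\xi}^i_t\cdot\ve{\hat{r}}_t$ form a martingale difference sequence rather than an independent one. You correctly identified that the whole point of the proxy $V'$ is to make the tuning of $\lambda$ deterministic so that no union bound over rates (and no peeking at the random variance $V$) is required.
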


\noindent To bound $\hat{G}_i$, let $X_t = \ve{\xi}^i_t \cdot \ve{r_t} - \ve{\xi}^i_t
\cdot \ve{\hat{r}_t}$, so $\mathrm{E}(X_t) = 0$, $R = 1$ and
\begin{align*}
  \mathrm{E}(X_t^2) &\le \mathrm{E} (\ve{\xi}^i_t \cdot \ve{\hat{r}_t})^2 \\
&= \sum_{j=1}^K p_{t, j} \left( \xi_{t, j}^i \cdot \frac{r_{t, j}}{p_{t, j}} \right)^2 \\
  &\le \sum_{j=1}^K \frac{\xi^i_{t, j}}{p_{t, j}} \\
&\overset{\text{def}}{=} \hat{v}_{t, i}
\end{align*}
The above proof used the fact that $r_{t, j} \le 1$. Let $V' = KT$, assume $\ln
(N / \delta) \le KT$, and use $\delta/N$ instead of $\delta$, we can apply Theorem
\ref{thm:exp4pbasethm} to get
\begin{align*}
 P\left(G_i - \hat{G}_i \ge \sqrt{(e-2) \ln \frac{N}{\delta}}\left(\frac{\sum_{t=1}^T \hat{v}_{t,
  i}}{\sqrt{KT}} + \sqrt{KT}\right) \right) \le \frac{\delta}{N}
\end{align*}
Apply union bound we get:
\begin{theorem}
  \label{thm:exp4pcithm}
  Assume $\ln (N / \delta) \le KT$, and define $\hat{\sigma}_i = \sqrt{KT} +
  \frac{1}{\sqrt{KT}}\sum_{t=1}^T \hat{v}_{t, i}$, we have that with probability $1-\delta$
\begin{align*}
 \sup_{i} (G_i - \hat{G}_i) \le \sqrt{\ln \frac{N}{\delta}} \hat{\sigma}_i
\end{align*}
\end{theorem}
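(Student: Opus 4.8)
The plan is to derive the statement as a union bound, over the $N$ experts, of the single-expert tail estimate obtained just above from Theorem~\ref{thm:exp4pbasethm}. First I would fix an expert $i$ and apply Theorem~\ref{thm:exp4pbasethm} to the variables $X_t = \ve{\xi}^i_t\cdot\ve{r}_t - \ve{\xi}^i_t\cdot\ve{\hat r}_t$: these have $\mathrm{E}(X_t)=0$, and since $\ve{\hat r}_t$ has nonnegative entries and $\ve{\xi}^i_t$ is a distribution over arms with $\ve{r}_t\in[0,1]^K$, we get $X_t\le \ve{\xi}^i_t\cdot\ve{r}_t\le 1$, so $R=1$. The computation preceding the theorem statement already shows $\mathrm{E}(X_t^2)\le\hat v_{t,i}$, hence $V=\sum_{t=1}^T\mathrm{E}(X_t^2)\le\sum_{t=1}^T\hat v_{t,i}$, and $S=\sum_t X_t = G_i-\hat G_i$. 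Choosing the free parameter $V'=KT$ is what fixes the shape of the bound.

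Next I would check the branch condition. Replacing $\delta$ by $\delta/N$ in Theorem~\ref{thm:exp4pbasethm} (with $R=1$), the first branch applies as soon as $KT=V'\ge \ln(N/\delta)/(e-2)$; the hypothesis $\ln(N/\delta)\le KT$ is exactly this regime up to the absolute constant $e-2<1$, so we land in the first case and obtain, for each fixed $i$,
\begin{align*}
P\!\left(G_i-\hat G_i \ge \sqrt{(e-2)\ln(N/\delta)}\left(\frac{\sum_{t=1}^T\hat v_{t,i}}{\sqrt{KT}}+\sqrt{KT}\right)\right)\le \frac{\delta}{N}.
\end{align*}
Since $e-2<1$ I would bound $\sqrt{e-2}\le 1$, and then recognize that $\sqrt{KT}+\frac{1}{\sqrt{KT}}\sum_{t=1}^T\hat v_{t,i}=\hat\sigma_i$, so the event inside the probability is contained in $\{\,G_i-\hat G_i\ge \sqrt{\ln(N/\delta)}\,\hat\sigma_i\,\}$, which therefore also has probability at most $\delta/N$.

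Finally I would take a union bound over $i=1,\dots,N$: the probability that $G_i-\hat G_i\ge \sqrt{\ln(N/\delta)}\,\hat\sigma_i$ holds for \emph{some} $i$ is at most $N\cdot(\delta/N)=\delta$, so with probability at least $1-\delta$ the reverse inequality holds simultaneously for all $i$, which is the assertion (the ``$\sup_i$'' in the displayed claim is to be read as ``for every $i$'', since the right-hand side still carries the index $i$). There is no real obstacle here: the only two points that need care are the second-moment estimate $\mathrm{E}(X_t^2)\le\hat v_{t,i}$ and confirming that the hypothesis on $KT$ places us in the first branch of Theorem~\ref{thm:exp4pbasethm}; everything else is bookkeeping and the constant cleanup $\sqrt{e-2}\le 1$.
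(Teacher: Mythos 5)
Your proof is correct and takes essentially the same route as the paper: the text preceding the theorem sets up exactly the same $X_t$, the bound $\mathrm{E}(X_t^2)\le\hat v_{t,i}$, the choice $V'=KT$ with $\delta$ replaced by $\delta/N$ in Theorem~\ref{thm:exp4pbasethm}, and then concludes by the same union bound over the $N$ experts. The only quibble is a wording slip in your containment step --- the event $\{G_i-\hat G_i\ge \sqrt{\ln(N/\delta)}\,\hat\sigma_i\}$ is contained in the event with the smaller threshold $\sqrt{(e-2)\ln(N/\delta)}\,\hat\sigma_i$, not the reverse --- but the monotonicity conclusion you draw is the right one and the argument stands.
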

The confidence interval we get from Theorem \ref{thm:exp4pcithm} is used to
construct EXP4.P algorithm. The detail of the algorithm is described in Algorithm \ref{algo:exp4.p}. We
can see that EXP4.P is very similar to EXP4 algorithm, except that when
updating $w_{t, i}$, instead of using estimated reward, we use the upper
confidence bound of the estimated reward. 
\begin{algorithm}[!htp]
\caption{EXP4.P}
\label{algo:exp4.p}
\begin{algorithmic}
  \Require $\delta > 0$
  \State Define $p_{min}=\sqrt{\frac{\ln N}{KT}}$, set $w_{1, i} = 1$ for $i=1,...,N$.
\For{$t=1, 2, ... T$}
\State Get expert advice vectors $\{\ve{\xi^1_t}, \ve{\xi^2_t}, ..., \ve{\xi^N_t}\}$.
\For{$j=1, 2, ..., K$}
\begin{align*}
  p_{t, j} = (1-Kp_{min})\sum_{i=1}^N  \frac{w_{t, i}\xi^i_{t, j}}{\sum_{i=1}^N w_{t,
  i}} + p_{min}
\end{align*}
\EndFor
\State Draw action $a_t$ according to $p_t$ and receive reward $r_{a_t}$.
  \For{$j=1, ..., K$} 
  \begin{align*}
    \hat{r}_{t, j} = \frac{r_{t, j}}{p_{t, j}} \mathds{I} (j = a_t)
  \end{align*}
  \EndFor
  \For{$i=1, ..., N$}
\begin{align*} 
  \hat{y}_{t, i} &= \ve{\xi}^i_t \cdot \ve{\hat{r}}_{t} \\
  \hat{v}_{t, i} &= \sum_{j=1}^K \xi^i_{t, j}/p_{t, j} \\
  w_{t+1, i} &= w_{t, i} \exp\left(\frac{p_{min}}{2}\left(\hat{y}_{t, i} + \hat{v}_{t, i}
               \sqrt{\frac{\ln (N/\delta)}{KT}} \right)\right)
  \end{align*}
  \EndFor
\EndFor
\end{algorithmic}
\end{algorithm}

\begin{theorem}
  Assume that $\ln (N/\delta) \le KT$, and the set of experts includes a uniform
  expert which selects an arm uniformly at randomly at each time. Then with
  probability $1-\delta$
\begin{align*}
R_T = \max_i G_i - G_{exp4.p} \le 6\sqrt{KT\ln (N/\delta)}
\end{align*}
\end{theorem}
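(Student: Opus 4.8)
The plan is to mirror the exponential-weights potential argument from the EXP4 proof, with two changes: the weights are updated using the \emph{optimistic} estimate $y'_{t,i} = \hat y_{t,i} + \hat v_{t,i}\sqrt{\ln(N/\delta)/(KT)}$ instead of $\hat y_{t,i}$, and at the end the confidence interval of Theorem~\ref{thm:exp4pcithm} is used to pass from estimated to true cumulative rewards. So I first condition on the event of Theorem~\ref{thm:exp4pcithm}, which holds with probability $1-\delta$ and gives, for every expert $i$ simultaneously, $G_i - \hat G_i \le \sqrt{\ln(N/\delta)}\,\hat\sigma_i$ with $\hat\sigma_i = \sqrt{KT} + \tfrac{1}{\sqrt{KT}}\sum_{t=1}^T\hat v_{t,i}$; everything afterwards is deterministic on this event. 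I may also assume $6\sqrt{KT\ln(N/\delta)} < T$, since otherwise the claim is immediate from $\max_i G_i \le T$ and $G_{exp4.p}\ge 0$; in that regime $Kp_{min} = \sqrt{K\ln N/T}$ is bounded well below $1$, so factors $1/(1-Kp_{min})$ are harmless.

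Next I set $W_t = \sum_i w_{t,i}$, $q_{t,i} = w_{t,i}/W_t$, so that $w_{t+1,i} = w_{t,i}\exp(\tfrac{p_{min}}{2}y'_{t,i})$. Two elementary observations drive everything: $p_{t,j}\ge p_{min}$ gives $\hat y_{t,i}\le 1/p_{min}$ and $\hat v_{t,i}\le 1/p_{min}$, and $\hat y_{t,i} = \xi^i_{t,a_t}r_{t,a_t}/p_{t,a_t}\le \xi^i_{t,a_t}/p_{t,a_t}\le \hat v_{t,i}$, i.e.\ the point estimate never exceeds its companion $\hat v$. Using the first together with $\ln(N/\delta)\le KT$, the exponent $\tfrac{p_{min}}{2}y'_{t,i}$ is at most $1$, so $e^x\le 1+x+x^2$ applies; taking logarithms and telescoping,
\[
\ln\frac{W_{T+1}}{W_1}\ \le\ \frac{p_{min}}{2}\sum_{t,i}q_{t,i}y'_{t,i}\ +\ \frac{p_{min}^2}{4}\sum_{t,i}q_{t,i}(y'_{t,i})^2 .
\]
With the identity $\sum_i q_{t,i}\xi^i_{t,j} = (p_{t,j}-p_{min})/(1-Kp_{min})$ (immediate from the definition of $p_{t,j}$), one gets $\sum_i q_{t,i}\hat y_{t,i}\le r_{t,a_t}/(1-Kp_{min})$, $\sum_i q_{t,i}\hat v_{t,i}\le K/(1-Kp_{min})$, and $\sum_i q_{t,i}\hat y_{t,i}^2\le \hat r_{t,a_t}/(1-Kp_{min})$, so the first sum is $\le (G_{exp4.p}+\sqrt{KT\ln(N/\delta)})/(1-Kp_{min})$ and the second, after also using $\hat y_{t,i}\le\hat v_{t,i}$ and $\sum_t\hat r_{t,a_t}=K\hat G_u$ for the uniform expert $u$, is a sum of terms of order $\sqrt{KT\ln(N/\delta)}$ and below once the $p_{min}^2$ prefactor is carried through. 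The appearance of $\hat G_u$ is why the uniform expert is needed: a crude deterministic bound on $\hat G_u$ is too weak, but $\hat G_u = O(T)$ holds with high probability by a Bernstein bound on $\hat G_u - G_u$ plus $G_u\le T$, and this extra failure probability is folded into $\delta$.

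The other half is the lower bound: keeping only the best expert $k$ in $W_{T+1}$ gives $\ln(W_{T+1}/W_1)\ge \tfrac{p_{min}}{2}\big(\hat G_k + \sqrt{\ln(N/\delta)/(KT)}\sum_t\hat v_{t,k}\big) - \ln N$. Equating the two bounds, dividing by $p_{min}/2$, and rearranging isolates $\hat G_k + \sqrt{\ln(N/\delta)/(KT)}\sum_t\hat v_{t,k}$ on the left against $\tfrac{2}{p_{min}}\ln N = 2\sqrt{KT\ln N}$, a near-unit multiple of $G_{exp4.p}$, and order-$\sqrt{KT\ln(N/\delta)}$ residuals. Then I apply Theorem~\ref{thm:exp4pcithm} in the form $\hat G_k\ge G_k - \sqrt{\ln(N/\delta)}\big(\sqrt{KT}+\tfrac{1}{\sqrt{KT}}\sum_t\hat v_{t,k}\big)$ and note the exact cancellation: $\sqrt{\ln(N/\delta)/(KT)}\sum_t\hat v_{t,k}$ is identically $\sqrt{\ln(N/\delta)}\cdot\tfrac{1}{\sqrt{KT}}\sum_t\hat v_{t,k}$, the data-dependent part of $\hat\sigma_k$, so the $\sum_t\hat v_{t,k}$ contributions annihilate and the surviving $\hat\sigma_k$ contribution is just $\sqrt{KT\ln(N/\delta)}$. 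Taking $k=\argmax_i G_i$, $p_{min}=\sqrt{\ln N/(KT)}$, and bounding $1/(1-Kp_{min})$ and $G_{exp4.p}\le T$ as above, the surviving terms sum to at most $6\sqrt{KT\ln(N/\delta)}$, giving $\max_i G_i - G_{exp4.p}\le 6\sqrt{KT\ln(N/\delta)}$.

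The main obstacle is precisely this calibration step together with showing the second-order sum does not inflate the leading order. The EXP4.P weight update was designed so that the bonus $\hat v_{t,i}\sqrt{\ln(N/\delta)/(KT)}$, once accumulated in the potential, reproduces in the per-expert lower bound exactly the term needed to cancel the confidence width $\hat\sigma_k$; getting this cancellation to go through cleanly, and checking that the variance term, which feeds on $\sum_t\hat r_{t,a_t}=K\hat G_u$ and is tamed via the uniform expert and $\hat y_{t,i}\le\hat v_{t,i}$, stays at order $\sqrt{KT\ln(N/\delta)}$, is where essentially all the work lies; everything else is the same bookkeeping as the EXP4 proof.
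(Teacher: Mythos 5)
Your proposal is correct and follows essentially the same route as the paper: the exponential-weights potential with the optimistic update, the second-order expansion of $e^x$, the three moment inequalities $\sum_i q_{t,i}\hat y_{t,i}\le r_{t,a_t}/(1-Kp_{min})$, $\sum_i q_{t,i}\hat y_{t,i}^2\le \hat r_{t,a_t}/(1-Kp_{min})$, $\sum_i q_{t,i}\hat v_{t,i}\le K/(1-Kp_{min})$, and the exact cancellation of the accumulated bonus $\sqrt{\ln(N/\delta)/(KT)}\sum_t\hat v_{t,k}$ against the data-dependent part of $\hat\sigma_k$ from Theorem~\ref{thm:exp4pcithm}. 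The only minor deviation is your treatment of the uniform expert: you invoke an extra Bernstein bound to get $\hat G_u = O(T)$ (costing a bit of failure probability), whereas the paper bounds $\hat G_{uniform}$ deterministically by $\hat U = \max_i(\hat G_i + \hat\sigma_i\sqrt{\ln(N/\delta)})$ and absorbs the resulting $\hat U$ term into the left-hand side, which is slightly cleaner but not substantively different.
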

\begin{proof}
The proof is similar to the proof of regret bound of EXP4. Basically, we want to
bound $G_{exp4.p} = \sum_{t=1}^T r_{a_t}$, and 
since we can bound $\max_i G_i$ with high probability, we then get the regret of
EXP4.P with high probability.

\noindent Let $q_{t, i} = \frac{w_{t, i}}{\sum_i w_{t, i}}$, $\gamma =
\sqrt{\frac{K\ln N}{T}}$, and $\hat{U} = \max_i (\hat{G}_i +
\hat{\sigma}_i\sqrt{\ln (N/\delta)})$. We need the following inequalities
\begin{align*}
  \hat{v}_{t, i} \le 1/p_{min} \\
  \sum_{i=1}^N q_{t, i} \hat{v}_{t, i} \le \frac{K}{1-\gamma}
\end{align*}
To see why this is true:
\begin{align*}
\sum_{i=1}^N q_{t, i} \hat{v}_{t, i} &= \sum_{i=1}^N q_{t, i} \sum_{j=1}^K
                                       \frac{\xi^i_{t, j}}{p_{t, j}} \\
&= \sum_{j=1}^K \frac{1}{p_{t, j}} \sum_{i=1}^N q_{t, i} \xi^i_{t, j} \\
&\le \sum_{j=1}^K \frac{1}{1-\gamma} \\
&=\frac{K}{1-\gamma}
\end{align*}
We also need the following two inequalities, which has been proved in Section \ref{sec:exp4}.
\begin{align*}
  \sum_{i=1}^N q_{t, i}\hat{y}_{t, i} \le \frac{r_{t, a_t}}{1-\gamma} \\
  \sum_{i=1}^N q_{t, i}\hat{y}_{t, i}^2 \le \frac{\hat{r}_{t, a_t}}{1-\gamma}
\end{align*}

\noindent Let $b=\frac{p_{min}}{2}$ and $c=\frac{p_{min}\sqrt{\ln
    (N/\delta)}}{2\sqrt{KT}}$, then 
\begin{align*}
\frac{W_{t+1}}{W_t} &= \sum_{i=1}^N \frac{w_{t+1, i}}{W_t} \\
&= \sum_{i=1}^N q_{t, i} \exp(b\hat{y}_{t, i} + c\hat{v}_{t, i})
\end{align*}
Since $e^a \le 1 + a + (e-2)a^2$ for $a \le 1$ and $e-2 \le 1$, we have
\begin{align*}
\frac{W_{t+1}}{W_t} &\le \sum_{i=1}^N q_{t, i}(1+b\hat{y}_{t, i} + c\hat{v}_{t,
  i}) + \sum_{i=1}^N q_{t, i} (2b^2\hat{y}_{t, i}^2 + 2c^2\hat{v}_{t, i}^2) \\
&= 1 + b\sum_{i=1}^N q_{t, i}\hat{y}_{t, i} + c\sum_{i=1}^N q_{t,i}\hat{v}_{t,i}
  + 2b^2\sum_{i=1}^N q_{t, i}\hat{y}_{t,i}^2 + 2c^2\sum_{i=1}^N q_{t,
  i}\hat{v}_{t, i}^2 \\
&\le 1 + b\frac{r_{t, a_t}}{1-\gamma} + c\frac{K}{1-\gamma} +
  2b^2\frac{\hat{r}_{t, a_t}}{1-\gamma} + 2c^2 \sqrt{\frac{KT}{\ln N}} \frac{K}{1-\gamma}
\end{align*}
Take logarithms on both side, sum over T and make use of the fact that $\ln(1+x)
\le x$ we have
\begin{align*}
\ln \left(\frac{W_{T+1}}{W_1}\right) \le \frac{b}{1-\gamma}\sum_{t=1}^T r_{t,
  a_t} + c\frac{KT}{1-\gamma} + \frac{2b^2}{1-\gamma}\sum_{t=1}^T \hat{r}_{t,
  a_t} + 2c^2 \sqrt{\frac{KT}{\ln N}}\frac{KT}{1-\gamma}
\end{align*}
Let $\hat{G}_{uniform}$ be the estimated cumulative reward of the uniform
expert, then 
\begin{align*}
\hat{G}_{uniform} &= \sum_{t=1}^T \sum_{j=1}^K \frac{1}{K} \hat{r}_{j} \\
&= \sum_{t=1}^T \frac{1}{K} \hat{r}_{t, a_t}
\end{align*}
So
\begin{align*}
\ln \left(\frac{W_{T+1}}{W_1}\right) &\le \frac{b}{1-\gamma}\sum_{t=1}^T r_{t,
  a_t} + c\frac{KT}{1-\gamma} + \frac{2b^2}{1-\gamma}\sum_{t=1}^T K
  \hat{G}_{uniform} + 2c^2 \sqrt{\frac{KT}{\ln N}}\frac{KT}{1-\gamma} \\
&\le \frac{b}{1-\gamma}\sum_{t=1}^T r_{t,
  a_t} + c\frac{KT}{1-\gamma} + \frac{2b^2}{1-\gamma}\sum_{t=1}^T K \hat{U} + 2c^2 \sqrt{\frac{KT}{\ln N}}\frac{KT}{1-\gamma}
\end{align*}
Also 
\begin{align*}
  \ln (W_{T+1}) &\ge \max_i (\ln w_{T+1, i}) \\
&= \max_i \left(b\hat{G}_i + c\sum_{t=1}^T \hat{v}_{t, i}\right) \\
&= b\hat{U} - b\sqrt{KT\ln (N/\delta)}
\end{align*}
So
\begin{align*}
b\hat{U} - b\sqrt{KT\ln (N/\delta)}  - \ln N \le \frac{b}{1-\gamma} G_{exp4.p} +
  c\frac{KT}{1-\gamma} + \frac{2b^2}{1-\gamma}\sum_{t=1}^T K \hat{U} + 2c^2
  \sqrt{\frac{KT}{\ln N}}\frac{KT}{1-\gamma} \\
G_{exp4.p} \ge \left( 1-2\sqrt{\frac{K\ln N}{T}} \right)\hat{U} - \ln(N/\delta) -
  2\sqrt{KT\ln N} - \sqrt{KT\ln(N/\delta)}
\end{align*}
We already know from Theorem \ref{thm:exp4pbasethm} that $\max_i G_i \le
\hat{U}$ with probability $1-\delta$, and also $\max_i G_i \le T$, so with
probability $1-\delta$
\begin{align*}
G_{exp4.p} &\ge \max_i G_i - 2\sqrt{\frac{K\ln N}{T}}T - \ln(N/\delta) -
  \sqrt{KT\ln N} - 2\sqrt{KT\ln(N/\delta)} \\
&\ge \max_i G_i - 6\sqrt{KT\ln (N/\delta)}
\end{align*}
\end{proof}
\subsection{Infinite Many Experts}
Sometimes we have infinite number of experts in the expert set $\Pi$. For example, an expert could be
a d-dimensional vector $\beta \in \mathrm{R}^d$, and the predictive reward could
be $\beta^\top x$ for some context $x$. Neither EXP4 nor EXP4.P are
able to handle infinite experts.

A possible solution is to construct a finite approximation $\hat{\Pi}$
to $\Pi$, and then use EXP4 or EXP4.P on $\hat{\Pi}$ \citep{bartlett2014cs294,
beygelzimer2010contextual}. Suppose for every expert $\pi \in \Pi$ there is a
$\hat{\pi} \in \hat{\Pi}$ with
\begin{align*}
P(\pi(x_t) \ne \hat{\pi}(x_t)) \le \epsilon
\end{align*}
where $x_t$ is the context and $\pi(x_t)$ is the chosen arm. Then the reward
$r \in [0, 1]$ satisfy
\begin{align*}
\mathrm{E} \left|r_{\pi(x_t)} - r_{\hat{\pi}(x_t)}\right| \le \epsilon
\end{align*}
We compete with the best expert in $\Pi$, the regret is
\begin{align*}
R_T(\Pi) = \sup_{\pi \in \Pi} \mathrm{E} \sum_{t=1}^T r_{\pi(x_t)} -
  \mathrm{E} \sum_{t=1}^T r_{a_t}
\end{align*}
And we can bound $R_T(\Pi)$ with $R_T(\hat{\Pi})$:
\begin{align*}
R_T(\Pi) &= \sup_{\pi \in \Pi} \mathrm{E} \sum_{t=1}^T r_{\pi(x_t)} -
  \sup_{\hat{\pi} \in \hat{\Pi}} \mathrm{E} \sum_{t=1}^T r_{\hat{\pi}(x_t)} +
  \sup_{\hat{\pi} \in \hat{\Pi}} \mathrm{E} \sum_{t=1}^T r_{\hat{\pi}(x_t)} -
  \mathrm{E} \sum_{t=1}^T r_{a_t} \\
 &= \sup_{\pi \in \Pi} \inf_{\hat{\pi} \in \hat{\Pi}} \mathrm{E} \sum_{t=1}^T \left(r_{\pi(x_t)} - r_{\hat{\pi}(x_t)}\right) +
  \sup_{\hat{\pi} \in \hat{\Pi}} \mathrm{E} \sum_{t=1}^T r_{\hat{\pi}(x_t)} -
  \mathrm{E} \sum_{t=1}^T r_{a_t} \\
  &\le T\epsilon + R_T(\hat{\Pi})
\end{align*}
There are many ways to construct such $\hat{\Pi}$. Here we talk about an
algorithm called VE \citep{beygelzimer2010contextual}. The idea is to choose an
arm uniformly at random for the first $\tau$ rounds, then we get $\tau$ contexts
$x_1, ..., x_\tau$. Given an expert $\pi \in \Pi$,
we can get a sequence of prediction $\{\pi(x_1), ..., \pi(x_\tau)\}$. Such
sequence is enumerable, so we can construct $\hat{\Pi}$ containing one
representative $\hat{\pi}$ for each sequence $\{\hat{\pi}(x_1), ...,
\hat{\pi}(x_\tau)\}$. Then we apply EXP4/EXP4.P on $\hat{\Pi}$. VE is shown in Algorithm \ref{algo:ve}.

\begin{algorithm}[!htp]
\caption{VE}
\label{algo:ve}
\begin{algorithmic}
  \Require $\tau$
  \For{$t=1, 2, ... \tau$}
  \State Receive context $x_t$
  \State Choose arm uniformly at random
  \EndFor
  \State Construct $\hat{\Pi}$ based on $x_1, ..., x_\tau$
  \For{$t=\tau+1, ..., T$}
  \State Apply EXP4/EXP4.P
  \EndFor
\end{algorithmic}
\end{algorithm}
\begin{theorem}
For all policy sets $\Pi$ with VC dimension $d$,
$\tau=\sqrt{T\left(2d\ln\frac{eT}{d}+ \ln\frac{2}{\delta}\right)}$, with probability
$1-\delta$
\begin{align*}
R_T \le 9\sqrt{2T\left(d\ln\frac{eT}{d}+\ln\frac{2}{\delta}\right)}
\end{align*}
\end{theorem}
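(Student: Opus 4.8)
The plan is to decompose $R_T$ into three pieces and bound each separately: the cost of the initial pure-exploration phase of length $\tau$, the approximation error incurred by competing against the finite cover $\hat{\Pi}$ instead of $\Pi$, and the regret of the high-probability variant EXP4.P run on $\hat{\Pi}$ over the remaining $T-\tau$ rounds. Since rewards lie in $[0,1]$, the exploration phase contributes at most $\tau$; combining this with the cover reduction already derived above (applied to the last $T-\tau$ rounds) gives
\begin{align*}
R_T(\Pi) \le \tau + (T-\tau)\epsilon + R_{T-\tau}(\hat{\Pi}),
\end{align*}
where $\epsilon$ is any uniform upper bound on $\Pr_x[\pi(x)\neq\hat{\pi}(x)]$ over $\pi\in\Pi$, with $\hat{\pi}\in\hat{\Pi}$ the representative of $\pi$, and $R_{T-\tau}(\hat{\Pi})$ is the regret of EXP4.P against $\hat{\Pi}$ over the non-exploration rounds.

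The core of the argument is to bound $\epsilon$ and $|\hat{\Pi}|$ using VC theory applied to the $\tau$ exploration contexts, which are drawn i.i.d.\ from the same distribution as the future contexts. By construction $\hat{\Pi}$ contains, for every $\pi\in\Pi$, a representative $\hat{\pi}$ agreeing with $\pi$ on all of $x_1,\dots,x_\tau$; equivalently the disagreement region $\{x : \pi(x)\neq\hat{\pi}(x)\}$ has zero empirical mass on these $\tau$ samples. The realizable-case (zero-empirical-error) VC generalization bound then yields, with probability at least $1-\delta/2$, a uniform bound $\epsilon \le \frac{2}{\tau}\left(d\ln\frac{e\tau}{d}+\ln\frac{2}{\delta}\right)$. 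Simultaneously, the Sauer--Shelah lemma bounds the number of distinct labelings of $\tau$ points by a class of VC dimension $d$, so $|\hat{\Pi}|\le(e\tau/d)^d$ and hence $\ln|\hat{\Pi}|\le d\ln(eT/d)$. Finally, conditioning on $\hat{\Pi}$ (and including a uniform policy in it, as EXP4.P requires), the EXP4.P guarantee with confidence $\delta/2$ over the $T-\tau<T$ non-exploration rounds gives, with probability at least $1-\delta/2$, $R_{T-\tau}(\hat{\Pi}) \le 6\sqrt{K(T-\tau)\ln(2|\hat{\Pi}|/\delta)} = O\!\left(\sqrt{KT\bigl(d\ln(eT/d)+\ln(1/\delta)\bigr)}\right)$.

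Putting the three pieces together via a union bound over the two failure events (each at level $\delta/2$) yields, with probability at least $1-\delta$, the displayed inequality. Plugging in $\tau=\sqrt{T(2d\ln(eT/d)+\ln(2/\delta))}$ makes $(T-\tau)\epsilon$ of the same order $\sqrt{T(d\ln(eT/d)+\ln(2/\delta))}$ as $\tau$ itself --- this is exactly the choice that balances exploration cost against cover-approximation cost --- and the EXP4.P term is of the same order, up to the $\sqrt{K}$ factor inherent in EXP4.P. Collecting the constants as in the source then gives $R_T\le 9\sqrt{2T\left(d\ln\frac{eT}{d}+\ln\frac{2}{\delta}\right)}$.

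The step I expect to be the real work is the bound on $\epsilon$: one must invoke the \emph{realizable} VC inequality, with its $1/\tau$ rate, rather than the generic $1/\sqrt{\tau}$ uniform-convergence rate --- only a $1/\tau$ rate makes $(T-\tau)\epsilon$ scale like $\sqrt{T}$ --- and, because $\Pi$ consists of arm-valued (multiclass) functions, some care is needed to control the growth function / effective dimension of the disagreement class $\{x\mapsto\mathds{I}(\pi(x)\neq\pi'(x)) : \pi,\pi'\in\Pi\}$ in terms of $d$. The remaining ingredients --- the regret decomposition, Sauer--Shelah, the EXP4.P black box, and the optimization over $\tau$ --- are routine.
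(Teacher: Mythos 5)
Your proposal is correct and follows the same overall strategy as the paper: explore uniformly for $\tau$ rounds, build the finite cover $\hat{\Pi}$ from the exploration contexts, run EXP4.P on $\hat{\Pi}$ for the remaining rounds, and balance the three regret contributions with the stated $\tau$. The one step where you take a genuinely different route is the cover-approximation term. You invoke the realizable-case VC generalization bound to control the distributional disagreement probability $\Pr_x[\pi(x)\neq\hat{\pi}(x)]\le O\!\left(\tfrac{1}{\tau}\left(d\ln\tfrac{e\tau}{d}+\ln\tfrac{1}{\delta}\right)\right)$ and then multiply by $T-\tau$; the paper instead argues transductively on the realized $T$ contexts: if $\pi$ and $\hat{\pi}$ disagree on more than $n$ of them, the chance they agree on all $\tau$ exploration contexts is at most $(1-n/T)^\tau\le e^{-n\tau/T}$, and a union bound over the at most $(eT/d)^{2d}$ distinct behavior pairs (Sauer--Shelah) yields $n\le\tfrac{T}{\tau}\left(2d\ln\tfrac{eT}{d}+\ln\tfrac{2}{\delta}\right)$ with probability $1-\delta/2$. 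The two arguments are morally equivalent --- the realizable VC bound is itself proved by exactly such a double-sample/permutation argument --- but the paper's version buys two things your route must recover by hand: it bounds the \emph{realized} number of disagreements on rounds $\tau+1,\dots,T$ directly, so it composes with the high-probability EXP4.P guarantee without an extra concentration step converting $(T-\tau)\epsilon$ into a realized count, and it dispatches the multiclass disagreement-class issue you correctly flag simply by squaring the growth function of $\Pi$ on the $T$ points. Your $1/\tau$ rate, the Sauer--Shelah bound $|\hat{\Pi}|\le(e\tau/d)^d$, and the balancing of $\tau$ all match the paper; note only that the survey's final bound silently drops the $\sqrt{K}$ factor from EXP4.P (effectively $K=2$), which you rightly identify as inherent to that black box rather than something your argument could remove.
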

\begin{proof}
  Given $\pi \in \Pi$ and corresponding $\hat{\pi} \in \hat{\Pi}$
\begin{align}
G_\pi = G_{\hat{\pi}} + \sum_{t=\tau+1}^T \mathds{I}(\pi(x_t) \ne
  \hat{\pi}(x_t)) \label{eq:veeq2}
\end{align}
We need to measure the expected disagreements of $\pi$ and $\hat{\pi}$
after time $\tau$. Suppose the total disagreements within time $T$ is $n$, then
if we randomly pick $\tau$ contexts, the probability that $\pi$ and $\hat{\pi}$
produce the same sequence is
\begin{align*}
P\left(\forall t \in [1, \tau], \pi(x_t) = \hat{\pi}(x_t)\right) &=
  \left(1-\frac{n}{T}\right)\left(1-\frac{n}{T-1}\right)...\left(1-\frac{n}{T-\tau+1}\right) \nonumber \\
  &\le \left(1-\frac{n}{T}\right)^\tau \nonumber \\
  &\le e^{-\frac{n\tau}{T}} \label{eq:veeq1}
\end{align*}
From Sauer's
lemma we have that $|\hat{\Pi}| \le (\frac{e\tau}{d})^d$ for all $\tau > d$ and
the number of unique sequences produced by all $\pi \in \Pi$ is less than
$(\frac{e\tau}{d})^d$ for all $\tau > d$. For a $\pi \in \Pi$ and corresponding $\hat{\pi} \in
\hat{\Pi}$, we have
\begin{align*}
&P\left(\sum_{t=\tau+1}^T \mathds{I}(\pi(x_t) \ne \hat{\pi}(x_t)) > n \right) \\
&\le P\left( \exists \pi', \pi'' : \sum_{t=\tau+1}^T \mathds{I}(\pi'(x_t) \ne
    \pi''(x_t)) > n \text{ and } \forall t \in [1, \tau], \pi'(x_t) =
  \pi''(x_t) \right) \\
&\le |\Pi|^2 e^{-\frac{n\tau}{T}} \\
&\le \left(\frac{e\tau}{d}\right)^{2d} e^{-\frac{n\tau}{T}}
\end{align*}
Set the right-hand side to $\frac{\delta}{2}$ and we get:
\begin{align*}
n \ge \frac{T}{\tau}\left( 2d\ln\frac{eT}{d} + \ln\frac{2}{\delta} \right)
\end{align*}
Together with Equation (\ref{eq:veeq2}), we get with probability $1-\frac{\delta}{2}$
\begin{align*}
  G_{\max(\hat{\Pi})} \ge G_{\max(\Pi)} - \frac{T}{\tau}\left(
  2d\ln\frac{eT}{d} + \ln\frac{2}{\delta} \right)
\end{align*}
Now we need to bound $G_{\max{(\hat{\Pi}})}$. From Sauer's lemma we have that
$|\hat{\Pi}| \le (\frac{e\tau}{d})^d$ for all $\tau > d$, so we can directly
apply EXP4.P's bound. With probability $1-\frac{\delta}{2}$
\begin{align*}
G_{exp4.p}(\hat{\Pi}, T-\tau) \ge G_{\max{(\hat{\Pi}})} - 6\sqrt{2(T-\tau)(d\ln(\frac{e\tau}{d})+\ln(\frac{2}{\delta}))}
\end{align*}
Finally, we get the bound on $G_{VE}$
\begin{align*}
G_{VE} \ge G_{\max(\Pi)} - \tau - \frac{T}{\tau}\left( 2d\ln\frac{eT}{d} +
  \ln\frac{2}{\delta} \right) - 6\sqrt{2(T-\tau)(d\ln(\frac{e\tau}{d})+\ln(\frac{2}{\delta}))}
\end{align*}
Setting $\tau = \sqrt{T(2d\ln\frac{eT}{d}+\ln\frac{2}{\delta})}$ we get
\begin{align*}
  G_{VE} \ge G_{\max(\Pi)} - 9 \sqrt{2T(d\ln\frac{eT}{d}+\ln\frac{2}{\delta})}
\end{align*}
\end{proof}

\section{Conclusion}
The nature of contextual bandits makes it suitable for many machine learning
applications such as user modeling, Internet advertising, search engine,
experiments optimization etc., and there has been a growing interests in this
area. One topic we haven't covered is the offline evaluation in contextual
bandits. This is tricky since the policy evaluated is different from the
policy that generating the data, so the arm proposed offline does not
necessary match the one pulled online. \citet{li2011unbiased} proposed an
unbiased offline evaluation method assuming that the logging policy selects arm
uniformly at random. \citet{strehl2010learning} proposed an methods that will estimate the
probability of the logging policy selecting each arm, and then adopt inverse
propensity score(IPS) to evaluation new policy, \citet{langford2011doubly}
proposed an method that combines the direct method and IPS to improve accuracy
and reduce variance.

Finally, note that regret bound is not the only criteria for bandits
algorithm. First of all, the bounds we talked about in this survey are
problem-independent bounds, and there are problem-dependent bounds. For example,
\citet{langford2008epoch} proved that although the Epoch-Greedy's
problem-independent bound is not optimal, it can achieve a $O(\ln T)$ problem-dependent bound; Second, different bandits algorithms have their own
different assumptions (stochastic/adversarial, linearity, number of policies, Bayesian
etc.), so when choosing which one to use, we need to choose the one
matches our assumptions.
\bibliography{ref}

\begin{thebibliography}{21}
\providecommand{\natexlab}[1]{#1}
\providecommand{\url}[1]{\texttt{#1}}
\expandafter\ifx\csname urlstyle\endcsname\relax
  \providecommand{\doi}[1]{doi: #1}\else
  \providecommand{\doi}{doi: \begingroup \urlstyle{rm}\Url}\fi

\bibitem[Abbasi-Yadkori et~al.(2011)Abbasi-Yadkori, Pal, and
  Szepesvari]{abbasi2011improved}
Yasin Abbasi-Yadkori, David Pal, and Csaba Szepesvari.
\newblock Improved algorithms for linear stochastic bandits.
\newblock In \emph{Advances in Neural Information Processing Systems 24}, pages
  2312--2320, 2011.

\bibitem[Agarwal et~al.(2014)Agarwal, Hsu, Kale, Langford, Li, and
  Schapire]{agarwal2014taming}
Alekh Agarwal, Daniel Hsu, Satyen Kale, John Langford, Lihong Li, and Robert
  Schapire.
\newblock Taming the monster: A fast and simple algorithm for contextual
  bandits.
\newblock In \emph{Proceedings of the 31st International Conference on Machine
  Learning}, pages 1638--1646, 2014.

\bibitem[Agrawal and Goyal(2013{\natexlab{a}})]{agrawal2012thompson}
Shipra Agrawal and Navin Goyal.
\newblock Thompson sampling for contextual bandits with linear payoffs.
\newblock In \emph{Proceedings of the 30th International Conference on Machine
  Learning}, pages 127--135, 2013{\natexlab{a}}.

\bibitem[Agrawal and Goyal(2013{\natexlab{b}})]{agrawal2013further}
Shipra Agrawal and Navin Goyal.
\newblock Further optimal regret bounds for thompson sampling.
\newblock In \emph{International Conference on Artificial Intelligence and
  Statistics}, pages 99--107, 2013{\natexlab{b}}.

\bibitem[Auer(2003)]{auer2003using}
Peter Auer.
\newblock Using confidence bounds for exploitation-exploration trade-offs.
\newblock \emph{The Journal of Machine Learning Research}, 3:\penalty0
  397--422, 2003.

\bibitem[Auer et~al.(2002{\natexlab{a}})Auer, Cesa-Bianchi, and
  Fischer]{auer2002finite}
Peter Auer, Nicolo Cesa-Bianchi, and Paul Fischer.
\newblock Finite-time analysis of the multiarmed bandit problem.
\newblock \emph{Machine learning}, 47\penalty0 (2-3):\penalty0 235--256,
  2002{\natexlab{a}}.

\bibitem[Auer et~al.(2002{\natexlab{b}})Auer, Cesa-Bianchi, Freund, and
  Schapire]{auer2002nonstochastic}
Peter Auer, Nicolo Cesa-Bianchi, Yoav Freund, and Robert~E Schapire.
\newblock The nonstochastic multiarmed bandit problem.
\newblock \emph{SIAM Journal on Computing}, 32\penalty0 (1):\penalty0 48--77,
  2002{\natexlab{b}}.

\bibitem[Bartlett(2014)]{bartlett2014cs294}
Peter Bartlett.
\newblock Learning in sequential decision problems.
\newblock
  \url{http://www.stat.berkeley.edu/~bartlett/courses/2014fall-cs294stat260/},
  2014.
\newblock Contextual bandits: Infinite comparison classes.

\bibitem[Beygelzimer et~al.(2011)Beygelzimer, Langford, Li, Reyzin, and
  Schapire]{beygelzimer2010contextual}
Alina Beygelzimer, John Langford, Lihong Li, Lev Reyzin, and Robert~E Schapire.
\newblock Contextual bandit algorithms with supervised learning guarantees.
\newblock In \emph{Proceedings of the 14th International Conference on
  Artificial Intelligence and Statistics}, pages 19--26, 2011.

\bibitem[Chapelle and Li(2011)]{chapelle2011empirical}
Olivier Chapelle and Lihong Li.
\newblock An empirical evaluation of thompson sampling.
\newblock In \emph{Advances in Neural Information Processing Systems 24}, pages
  2249--2257, 2011.

\bibitem[Chu et~al.(2011)Chu, Li, Reyzin, and Schapire]{chu2011contextual}
Wei Chu, Lihong Li, Lev Reyzin, and Robert~E Schapire.
\newblock Contextual bandits with linear payoff functions.
\newblock In \emph{Proceedings of the 14th International Conference on
  Artificial Intelligence and Statistics}, pages 208--214, 2011.

\bibitem[Dudik et~al.(2011)Dudik, Hsu, Kale, Karampatziakis, Langford, Reyzin,
  and Zhang]{dudik2011efficient}
Miroslav Dudik, Daniel Hsu, Satyen Kale, Nikos Karampatziakis, John Langford,
  Lev Reyzin, and Tong Zhang.
\newblock Efficient optimal learning for contextual bandits.
\newblock In \emph{Proceedings of the 27th Conference on Uncertainty in
  Artificial Intelligence}, 2011.

\bibitem[Dud{\'\i}k et~al.(2014)Dud{\'\i}k, Erhan, Langford, Li,
  et~al.]{dudik2014doubly}
Miroslav Dud{\'\i}k, Dumitru Erhan, John Langford, Lihong Li, et~al.
\newblock Doubly robust policy evaluation and optimization.
\newblock \emph{Statistical Science}, 29\penalty0 (4):\penalty0 485--511, 2014.

\bibitem[Krause and Ong(2011)]{krause2011contextual}
Andreas Krause and Cheng~S Ong.
\newblock Contextual gaussian process bandit optimization.
\newblock In \emph{Advances in Neural Information Processing Systems}, pages
  2447--2455, 2011.

\bibitem[Langford and Zhang(2008)]{langford2008epoch}
John Langford and Tong Zhang.
\newblock The epoch-greedy algorithm for multi-armed bandits with side
  information.
\newblock In \emph{Advances in Neural Information Processing Systems 20}, pages
  817--824, 2008.

\bibitem[Langford et~al.(2011)Langford, Li, and Dud{\'\i}k]{langford2011doubly}
John Langford, Lihong Li, and Miroslav Dud{\'\i}k.
\newblock Doubly robust policy evaluation and learning.
\newblock In \emph{Proceedings of the 28th International Conference on Machine
  Learning}, pages 1097--1104, 2011.

\bibitem[Li et~al.(2010)Li, Chu, Langford, and Schapire]{li2010contextual}
Lihong Li, Wei Chu, John Langford, and Robert~E Schapire.
\newblock A contextual-bandit approach to personalized news article
  recommendation.
\newblock In \emph{Proceedings of the 19th International Conference on World
  Wide Web}, pages 661--670. ACM, 2010.

\bibitem[Li et~al.(2011)Li, Chu, Langford, and Wang]{li2011unbiased}
Lihong Li, Wei Chu, John Langford, and Xuanhui Wang.
\newblock Unbiased offline evaluation of contextual-bandit-based news article
  recommendation algorithms.
\newblock In \emph{Proceedings of the fourth ACM international conference on
  Web search and data mining}, pages 297--306. ACM, 2011.

\bibitem[Srinivas et~al.(2010)Srinivas, Krause, Seeger, and
  Kakade]{srinivas2010gaussian}
Niranjan Srinivas, Andreas Krause, Matthias Seeger, and Sham~M Kakade.
\newblock Gaussian process optimization in the bandit setting: No regret and
  experimental design.
\newblock In \emph{Proceedings of the 27th International Conference on Machine
  Learning}, pages 1015--1022, 2010.

\bibitem[Strehl et~al.(2010)Strehl, Langford, Li, and
  Kakade]{strehl2010learning}
Alex Strehl, John Langford, Lihong Li, and Sham~M Kakade.
\newblock Learning from logged implicit exploration data.
\newblock In \emph{Advances in Neural Information Processing Systems}, pages
  2217--2225, 2010.

\bibitem[Valko et~al.(2013)Valko, Korda, Munos, Flaounas, and
  Cristianini]{valko2013finite}
Michal Valko, Nathan Korda, R{\'e}mi Munos, Ilias Flaounas, and Nello
  Cristianini.
\newblock Finite-time analysis of kernelised contextual bandits.
\newblock In \emph{The 29th Conference on Uncertainty in Artificial
  Intelligence}, 2013.

\end{thebibliography}
\end{document}